\documentclass[opre,nonblindrev]{informs4}

\usepackage{eqndefns-left} 
\RequirePackage{tgtermes}
\RequirePackage{newtxtext}
\RequirePackage{newtxmath}
\RequirePackage{bm}
\RequirePackage{endnotes}
\usepackage{booktabs}
\usepackage{multirow}
\usepackage{enumitem}
\usepackage{threeparttable}

\usepackage{booktabs,multirow,threeparttable,pifont,graphicx,float}

\usepackage{makecell}
\usepackage{colortbl} 
\usepackage{xcolor}
\usepackage{array}

\OneAndAHalfSpacedXII 

\usepackage{algorithm}
\usepackage{algpseudocode}
\usepackage{tikz}
\usepackage{graphicx} 

\usepackage{natbib}
\bibpunct[, ]{(}{)}{,}{a}{}{,}%
\def\bibfont{\small}%
\usepackage{xcolor}
\usepackage{colortbl}

\usepackage[colorlinks=true, allcolors=blue]{hyperref}
\usepackage{amssymb}
\usepackage{amsmath}

\usepackage{xspace}
\usepackage{csquotes}
\usepackage{makecell}
\usepackage{multirow}
\usepackage{booktabs}
\usepackage{makecell} 

\usepackage{algpseudocode}  
\usepackage{algorithmicx}

\EquationsNumberedThrough    
\TheoremsNumberedThrough     
\ECRepeatTheorems  %
\MANUSCRIPTNO{IJOC-0001-2024.00}

\usepackage{amsmath}
\usepackage{tikz}
\usepackage{subfigure,epic}
\usepackage{multirow}
\usepackage{array}
\usepackage{booktabs}
\usepackage{algpseudocode}
\usetikzlibrary{arrows,shapes,chains}
\makeatletter
\newenvironment{breakablealgorithm}
{
\begin{center}
    \refstepcounter{algorithm}
    \hrule height.8pt depth0pt \kern2pt
    \renewcommand{\caption}[2][\relax]{
        {\raggedright\textbf{\ALG@name~\thealgorithm} ##2\par}%
        \ifx\relax##1\relax 
        \addcontentsline{loa}{algorithm}{\protect\numberline{\thealgorithm}##2}%
        \else 
        \addcontentsline{loa}{algorithm}{\protect\numberline{\thealgorithm}##1}%
        \fi
        \kern2pt\hrule\kern2pt
    }
}{
\kern2pt\hrule\relax
\end{center}
}
\makeatother

\begin{document}



\RUNTITLE{Synthetic Data Generation}

\TITLE{
Two-Stage Data Synthesization: A Statistics-Driven Restricted Trade-off between Privacy and Prediction 
}

\ARTICLEAUTHORS{%
	\AUTHOR{Xiaotong Liu$^1$, Shao-Bo Lin$^1$\thanks{corresponding author: sblin1983@gmail.com}, Jun Fan$^2$, Ding-Xuan Zhou$^3$}
	\AFF{$^1$ Center for Intelligent Decision-Making and Machine Learning, School of Management, Xi'an Jiaotong University, Xi'an, China}
	\AFF{$^2$ Department of Mathematics, Hong Kong Baptist University, Hong Kong, China}
	\AFF{$^3$ School of Mathematics and Statistics, University of Sydney, Sydney, Australia }
}

\ABSTRACT{
Synthetic data have gained increasing attention across various domains, with a growing emphasis on their performance in downstream prediction tasks. However, most existing synthesis strategies focus on maintaining statistical information. Although some studies address prediction performance guarantees, their single-stage synthesis designs make it challenging to balance the privacy requirements that necessitate significant perturbations and the prediction performance that is sensitive to such perturbations.
We propose a two-stage synthesis strategy. In the first stage, we introduce a synthesis-then-hybrid strategy, which involves a synthesis operation to generate pure synthetic data, followed by a hybrid operation that fuses the synthetic data with the original data. In the second stage, we present a kernel ridge regression (KRR)-based synthesis strategy, where a KRR model is first trained on the original data and then used to generate synthetic outputs based on the synthetic inputs produced in the first stage.
By leveraging the theoretical strengths of KRR and the covariant distribution retention achieved in the first stage, our proposed two-stage synthesis strategy enables a statistics-driven restricted privacy–prediction trade-off and  guarantee optimal prediction performance. We validate our approach and demonstrate its characteristics of being statistics-driven and restricted in achieving the privacy–prediction trade-off both theoretically and numerically. Additionally, we showcase its generalizability through applications to a marketing problem and five real-world datasets.
}%

\KEYWORDS{synthetic data generation, prediction performance, privacy preservation} 


\maketitle

\section{Introduction}\label{Sec.Introduction}

Data sharing is widely promoted by governments and organizations to advance scientific research and support social and economic objectives \citep{schneider2018flexible},
and is challenging due to privacy requirements or ethical concerns (e.g., GDPR and HIPAA). Privacy-preserving data sharing, relying on de-identification methods such as coarsening and perturbing \citep{fallah2024optimal, fu2025privacy}, swapping \citep{li2011protecting}, and synthetic data generation (SDG) \citep{drechsler2010sampling, kim2018simultaneous, jiang2021balancing}, can help data stewards overcome “privacy chill” and enable data sharing for secondary purposes\footnote{https://www.echima.ca/product/0148-overcoming-canadas-privacy-chill-through-data-stewardship/}.
Given that de-identification methods have proven to be vulnerable to privacy leaks \citep{mandl2021hipaa}, SDG becomes an attractive alternative,  as it reproduces populations rather than individuals and establishing no direct link between synthetic and original individuals. 
 According to Gartner, by 2030 synthetic data will completely overshadow real data in AI models\footnote{https://gretel.ai/technical-glossary/what-is-synthetic-data}. Furthermore, the Market Statsville Group estimates that the global synthetic data market that offering generation tools, platforms, and services, will grow from USD~218.56 million in 2022 to USD~3,722.55 million by 2033\footnote{https://www.marketstatsville.com/synthetic-data-platform-market}.
For example, 
Gretel and Illumina have partnered to create synthetic datasets for genomic research\footnote{https://www.businesswire.com/news/home/20211214006240/en/Gretel-and-Illumina-Partner-to-Develop-Synthetic-Data-for-Genomic-Research};  MOSTLY AI has collaborated with Telefónica to synthesize millions of records, thereby unlocking 80--85\% of customer data in a fully GDPR-compliant manner\footnote{https://www.ngpcap.com/insights/unleashing-the-power-of-synthetic-data-exploring-applications-and-market-opportunity}.

With the growing reliance on data across diverse applications, particularly in generative AI and Machine Learning as a Service (MLaaS), the demand for high-quality, privacy-compliant synthetic data has gradually evolved—from merely preserving statistical properties and supporting conventional model evaluation or augmentation, to enabling reliable real-world prediction tasks \citep{lu2023machine}.
For instance, platforms such as Gretel and Syntho report prediction performance comparisons of several classical models on both real and synthetic datasets alongside their generated data\footnote{https://docs.gretel.ai/create-synthetic-data/safe-synthetics/synthetics/benchmark-report}, aiming to demonstrate that their synthetic data is of sufficient quality to support real-world predictions. Furthermore, with the rise of data marketplaces and the growing use of synthetic data for data transactions, prediction performance has increasingly been recognized as a critical factor in data pricing \citep{pei2020survey}.
Generating synthetic data with poor prediction performance can significantly damage a platform’s reputation.  A notable example is IBM's Watson Health, which provided incorrect cancer treatment recommendations due to training on low-quality synthetic patient records\footnote{https://www.statnews.com/2018/07/25/ibm-watson-recommended-unsafe-incorrect-treatments/},
ultimately prompting several partners to terminate their collaborations on Watson’s cancer analysis.

Although considerable attention has been devoted to prediction, prevailing synthesis strategies often fail to deliver the desired prediction performance, revealing a gap between current methods and practical applications.
Figure \ref{fig: Timeline} illustrates this gap in detail by presenting several classical synthesis strategies, including standard resampling methods \citep{li2013class, mckay2000comparison, kim2018simultaneous} as well as recent deep learning-based approaches \citep{ho2020denoising, anand2023using}. These strategies have found practical applications—from classic coding platforms and traditional data sharing platforms to emerging SDG platforms. Notably, earlier strategies primarily focused on the privacy–statistics trade-off \citep{dahmen2019synsys, domingo2010hybrid, li2013class}, and only in recent years has attention gradually shifted toward the privacy–prediction trade-off \citep{schneider2015new, schneider2018flexible, anand2023using, lei2024privacy}. 
 Nevertheless, while some studies pay attention to the prediction performance, they often relegate it to a post hoc evaluation metric for data quality rather than embedding it into the data synthesization process. Furthermore, many synthesis strategies fail to account for real-world downstream prediction scenarios—such as test data distribution or model usage—and instead validate their methods solely under idealized experimental conditions. In summary, although efforts are increasingly aligning with the practical need for prediction performance, most existing strategies remain focused on statistical retention and privacy–statistics trade-offs. In contrast, this paper emphasizes the privacy–prediction trade-off and aims to develop a novel SDG strategy to ensure optimal prediction performance.

\vspace{-0.2cm} 
\begin{figure}[H]
	\centering
	\includegraphics[scale=0.53]{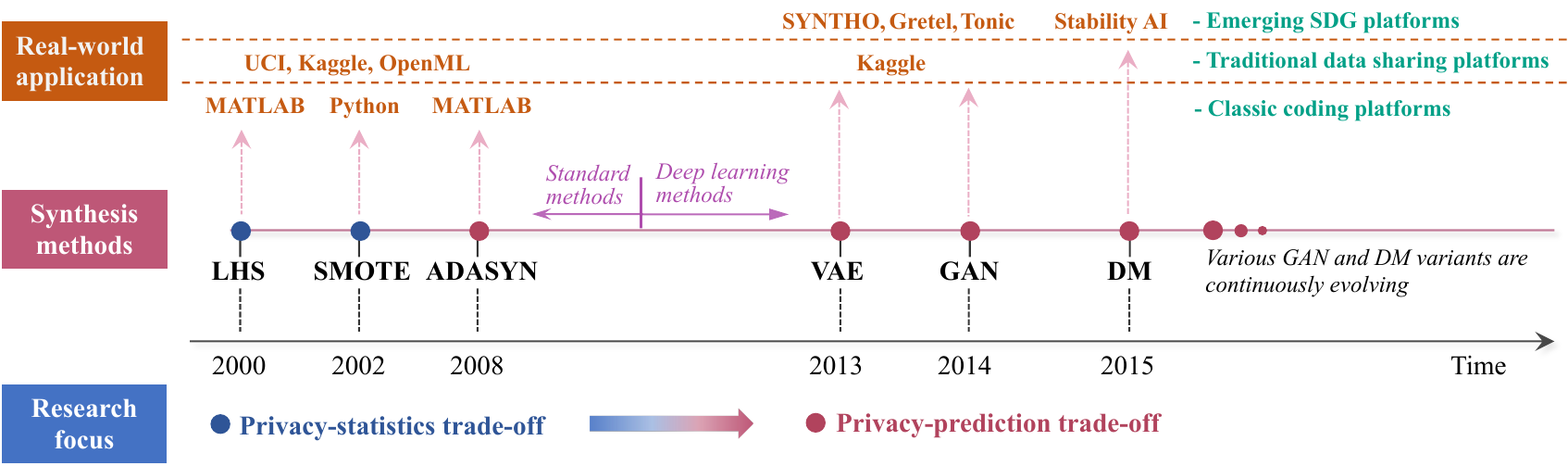}
	\caption{Synthesis strategies: timeline, focused trade-offs, and practical applications (presenting only a few common strategies as discussed in \citep{lu2023machine, figueira2022survey}).}
	\label{fig: Timeline} 
	\vspace{0.1in}
\end{figure}
\vspace{-0.5cm}

We propose a two-stage synthesis strategy to guarantee prediction performance in privacy preservation, as opposed to the existing single-stage strategies. Specifically, in the first stage, we design a  synthesis-then-hybrid  strategy, which involves a synthesis operation to generate pure synthetic data, followed by a hybrid operation that blends the synthetic data with the original data. This stage allows for flexible control over the distributional differences between synthetic and original data. In the second stage, we design a  KRR-based synthesis strategy, where a kernel ridge regression (KRR) model is trained on the original data to generate synthetic outputs based on the inputs generated in the first stage. 
We mention that the covariant distribution retention ensured in the first stage lays the foundation for the second stage to guarantee prediction performance, addressing the issue where the synthesization process may disrupt data distribution, making reliable prediction performance challenging. Ultimately, our approach enables a  statistic-driven restricted privacy–prediction trade-off, where ``statistic-driven'' is achieved through the two-stage design, and ``restricted'' refers to the hybrid operation's ability to balance privacy and prediction  with little impact on the latter.

\subsection{Contributions}

In terms of methodological contribution, we propose a novel two-stage methodology for synthetic data generation, replacing the single-stage design and statistical mimicry of traditional synthesis strategies (e.g., GANs and DMs). Our approach integrates the distributional stability essential for prediction, along with the prediction goal itself, directly into the data synthesization. In the first stage, we introduce a synthesis-then-hybrid strategy, which combines pure synthetic data with the original data through a hybrid operation, providing control over the synthetic distribution and ensuring the distributional stability essential for accurate prediction.
In the second stage, we adopt a model-based synthesis strategy, using a KRR trained on the original data to transfer regression relations. This stage uses the synthetic inputs generated in the first stage to produce synthetic outputs, ultimately achieving response reconstruction.
Through the  two-stage design, our approach pioneers a controllable, statistics-driven restricted privacy–prediction trade-off, where ``restricted'' ensures prediction performance remains within a narrow range of variation throughout the trade-off process.

In terms of managerial implications, the two-stage design brings a significant conceptual shift for data providers, overcoming the limitations of traditional single-stage synthesis methods, which often fall short in meeting diverse utility requirements. By embedding prediction objective directly into the data synthesization, the two-stage strategy ensures that synthetic data can be used to support downstream prediction tasks, rather than being limited to statistical analysis alone. Furthermore, the synthesis-then-hybrid strategy in the first stage offers exceptional flexibility, allowing data providers to customize synthesis and hybrid methods based on their specific needs. For instance, providers handling image data may adopt DMs, while others can design tailored hybrid approaches—such as multiplicative hybrid strategy \citep{dandekar2002lhs} or the data-proportion blending strategies—allowing synthetic data platforms to adapt to diverse application scenarios. Finally, our method is validated through numerical experiments across tasks ranging from simple log-linear regression in marketing problems to complex nonlinear regression and unknown regression relations in real-world datasets. This generalizability underscores its potential to generate high-quality synthetic datasets tailored to various regression models, supporting robust data-driven decision-making across diverse industries.

\subsection{Organization}

The rest of this paper is organized as follows. Section~\ref{Sec:problem setting} introduces the problem setting and reviews related work on SDG strategies. Section~\ref{sec: Our Approach} presents our proposed two-stage synthesis strategy, which ensures optimal prediction performance while preserving privacy. Section~\ref{sec: Theoretical Verification} provides the theoretical verification results. For numerical verification, Section~\ref{sec: Restricted Privacy–Prediction Trade-off} explains the concept of statistics-based restricted privacy–prediction trade-offs and highlights the advantages of the proposed SDG strategy in privacy–prediction trade-off. Section~\ref{sec: Power of Strategy} demonstrates the power of the proposed strategy in guaranteeing prediction performance. Section~\ref{sec: Application} showcases its application to a price-sale prediction task in marketing and five real-world datasets. Section~\ref{sec: Multi-Scenario} evaluates its effectiveness in scenarios where the public has no prior access to external data and under distribution mismatches. We conclude the paper in the last section.

\section{Problem Setting and Related Work}\label{Sec:problem setting}

\subsection{Privacy and prediction issues in data synthesization}

Given that resisting privacy attacks is the fundamental rationale for generating synthetic data, we explore this issue within the privacy-preserving framework \citep{lin2025striking}, which encompasses privacy attacks, privacy principles, utility, and anonymization strategies. 
Here, utility specifically refers to the prediction performance. We then specify privacy attacks and privacy principles.

 We focus on location proximity-based attacks, where any dimension in the synthetic data that is excessively close to its counterpart in the original data is considered a privacy breach, referred to as location privacy attacks, encompassing the widely discussed reidentification attacks \citep{li2023reidentification, li2009against, li2006tree, drechsler2010sampling} and attribute attacks \citep{li2011protecting, wang2018t}.

The harm of location privacy attacks lies in the adversary’s ability to associate a potential confidential attribute, closely resembling its original counterpart, with its corresponding identity. Preventing such attacks requires perturbing all original attributes to ensure they are not excessively close to their original counterparts, which naturally leads to the following privacy principle.

\begin{definition}[Location-based interval disclosure]\label{def:LID principle}
	Given a set $\Xi_{N, d}:=\{x_{i}\}_{i=1}^N$ and its perturbed counterpart $\Xi_{N,d}^*=\{x_{i}^*\}_{i=1}^N$, where $*$ denotes a perturbation and $d\geq1$ is the dimension of $x_i$. Let $\eta>$0, $H_j^\eta$ be the index set of records satisfying $|x_{i, j}-x_{i, j}^*| \leq \eta$ for dimention $j$, where $H_j^\eta\subset\{1,\dots,N\}$. 
		LID is defined as 
	\begin{equation}
	LID(\Xi_{N,d},\Xi_{N,d}^{*}, \eta):=\frac{|\bigcup_{j=1}^{d} H_j^\eta|}{N} \times 100 \%,
	\end{equation}
	where $\eta$ refers to the linkage parameter for used by the attacker in location privacy attacks, while $|\cdot|$ denotes the absolute value; when applied to a set, it represents the set's cardinality.
\end{definition}

By definition, LID represents the proportion of records in the dataset that have experienced location privacy attacks in at least one dimension. A smaller LID indicates a lower risk of such attacks and is, therefore, preferred in terms of privacy preservation.

\vspace{-0.3cm} 
\begin{figure}[H]
	\centering
	\includegraphics[scale=0.6]{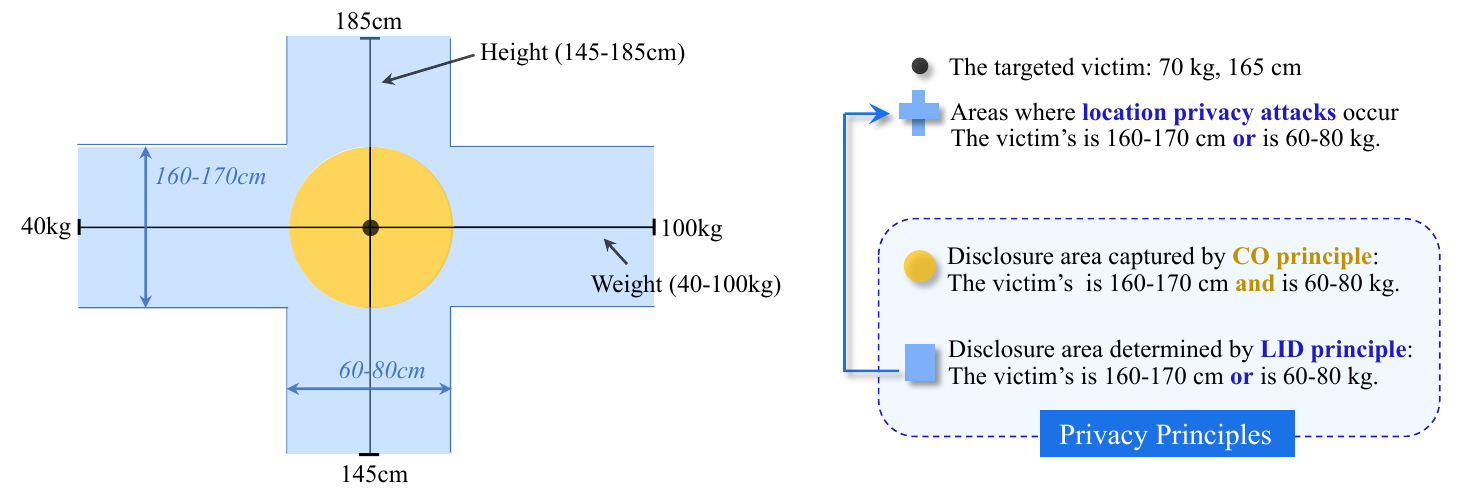}
	\caption{Comparison of different privacy principles}
	\label{fig: principle_compare} 
	\vspace{0.1in}
\end{figure}
\vspace{-0.5cm}

Figure \ref{fig: principle_compare} compares the LID with the commonly used privacy principles, the CO principle  \citep{lin2025striking}. The figure depicts a victim with a weight of 70 kg and a height of 165 cm, where weight and height may represent either QIA or CA. Assuming weight ranges from 40–100 kg and height from 145–185 cm, shared height values within 5 cm of the victim’s height or weight values within 10 kg of the victim’s weight indicate location privacy attacks.  We see the privacy leakage area identified by the LID principle aligns precisely with the region where location privacy attacks occur, highlighting its effectiveness in assessing such attacks.

At this point, we have concretized the privacy-preserving framework outlined in Table \ref{tab: Privacy preservation framework} by clarifying the privacy principle on the privacy side and the prediction performance on the utility side. Next, we will design a synthesis strategy that adheres to the privacy principles while achieving optimal prediction performance.

\begin{table}[htbp]
\centering
\renewcommand\arraystretch{0.7}
\caption{Privacy-preserving framework}\label{tab: Privacy preservation framework}
\scalebox{0.8}{
\begin{tabular}{lccc}
\toprule
\textbf{Privacy attack} & \textbf{Privacy principle} & \textbf{Anonymization method} & \textbf{Utility} \\
\midrule
Location privacy attack & LID principle             & Synthesis strategy ?           & Prediction performance \\
\bottomrule
\end{tabular}}
\end{table}

At the end of this section, we explain why the recently proposed loss in privacy (LP) principle, designed for synthetic data by \citep{schneider2018flexible} and \citep{anand2023using}, is not suitable for the privacy attacks targeted in this paper. The LP principle was originally developed for synthetic marketing data at the store and household levels, where each store or household contains multiple price-sales records. Extending their definition to this paper results in a record-level privacy principle, representing the probability of identifying each individual record. However, as \cite{schneider2018flexible} and \cite{anand2023using} noted, their approach assumes a specific attack model, relying on a multinomial logit model and access to historical price-sales data. While this is suitable for time-accumulative marketing data, it does not apply to general data-sharing scenarios where attackers’ knowledge and strategies are unknown. Furthermore, location privacy attacks rely on data values, highlighting the need for privacy principles that reflect the risks associated with the values of synthetic data. The probability-based LP metric falls short, as an LP of 0 (where an attacker randomly guesses a store’s ID) fails to eliminate location privacy risks.

\subsection{Problem setting}

 We focus on the classical prediction problem \citep{Gyorfi2002}, where the original dataset $D=\{(x_i, y_i)\}_{i=1}^{|D|}$ with
\begin{equation}\label{data-form}
y_i = f^{\star}(x_i) + \epsilon_i,
\end{equation}
where $\{x_i\}_{i=1}^{|D|}$ are independently and identically distributed (i.i.d.) according to a distribution $\rho_X$, $\epsilon_i \sim \mathcal{N}(0, \sigma^2)$ is a Gaussian random variable independent of $x_i$, and $f^{\star}$ represents the true regression relation. Our objective is to construct an anonymized dataset $D^* = \{(x_i^*, y_i^*)\}_{i=1}^{|D^*|}$ such that the estimator $f_{D^*, \lambda^*}^{\mathcal{A}}$, derived from a given algorithm $\mathcal A$ with hyper-parameter $\lambda^*$ on $D^*$, closely approximates $f^{\star}$, that is, minimizes
$ 
\|f^{\star}-f_{D^*, \lambda^*}^{\mathcal{A}}\|_{\rho}.
$ 
 We denote the original and anonymized distributions as \( \rho_X \) and \( \rho_X^* \), respectively.

Assume that \(\mathcal{M}\) is a set of functions encoding the a-priori information of $f^{\star}$.
Considering the need to overcome distribution mismatches and the interest in the worst-case error, we focus on the following error
\begin{equation}\label{utility-measure}
\begin{aligned}
\mathcal U_{\mathcal M,\rho}(f_{D^*, \lambda^*}^{\mathcal{A}}):=\sup_{f^{\star}\in \mathcal M}\left\|f_{D^*, \lambda^*}^{\mathcal{A}}-f^{\star}\right\|_{\rho}.
\end{aligned}
\end{equation}

The purpose of designing a synthesis strategy to guarantee prediction performance while satisfying the requirements of LID to defend against location privacy attacks then boils down to the following optimization problem:
\begin{equation}\label{P1}
\begin{aligned}
& \text{argmin}_{f_{D^*,\lambda^*}^{\mathcal{A}} \in \Psi_D} \mathcal U_{\mathcal M,\rho}(f_{D^*, \lambda^*}^{\mathcal{A}}) \\
& \text{s.t.} \quad LID(X, X^*, \eta) \leq U,
\end{aligned}
\end{equation}
where $\Psi_D$ denotes the class of all learning models derived from the original dataset $D$, $\eta$ represents the privacy attack parameter, and $U$ denotes a prescribed privacy budget.
Our purpose is then to find a synthesis strategy
to minimize  
$ 
\mathcal U_{\mathcal M,\rho}(f_{D^*, \lambda^*}^{\mathcal{A}})
$ for a given LID budget $U>0$.

Considering that we can only access \( \mathcal{U}_{\mathcal{M}, \rho^*}(f_{D^*, \lambda^*}^{\mathcal{A}}) \) on the anonymized dataset \( D^* \), and that the synthesization process inevitably causes the distribution of \( D^* \) to differ from that of \( D \), solving problem~\eqref{P1} therefore first requires constraining the distributional discrepancy between \( \rho^* \) and \( \rho \).
Here, we employ the total variation  norm $\|\cdot\|_{tv}$ to quantify the distance between the distributions of the two datasets, and the optimization problem is then formulated as:

\begin{equation}\label{P}
\begin{aligned}
& \text{argmin}_{f_{D^*,\lambda^*}^{\mathcal{A}} \in \Psi_D} \mathcal U_{\mathcal M,\rho^*}(f_{D^*, \lambda^*}^{\mathcal{A}}) \\
& \text{s.t.} \quad LID(X, X^*, \eta) \leq U, \left\|\rho_X -  \rho_{X}^*\right\|_{tv} \leq V.
\end{aligned}
\end{equation}

Since $\Psi_D$ is uncountable and cannot be parameterized, the optimization problem \eqref{P} is unsolvable, which implies that no synthesis strategy can be directly obtained by solving \eqref{P}. We therefore relax problem \eqref{P} through machine learning, as the problem
\[
\min_{f_{D^*,\lambda^*}^{\mathcal{A}} \in \Psi_D} \mathcal U_{\mathcal M,\rho^*}(f_{D^*, \lambda^*}^{\mathcal{A}})
\]
is theoretically attainable for certain machine learning models, such as kernel ridge regression, in the sense of rate optimality.  
In particular, although problem \eqref{P} is unsolvable, it is still possible to obtain an algorithm $\mathcal{A}^*$ (or equivalently, to design a synthesis strategy)
satisfying  
\begin{equation}\label{P-variant}
\begin{aligned}
& \mathcal{U}_{\mathcal{M},\rho^*}(f_{D^*, \lambda^*}^{\mathcal{A}^*}) \sim \text{min}_{f_{D^*,\lambda^*}^{\mathcal{A}} \in \Psi_D} \mathcal U_{\mathcal M,\rho^*}(f_{D^*, \lambda^*}^{\mathcal{A}}) \\
& \text{s.t.} \quad LID(X, X^*, \eta) \leq U, \left\|\rho_X -  \rho_{X}^*\right\|_{tv} \leq V.
\end{aligned}
\end{equation}

From equation \eqref{P-variant}, it is evident that the desired synthesis strategy is difficult to achieve through a single-stage design. This is because we first need to control the distributional difference $\left\|\rho_X -  \rho_{X}^*\right\|_{tv}$. Only then can we design a synthesis strategy that ensures prediction performance, aiming for $\text{min}_{f_{D^*,\lambda^*}^{\mathcal{A}} \in \Psi_D} \mathcal U_{\mathcal M,\rho^*}(f_{D^*, \lambda^*}^{\mathcal{A}})$. 
This justifies our subsequent design of a two-stage synthesis strategy.

\subsection{Related work} \label{related_work}

Synthetic data generation (SDG) has attracted widespread attention across various domains \citep{lu2023machine}, such as 
healthcare \citep{chen2021synthetic, dahmen2019synsys}
and
marketing \citep{schneider2018flexible, anand2023using}, and is widely applied to generative AI \citep{li2024breaking, kapoor2025frontiers}.
 For instance, \cite{anand2023using}  leverages generative adversarial networks (GANs) to protect privacy of marketing data, effectively balancing privacy and prediction accuracy for tasks like price markups and customer targeting; \cite{dahmen2019synsys} employed hidden Markov models and regression models trained on real datasets to generate realistic synthetic time series data that replicates the structure of activity-labeled smart home sensor data, enhancing healthcare applications.

With the increasing demand for high-quality synthetic data,
the evaluation of synthetic data has also shifted focus—from statistical metrics and prediction performance on common models (as seen in data sharing platforms like UCI and Kaggle) to real-world downstream prediction performance (as emphasized by SDG platforms like Gretel.ai and Tonic) \citep{lu2023machine, figueira2022survey}.
This shift highlights a growing emphasis on achieved privacy–prediction trade-off by SDG, rather than the previously dominant privacy–statistics trade-off.
In parallel, methods for SDG have also advanced significantly, progressing from early standard resampling strategies based on data distribution \citep{mckay2000comparison, kim2018simultaneous, dahmen2019synsys, jiang2021balancing}, to simpler models such as linear regression and classification and regression trees (CART) \citep{drechsler2010sampling, wang2012multiple}, and finally to more advanced deep learning–based synthesis techniques, such as GANs and diffusion models \citep{lu2023machine, figueira2022survey, ho2020denoising, anand2023using}.

While the research focus has gradually shifted to the privacy–prediction trade-off  \citep{lu2023machine, figueira2022survey, ho2020denoising, schneider2018flexible, anand2023using}, extant approaches treat prediction performance merely as an additional evaluation metric in their numerical validations, rather than explicitly integrating it as a core design objective within their synthesis strategies.
In addition, these synthesis strategies are typically limited to achieving a strict privacy–prediction trade-off
\citep{schneider2018flexible, anand2023using}.
This strict trade-off stems from their single-stage design, which treats the input and output data as a unified whole, failing to guarantee the regression relations inherent in the original data during the synthesis process.

\begin{table}[htbp]
\centering
\renewcommand\arraystretch{0.7}
\caption{Comparison of Common Synthesis Methods}\label{tab: Comparison_methods}
\scalebox{0.63}{
\begin{tabular}{lccccccc}
\toprule
\multicolumn{2}{c}{\multirow{3}{*}{Methods}} & \multicolumn{6}{c}{\textbf{Design Characteristics and Performance}} \\
\cmidrule(lr){3-8}
\multicolumn{2}{c}{} & \multirow{2}{*}{\textbf{Synthesis stages}} & \multicolumn{3}{c}{\textbf{Design factors considered}} & \multirow{2}{*}{\makecell[c]{\textbf{Privacy--prediction}\\[-8pt]\textbf{trade-off}}} & \multirow{2}{*}{\makecell[c]{\textbf{Optimal}\\[-8pt]\textbf{prediction}}} \\
\cmidrule(lr){4-6}
\multicolumn{2}{c}{} &  & \textit{Privacy} & \textit{Statistics} & \textit{Prediction} &  &  \\
\midrule
\multirow{2}{*}{Standard} & \makecell[c]{LHS \citep{mckay2000comparison}\\[-8pt](Resampling)} & 1 & $\checkmark$ & $\checkmark$ & NA & NA & NA \\
 & \makecell[c]{Micro-perturbation \citep{li2013class}\\[-8pt](Noising) } & 1 & $\checkmark$ & $\checkmark$ & NA & NA & NA \\
\midrule
\multirow{2}{*}{\makecell[c]{Deep learning}} & GAN \citep{gulrajani2017improved, anand2023using} & 1 & $\checkmark$ & $\checkmark$ & NA & $\checkmark$ & NA \\
 & DM \citep{ho2020denoising} & 1 & $\checkmark$ & $\checkmark$ & NA & NA & NA \\
\midrule
\rowcolor{gray!20} \multicolumn{2}{c}{\textbf{Two-stage synthesis strategy}} & \textbf{2} & \textbf{$\checkmark$} & \textbf{$\checkmark$} & \textbf{$\checkmark$} & \textbf{$\checkmark$} & \textbf{$\checkmark$} \\
\bottomrule
\end{tabular}
}
\footnotesize
\begin{tabular}{@{}l@{}}
    \multicolumn{1}{@{}p{0.97\textwidth}@{}}{
        \scriptsize
        \textbf{Note}: ``NA'' indicates not available.
    }
\end{tabular}
\end{table}

We propose a two-stage data synthesization strategy, which comprises two key components: the synthesis-then-hybrid strategy in the first stage, designed to maintain the covariant distribution, and the KRR-based synthesis strategy in the second stage, aimed at guaranteeing prediction performance. The synthesis-then-hybrid strategy’s synthesis operation can be implemented using existing SDG approaches, while the hybrid operation flexibly adjusts the distribution difference between the synthetic data and the original data.
In this paper, we employ a specific instantiation of the synthesis-then-hybrid strategy, termed the LHS-H strategy, which builds upon prior work in Latin Hypercube Sampling (LHS) and hybrid methods \citep{mckay2000comparison, iman1982distribution, dandekar2002lhs}. The second stage capitalizes on the theoretical strengths of KRR, particularly its stability with respect to the training data distribution and its ability to address distribution mismatches effectively.
Together, the proposed two-stage synthesis strategy achieves the desired privacy–prediction trade-off, preserving privacy while ensuring optimal prediction performance.

Table \ref{tab: Comparison_methods} compares different synthesis methods in terms of both design characteristics and performance. We observe that only the proposed  strategy explicitly incorporates prediction into its design and ultimately achieves the optimal privacy–prediction trade-off.

\section{Two-Stage Data Synthesization}\label{sec: Our Approach}

We propose a two-stage design synthesis strategy: the first stage focuses on preserving as much of the original statistical information as possible during the privacy-preserving process, while the second stage builds upon this foundation to guarantee prediction performance. 
We regard statistical retention as a prerequisite for maintaining prediction performance. Consequently, managing the privacy–statistics trade-off in the first stage essentially equates to  controlling the overall privacy–prediction trade-off.

\subsection{Covariant distribution retention: synthetic-then-hybrid strategy}\label{sec: LK-2SS}

The objective of the first stage is to achieve both privacy preservation and covariant distribution retention. While the retention of covariant distribution has been widely studied, the difference lies in the fact that previous works focus on specific statistical metrics such as mean, variance, and covariance \citep{li2009against, li2011protecting}, whereas we focus on the entire distribution.
Assume the original data $x$ follows a distribution $\rho_x$. The goal is to find a SDG strategy that produces synthetic data $x^*$ with a distribution $\rho_x^*$ that closely approximates $\rho_x$.

Existing strategies, such as GANs \citep{anand2023using} and diffusion models (DMs) \citep{yang2023diffusion}, show potential in achieving this objective. However, these purely synthetic approaches often struggle to perform well on real-world datasets. Furthermore, \cite{shumailov2024ai} demonstrated that training AI models with recursively generated synthetic data leads to ``model collapse,'' as the distribution of synthetic data progressively deviates from the original data distribution, highlighting the importance of retaining covariant distribution to ensure reliable prediction performance

Motivated by these observations and existing solutions \citep{dandekar2002lhs}, this paper introduces a synthesis-then-hybrid (SH) strategy, which consists of a synthesis operation followed by a hybrid operation.
Specifically, the SH strategy introduces a hybrid parameter $\alpha \in [0,1]$, and the resulting synthetic data $x_\alpha^*$ is defined as
$
x_\alpha^* = \alpha x + (1-\alpha)x^*,
$
with the corresponding distribution
$$
\rho_{x_\alpha}^* = \alpha \rho_x + (1-\alpha)\rho_x^*.
$$
The hybrid parameter $\alpha$ plays an essential role in adjusting the synthetic data's distribution closer to the original data's distribution under the following scenarios:

\begin{itemize}
    \item \textbf{When $\rho_x^*$ is already close to $\rho_x$:} The requirement for $\alpha$ becomes less stringent, allowing a wide range of $\alpha$ values to be used to further align $\rho_x^*$ with $\rho_x$.

    \item \textbf{When $\rho_x^*$ is far from $\rho_x$:} 
    Retaining the original covariant distribution requires $\alpha$ to be restricted to values very close to $1$.
    
\end{itemize}

 Both the synthesis and hybrid operations are indispensable for covariant distribution retention. Specifically, when $\rho_x^*$ deviates from $\rho_x$, $\alpha$  compensates for the distributional differences between the synthetic and original data, highlighting the importance of hybrid operation, which was already emphasized in \citep{domingo2010hybrid, dandekar2002lhs}. 
 However, as $\alpha$ approaches 1, the final synthetic data $x_\alpha^*$ becomes almost identical to the original data $x$, thereby increasing its vulnerability to location privacy attacks.
To balance both privacy preservation and covariant distribution retention requirements, a synthesis strategy that brings $\rho_x^*$ closer to $\rho_x$ is desirable, as it  enables a broader range of $\alpha$ values, facilitating a better trade-off between privacy preservation and covariant distribution retention.

\subsection{Response reconstruction: KRR-based synthesis strategy}

The goal of the second stage is to ensure that the prediction performance on the synthetic data aligns with that on the original data, effectively reconstructing the response as it would be on the original dataset. Following the model-based synthesis strategy proposed in \citep{figueira2022survey}, we first train a model \(f_{D,\lambda}\) on the original dataset to capture the underlying regression relation. This trained model is then used as a data generator, applying it to the anonymized inputs from the first stage, \(X^* = \{x_i^*\}_{i=1}^{|D|}\), to produce the synthetic outputs \(Y^* = \{f_{D,\lambda}(x_i^*)\}_{i=1}^{|D|}\).

As discussed above, guaranteeing prediction needs to address two key challenges: the distributional changes caused by synthesis strategy and the distribution mismatch between the synthetic data and real-world test data. The synthesis-then-hybrid strategy has already achieved control over the distributional differences, so the model-based generator in this stage only needs to maintain stability under such controlled distributional differences and exhibit robustness to distributional mismatch.
Accordingly, we adopt kernel ridge regression (KRR) due to its three theoretical properties:  
(1) Achieving the optimal prediction performance within the framework of statistical learning theory \citep{ma2023optimally}, ensuring capture of the original regression relations;  
(2) Maintaining stability in the presence of distributional differences \cite[Theorem 2.7]{christmann2018total}; and  
(3) Retaining the optimal prediction performance even under distribution mismatches.

\begin{figure}[H]
	\centering
	\includegraphics[scale=0.45]{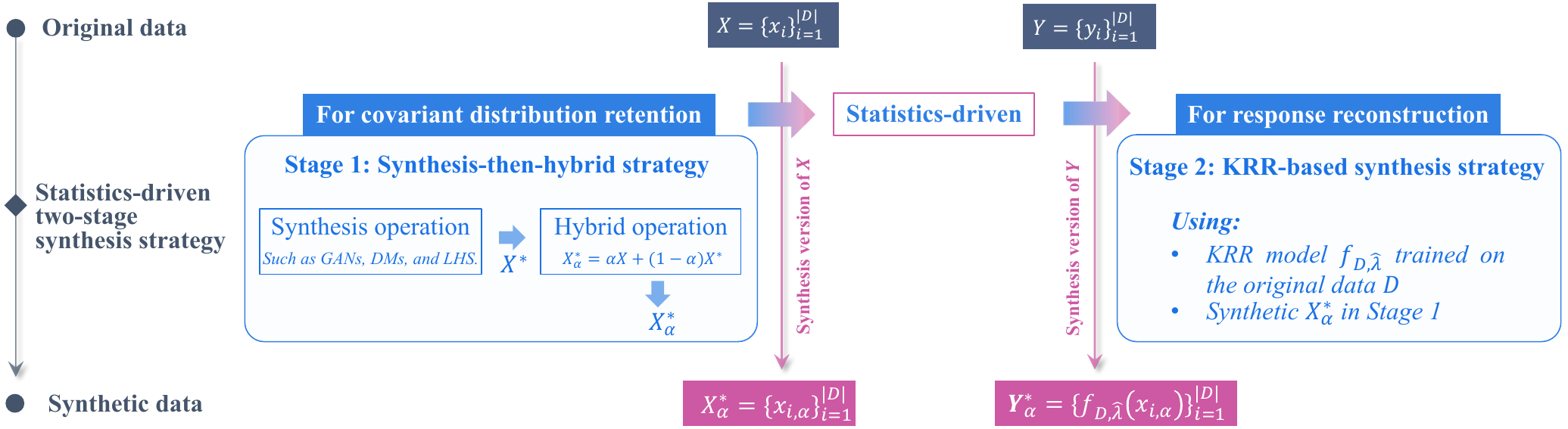}
	\caption{Two-stage synthesis strategy}
	\label{fig: LK-2SS} 
	\vspace{0.1in}
\end{figure}
\vspace{-0.5cm}

Given a dataset \(D = \{(x_i, y_i)\}_{i=1}^{|D|} \subset \mathcal{X} \times \mathcal{Y}\) with \(\mathcal{Y} \subseteq \mathbb{R}\), let \(\mathcal{H}_K\) be the reproducing kernel Hilbert space (RKHS) induced by a Mercer kernel \(K\) on a compact metric space \(\mathcal{X}\), endowed with norm \(\|\cdot\|_K\). KRR solves
\begin{equation}\label{krr}
f_{D,\lambda} = \arg\min_{f \in \mathcal{H}_K}
\left\{\frac{1}{|D|}\sum_{(x,y)\in D}(f(x)-y)^2 + \lambda\|f\|_K^2\right\}.
\end{equation}
The KRR‑based synthesis strategy then regresses \(f_{D,\lambda}\) on the LHS anonymized inputs \(X^*\) to generate the anonymized outputs \(Y^*\).

The integration of the synthesis-then-hybrid strategy and the KRR-based synthesis strategy forms our proposed two-stage synthesis strategy. As shown in Figure \ref{fig: LK-2SS}, for the original dataset \(D=\{(x_i, y_i)\}_{i=1}^{|D|}\), the inputs are anonymized by the synthesis-then-hybrid strategy into the synthetic version \(X^*_\alpha=\{x_{i,\alpha}\}_{i=1}^{|D|}\), while the outputs are anonymized using the KRR-based synthesis strategy into the synthetic version \(Y^*_\alpha=\{f_{D,\lambda}(x_{i,\alpha})\}_{i=1}^{|D|}\).  

\subsection{Statistics retention strategy}

In this section, we introduce various synthesis strategies that can be applied to the synthesis-hybrid strategy, including statistical information based resampling method like Latin Hypercube Sampling (LHS) and deep learning-based methods such as Generative Adversarial Network (GAN) and diffusion model (DM), and analyze their respective strengths and weaknesses in SDG.

\begin{itemize}
\item GAN:  GANs are an advanced SDG strategy that simultaneously trains a generator, which creates synthetic data, and a discriminator, which distinguishes between real and synthetic data, within an adversarial framework \citep{gulrajani2017improved, goodfellow2014generative}. This approach enables GANs to produce highly realistic and diverse data by iteratively improving both models through competition, with key advantages including their ability to capture complex data distributions and generate high-fidelity outputs \citep{anand2023using}. One major challenge with GANs is the instability of the training process, primarily stemming from the non-overlapping distributions between the input data and the generated data \citep{yang2023diffusion}.

\item DM: DMs are a family of probabilistic generative models that progressively corrupt data by injecting noise and then learn to reverse this process to generate samples \citep{ho2020denoising}. Current research on DMs predominantly focuses on three main formulations: denoising diffusion probabilistic models, score-based generative models, and stochastic differential equations. \citep{yang2023diffusion} provides a self-contained introduction to these three formulations while exploring their theoretical connections.  A notable drawback of DMs is their slow sampling process, as generation requires many iterative denoising steps, making them computationally intensive.

\item LHS-based method: We introduce the LHS-based method by combining Latin Hypercube Sampling (LHS) \citep{mckay2000comparison} with techniques from \citep{iman1982distribution} to reconstruct the original mean, variance, and covariance\footnote{https://www.mathworks.com/matlabcentral/fileexchange/56384}. Compared to general Monte Carlo sampling (MCS), LHS-based method is highly efficient in generating samples that adhere to a specified distribution, ensuring adequate sampling in low-probability regions. This feature makes it particularly effective in capturing extreme observations—phenomena of significant interest in fields such as 
healthcare \citep{wu2013sensitivity}.

\end{itemize}

In contrast to  GAN and DM, which rely on deep learning and require extensive parameter tuning, LHS is parameter-free, significantly reduces runtime, and still achieves comparable performance in preserving basic statistical properties. Under the experimental setup of this paper, the LHS-based method requires only 1/447 of GAN's data generation time and 1/20 of DM's. Beyond its efficiency, LHS is also more interpretable than GANs and DMs, as it explicitly maintains statistics such as mean, variance, and covariance \citep{mckay2000comparison}, while GANs and DMs rely on complex optimization processes \citep{goodfellow2014generative, yang2023diffusion}.
These advantages make LHS a popular choice for SDG \citep{wu2013sensitivity}, especially when supported by the hybrid operation in the synthesis-then-hybrid strategy proposed in this paper.

\subsection{Statistics-driven two-stage synthesis strategy}

This section introduces the statistics-driven two-stage synthesis strategy proposed in this paper, termed LHS-H-KRR, which combines the LHS-H strategy for covariant distribution retention and the KRR-based synthesis strategy for response reconstruction. As shown in Algorithm \ref{algorithm: RR_PPDS}, for an original dataset $D = \{(x_i, y_i)\}_{i=1}^{|D|}$, where each input column $x^{(j)}$ follows the distribution $\rho_j$, and given an attack tolerance $\eta$ and a privacy threshold LID, LHS-H-KRR first generates $X_\alpha^*$ with distribution closely approximating $\rho_j$, and then produces an anonymized dataset $D^*_\alpha = \{(x_{i,\alpha}, y_{i,\alpha})\}_{i=1}^{|D|}$ that maintains the original regression relations. It is worth noting that the LHS-H strategy in Stage 1 is not fixed and can be replaced with other synthesis strategies, such as GANs or DMs.

	\vspace{0.15in}
\begin{breakablealgorithm}
    \small
	\caption{LHS-H-KRR Synthesis Strategy}\label{algorithm: RR_PPDS}
	\begin{algorithmic}
		\Statex  \hspace{-0.2in} \textbf{Input:} An original dataset $D=\{(x_i, y_i)\}_{i=1}^{|D|}$, its inputs $X=\{x_{i}\}_{i=1}^{|D|}$ with each column $x^{(j)}$  follows the distribution  $\rho_j$, the attack tolerance $\eta$, and the privacy  threshold LID. 
		\Statex 	\hspace{-0.05in} \(\triangleright\) \textbf{\textit{Stage 1 (LHS-H strategy for covariant distribution retention)}}
		\State  \hspace{-0.05in}  \textbf{1. Extract statistical information of $X$:}
        		\State   \hspace{0.15in}  (1) Single-attribute statistics: estimate the PDF $\hat{\rho}_{j}$  of each column $x^{(j)}$ by kernel density estimation and calculate the corresponding CDF $\hat{F}_j(x)$ by Gauss–Legendre quadrature and ICDF $\hat{F}_j^{-1}(q) = \inf\{x: \hat{F}_j(x)> q\}$ with $q\in[0,1]$.
                
		\State   \hspace{0.15in}   (2) Inter-attribute statistics:  calculate the covariance  matrix $C$ of $X$.
		\State  \hspace{-0.05in}  \textbf{2. Generate pure synthetic data $S$:}
		\State    \hspace{0.15in} (1) Generate  $V=(v^{(1)}, \dots, v^{(d)})$ by Latin hypercube sampling with each element $v_i^{(j)}$ of the $j$th  column $v^{(j)}$ following the uniform distribution $\mathcal{U}(0,1)$. 
		\State  \hspace{0.15in} (2) Obtain $V^{\prime} =  (\Phi^{-1}(v^{(1)}), \dots, \Phi^{-1}(v^{(d)}))$ and calculate the covariance matrix $C^{\prime}$ of $V^{\prime}$, where $ \Phi^{-1}$ is the ICDF of standard normal distribution.
		\State  \hspace{0.15in} (3) Operate cholesky decomposition for matrices $C$ and  $C^{\prime}$ with $C=PP^{\top}$ and $C^{\prime}=QQ^{\top}$, then obtain  $R=V^{\prime}(PQ^{-1})^{\top}$.  \Comment{\textit{Guarantee covariance}}
		\State  \hspace{0.15in} (4) Obtain $R^{\prime}=(\Phi(R^{(1)}), \dots, \Phi(R^{(d)}))$, where $\Phi$ is the CDF of standard normal distribution and $R^{(j)}$ is the $j$-th column of $R$. 
		\State  \hspace{0.15in} (5) Obtain  $S=(\hat{F}_1^{-1}(R^{(1)\prime}), \dots, \hat{F}_d^{-1}(R^{(d)\prime}))$.  \Comment{\textit{Guarantee mean, variance}}
		\State  \hspace{-0.05in}  \textbf{3. Hybrid operation}	
		\State    \hspace{0.15in} (1)  Denote  $S=\{s_i\}_{i=1}^{|D|}$.
		For each $x_i$, calculate $s_{pi}= \underset {s_i \in S-S_p}{\arg \min}\|s_i-x_i\|_2$, where $S_p=\{s_{pi}\}_{i=1}^{|S_p|}$
		is a set of elements $s_{pi}$. Then, obtain the matrix $S_p=(s_{pi}, \dots, s_{pn})^{\top}$.
		\State    \hspace{0.15in} (2) Determine the hybrid parameter $\alpha$ based on its relationship with LID, e.g.,  $\alpha = 1-\frac{2\eta}{(1-(1-\text{LID})^{-d})}$ when $x^{(j)}$  follows $\mathcal{U}(0, 1)$ (according to (\ref{LID_upper_bound}) in Lemma \ref{lemma:Privacy bound}).
		\State    \hspace{0.15in} (3) Obtain  $X_\alpha^* = \alpha X + (1-\alpha)S_{p}$.
		\Statex 	\hspace{-0.05in} \(\triangleright\) \textbf{\textit{Stage 2 (KRR-based synthesis strategy for response reconstruction)}}
		\State  \hspace{-0.05in}  \textbf{4. Construct KRR-based synthesizer:}
			\State    \hspace{0.15in} Given  a regularization parameter $\hat\lambda$, construct the KRR-based synthesizer $f_{D,\hat\lambda} (x)$, as defined in (\ref{krr}). 
		\State  \hspace{-0.05in}  \textbf{5.  Generate  the anonymized response ${\bf Y}_{\alpha}^*=(y_{1,\alpha},\dots,y_{|D|,\alpha})^{\top}$:} 
		\State    \hspace{0.15in} Regress $f_{D,\hat\lambda} $ on $X^*_{\alpha}$
		and obtain ${\bf Y}_{\alpha}^* = (f_{D,\hat\lambda} (x_{1,\alpha}), \dots, f_{D,\hat\lambda} (x_{n,\alpha}))^{\top}$.
		\Statex  \hspace{-0.2in} \textbf{Output:} An anonymized dataset $D^*_\alpha=\{(x_{i,\alpha}, y_{i,\alpha})\}_{i=1}^{|D|}$.
	\end{algorithmic}
\end{breakablealgorithm}
	\vspace{0.05in}
	\noindent\parbox{0.98\textwidth}{
		\footnotesize
		\textbf{Note:} (1) Step 1 and Step 2 in the Stage 1 can be substituted with alternative synthesis strategies, such as GAN or DM. (2) PDF, CDF, and ICDF stand for the probability density function, cumulative distribution function, and inverse cumulative distribution function, respectively.
		(3)The detailed procedure for Step 1(1) is presented in Appendix B.}
	\vspace{0.4cm}

\section{Theoretical Verification}\label{sec: Theoretical Verification}

For the convenience of theoretical verification, we   assume that the public also employs KRR for prediction tasks and obtains an estimate 
\begin{equation}\label{krr_generation}
f^{DK}_{D^*_\alpha,\lambda^*}=\arg\min_{f \in \mathcal{H}_K}
\left\{\frac{1}{|D|}\sum_{(x_{i,\alpha}, y_{i,\alpha})\in D^*_\alpha}(f(x_{i,\alpha})-y_{i,\alpha})^2 + \lambda\|f\|_K^2\right\}.
\end{equation}
According to the no-free-lunch theory \cite[Chapter 3]{Gyorfi2002}, it is impossible to derive a satisfactory rate for the generalization error if there is no restriction on data.  We build 
 our theoretical results on the following  Assumption \ref{Assumption:regularity_and_effective_dimension}, which is  widely used in the analysis of generalization errors for kernel-based learning algorithms \citep{caponnetto2007optimal,lin2017distributed}.

\begin{assumption}\label{Assumption:regularity_and_effective_dimension}
	For $r>0$, assume $f^{\star}=L_K^r h^{\star} ~~{\rm for~some}  ~ h^{\star}\in L_{\rho_X}^2,$
	where $L_K^r$ denotes the $r$-th power of $L_K: L_{\rho_X}^2 \to
	L_{\rho_X}^2$   defined by $ 
      L_K f:=\int_{\mathcal{X}} K_x f(x) d \rho_X.
$  	
	Furthermore, there exists some $s\in(0,1]$ such that $\mathcal N(\lambda):=\operatorname{Tr}\left(\left(\lambda I+L_K\right)^{-1} L_K\right) \leq C_1\lambda^{-s} $ for any $\lambda>0$,
	where  $C_1\geq 1$ is  an absolute constant.
\end{assumption}

The assumption $f^{\star}=L_K^r h^{\star}$ quantifies the regularity (smoothness) of  the ground-truth $f^{\star}$ via $r$, showing that $0 < r < 1/2$ implies $f^{\star} \notin \mathcal{H}_K$, $r = 1/2$ implies $f^{\star} \in \mathcal{H}_K$, and $r > 1/2$ indicates that $f^{\star}$ belongs to an RKHS generated by a smoother kernel than $K$. Let $\{\sigma_i,\phi_i\}_{i=1}^\infty$ be the  eigen-pairs of $L_K$ with $\sigma_1\geq\sigma_2\geq\dots$. The assumption  $\mathcal N(\lambda)\leq  C_1\lambda^{-s}$ is implied by the eigenvalue decay condition $\sigma_i\leq c i^{-1/s}$ for some $c>0$ \citep{fischer2020sobolev} and 
is always satisfied for $s=1$ and   $C_1:=\kappa:=\max_{x\in\mathcal X}\sqrt{K(x,x)}$. The concrete $s$ depends on the kernel $K$ and marginal distribution $\rho_X$. 
 For example, if $\rho_X$ is the uniform distribution on the unit cube in the $d$-dimensional space $\mathbb{R}^d$, and $K$ is a Sobolev kernel of order $\tau>d / 2$, then  $s=\frac{d}{2 \tau}$.

\begin{theorem}\label{theorem:abc}
    Let $\delta\in(0,1)$, $\epsilon>0$ be an arbitrarily small positive number, $\rho^*_X$ be a synthesization distribution and $\rho_{X,\alpha}^*=(1-\alpha)\rho_X^*+\alpha\rho_X$ be the hybrid distribution. If Assumption \ref{Assumption:regularity_and_effective_dimension} holds with $\frac12\leq r\leq 1$ or $0<r<1/2$ and $3r+s-\epsilon r\geq 1$, $  |D|^{-1}\log^4|D|\leq \lambda^*\leq \lambda$, $\lambda\sim|D|^{-\frac{1}{2r+s}}$ for $r\geq 1/2$ and $\lambda\sim |D|^{-\frac{1}{1-r}}$ for $0<r<1/2$,
  then with confidence $1-\delta$, there holds
\begin{equation}\label{generalization-bound1}
     \left\|f^{DK}_{D^*_\alpha,\lambda^*}-f^{\star}   \right\|_{\rho} 
    \leq
 \frac{2(1-\alpha)}{\lambda^*} \left\|\rho_X -  \rho_{X}^*\right\|_{tv}
    +C' |D|^{-\frac{r}{2r+s}},
\end{equation}
where $C'$ is a constant depending only on $C_0,M,r,s,\kappa$ and $\|h^\star\|_\rho$.
If in addition, either $\rho_X$ or $\rho_X^*$ is a uniform distribution on $(a,b)^d$ for $a,b>0$, then
\begin{equation}\label{Privacy-a}
    LID (X, X_\alpha^*, \eta) \leq \left( 1 - \left( 1 - \frac{2\eta}{(b - a)(1-\alpha)} \right)^d \right) \times 100\%.
\end{equation}
\end{theorem}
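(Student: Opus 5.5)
I would split the generalization bound \eqref{generalization-bound1} by the triangle inequality into a stability part and an approximation part. Let $\rho_\alpha$ denote the joint measure of $D^*_\alpha$, i.e. the marginal $\rho^*_{X,\alpha}$ together with the deterministic response $y=f_{D,\lambda}(x)$. Write $f^{\rho}_{\lambda^*}$ for the population KRR solution with respect to the marginal $\rho_X$ and target $f_{D,\lambda}$, and $f^{\rho_\alpha}_{\lambda^*}$ for the one with marginal $\rho^*_{X,\alpha}$. Then I would bound
\[
\|f^{DK}_{D^*_\alpha,\lambda^*}-f^\star\|_\rho\le \|f^{DK}_{D^*_\alpha,\lambda^*}-f^{\rho_\alpha}_{\lambda^*}\|_\rho+\|f^{\rho_\alpha}_{\lambda^*}-f^{\rho}_{\lambda^*}\|_\rho+\|f^{\rho}_{\lambda^*}-f_{D,\lambda}\|_\rho+\|f_{D,\lambda}-f^\star\|_\rho .
\]

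The middle term carries the distributional shift. I would invoke the total-variation stability of regularized kernel methods \cite[Theorem 2.7]{christmann2018total}. The two joint measures share the conditional law, so their total-variation distance is $\|\rho^*_{X,\alpha}-\rho_X\|_{tv}=(1-\alpha)\|\rho_X^*-\rho_X\|_{tv}$, by linearity of the mixture. The stability bound has the form $\|f_{P,\lambda}-f_{Q,\lambda}\|_\infty\le \frac{c}{\lambda}\|P-Q\|_{tv}$ with $c$ depending on $\kappa$ and a bound on $f_{D,\lambda}$. I would need to normalize so that $c\le 2$, using $\|f_{D,\lambda}\|_\infty$ bounded by $M$ with high probability. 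This yields the term $\frac{2(1-\alpha)}{\lambda^*}\|\rho_X-\rho^*_X\|_{tv}$.

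The last term is the standard optimal KRR rate $C|D|^{-r/(2r+s)}$ with the stated choice of $\lambda$. For $r\ge1/2$ I would cite \cite{caponnetto2007optimal,lin2017distributed}. For $0<r<1/2$ the choice $\lambda\sim|D|^{-1/(1-r)}$ together with $3r+s-\epsilon r\ge1$ should still give the same rate, via the sharper $L^\infty$-type operator analysis of \cite{fischer2020sobolev}.

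The third term measures how far noise-free KRR on target $f_{D,\lambda}$ sits from $f_{D,\lambda}$. It is bounded by $\lambda^{*}$-regularization bias. Since $\lambda^*\le\lambda$ and $f_{D,\lambda}$ lies in $\mathcal H_K$ with a controlled norm $\|f_{D,\lambda}\|_K\lesssim\lambda^{(r-1/2)\wedge 0}$, this term is $O(\sqrt{\lambda^*}\,\|f_{D,\lambda}\|_K)$, which is again $\lesssim|D|^{-r/(2r+s)}$. The first term is a sample error with no noise. Here the condition $\lambda^*\ge|D|^{-1}\log^4|D|$ ensures that the operator concentration $\|(L_{K,\rho_\alpha}+\lambda^*)^{-1/2}(L_{K,\rho_\alpha}-L_{K,D})(L_{K,\rho_\alpha}+\lambda^*)^{-1/2}\|\le1/2$ holds with confidence $1-\delta$. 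It also bounds the term by the same rate. One subtlety is that the samples $x_{i,\alpha}$ are not independent of $D$, because they come from hybridization with $X$. I would treat this by conditioning, or by accepting the i.i.d. idealization implicit in the statement.

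The main obstacle, I expect, is the $0<r<1/2$ regime. There $f^\star\notin\mathcal H_K$, and both the $L^\infty$ bound on $f_{D,\lambda}$ (needed for the stability constant) and the norm transfer between $\rho$ and $\rho_\alpha$ require the embedding condition encoded by $3r+s-\epsilon r\ge1$. My first attempt would be to bound $\|f_{D,\lambda}\|_\infty\le\kappa\|f_{D,\lambda}\|_K$ and to check that the chosen $\lambda$ keeps this bounded.

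For the privacy bound \eqref{Privacy-a}, the key identity is $x_{i,\alpha,j}-x_{i,j}=(1-\alpha)(s_{pi,j}-x_{i,j})$. Hence record $i$ is exposed in dimension $j$ exactly when $|s_{pi,j}-x_{i,j}|\le\eta/(1-\alpha)$. If one of the two coordinates is uniform on $(a,b)$ and independent of the other, this event has probability at most $2\eta/((b-a)(1-\alpha))$. Treating the coordinates as independent, the probability that no dimension is exposed is at least $(1-2\eta/((b-a)(1-\alpha)))^d$. Taking complements and the expected proportion over records gives the bound on $LID$.
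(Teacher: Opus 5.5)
Your privacy argument is the paper's proof of Lemma \ref{lemma:Privacy bound}: the same reduction to $|s^{(j)}-x^{(j)}|\le\eta/(1-\alpha)$ and the same independence idealization. The identity $\rho_X-\rho^*_{X,\alpha}=(1-\alpha)(\rho_X-\rho^*_X)$ is also used as in the paper. The generalization bound, however, does not follow from your four-term sketch, because two terms are not controlled at the rate $|D|^{-r/(2r+s)}$.

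(i) In your third term, the bound $\|\lambda^*(L_K+\lambda^*I)^{-1}f_{D,\lambda}\|_\rho\lesssim\sqrt{\lambda^*}\,\|f_{D,\lambda}\|_K$ is too weak for $1/2<r\le1$. The hypotheses allow $\lambda^*=\lambda\sim|D|^{-1/(2r+s)}$, and then $\sqrt{\lambda^*}\sim|D|^{-1/(2(2r+s))}\gg|D|^{-r/(2r+s)}$. Use the source condition instead of the RKHS norm, e.g.\ $\|\lambda^*(L_K+\lambda^*I)^{-1}f_{D,\lambda}\|_\rho\le\|f_{D,\lambda}-f^\star\|_\rho+(\lambda^*)^r\|h^\star\|_\rho$. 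This is the counterpart of the paper's term $\mathcal Q_{D,\lambda^*}^2\lambda^*\lambda^{r-1}\|h^\star\|_\rho\le\mathcal Q_{D,\lambda^*}^2\lambda^{r}\|h^\star\|_\rho$, which is where $\lambda^*\le\lambda$ is used.

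(ii) Your first term $\|f^{DK}_{D^*_\alpha,\lambda^*}-f^{\rho_\alpha}_{\lambda^*}\|_\rho$ is the real gap. Concentration of $L_{K,D^*_\alpha}$ around the integral operator of $\rho^*_{X,\alpha}$ controls this difference in $L^2_{\rho^*_{X,\alpha}}$, not in $L^2_{\rho_X}$. Even in $L^2_{\rho^*_{X,\alpha}}$, $\mathcal S\le 1/2$ alone gives only $O(\sqrt{\lambda^*}\|f_{D,\lambda}\|_K)$, with the same defect as in (i); you need $\mathcal S\lesssim(|D|\lambda^*)^{-1/2}$ up to logarithms. The transfer $\rho_X\le\alpha^{-1}\rho^*_{X,\alpha}$ costs a factor $\alpha^{-1/2}$. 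That factor cannot appear in $C'$, and it blows up in the privacy regime $\alpha\to0$. Passing through $\|\cdot\|_\infty\le\kappa\|\cdot\|_K$ gives only $(|D|\lambda^*)^{-1/2}$, a logarithmic rate at $\lambda^*\approx|D|^{-1}\log^4|D|$.

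For $r\ge1/2$, and under the i.i.d.\ idealization, you can close this. Write $g$ for the difference and use $\|g\|_\rho\le\|g\|_{\rho^*_{X,\alpha}}+\|g\|_\infty\|\rho_X-\rho^*_{X,\alpha}\|_{tv}^{1/2}$ with $\|g\|_\infty\lesssim(|D|\lambda^*)^{-1/2}$. Then apply $\sqrt{ab}\le\theta a+b/(4\theta)$ with $a=\|\rho_X-\rho^*_{X,\alpha}\|_{tv}/\lambda^*$. This moves the excess into the total-variation term, but turns the constant $2$ into $2+\theta$.

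This problem is present for every $r$. The regime $0<r<1/2$ that you single out is not the crux. As stated it is even degenerate, since $\lambda^*\le\lambda\sim|D|^{-1/(1-r)}=o(|D|^{-1})$ clashes with $\lambda^*\ge|D|^{-1}\log^4|D|$.

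The paper never meets your first term because it decomposes differently. It compares $f^{DK}_{D^*_\alpha,\lambda^*}$ directly, via Lemma \ref{Lemma:total-stability}, with $f^{DK}_{D,\lambda^*}$: the second-stage KRR run on the \emph{original} inputs with labels $f_{D,\lambda}(x_i)$. It then writes $f^{DK}_{D,\lambda^*}=(L_{K,D}+\lambda^*I)^{-1}L_{K,D}(L_{K,D}+\lambda I)^{-1}S_D^Ty_D$. All remaining randomness then lives in the single i.i.d.\ sample from $\rho_X$, and is handled by $\mathcal P_{D,\lambda}$, $\mathcal Q_{D,\lambda}$, $\mathcal Q_{D,\lambda^*}$ and $\mathcal S_{D,\lambda,1/2,1/2-r}$ under Assumption \ref{Assumption:regularity_and_effective_dimension}. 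No population solution under $\rho^*_{X,\alpha}$, no sample error on synthetic inputs, and no change of norm appear.

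Be aware, though, that Lemma \ref{Lemma:total-stability} bounds the difference between two \emph{empirical} estimators by the population distance $\|\rho_X-\rho^*_{X,\alpha}\|_{tv}$. The cited stability results concern the measures that define the estimators, which for empirical estimators are the empirical measures. Your population-level use of the total-stability theorem is the more careful one, and it is precisely what obliges you to control the synthetic-sample term. Similarly, ``normalizing'' the stability constant to $2$ is not a proof step: such constants depend on $\kappa$ and on a bound for the labels $f_{D,\lambda}(x)$. The paper simply asserts $2$.
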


Theorem \ref{theorem:abc} justifies our proposed LK-2SS strategy from three key aspects.
\begin{itemize}
\item \textit{Necessity of the two-stage design:}  
Equation \eqref{generalization-bound1} shows that prediction performance partly depends on the distributional difference between the original and synthetic data, underscoring the importance of retaining covariant distributions in the synthesization process and necessitating a two-stage design, as perturbing the input $X$ to preserve covariant distributions cannot be directly incorporated into the KRR-based synthesis strategy within a single stage.

\item \textit{Importance of the hybrid operation:} The term $\frac{2(1-\alpha)}{\lambda^*} \left\|\rho_X - \rho_{X}^*\right\|_{tv}$ captures the combined influence of the hybrid parameter $\alpha$ and the distributional difference between the original and synthetic data on $f^{DK}_{D^*_\alpha,\lambda^*}$. The trade-off between $\alpha$ and $\left\|\rho_X - \rho_{X}^*\right\|_{tv}$ reflects the balance between manual adjustments and the synthesis methods. When  $\left\|\rho_X - \rho_{X}^*\right\|_{tv}$ is small, $\alpha$ can take on a relatively wide range of values without compromising prediction. Conversely, larger $\left\|\rho_X - \rho_{X}^*\right\|_{tv}$ restricts the range of $\alpha$, requiring more careful tuning to maintain prediction performance.

\item \textit{Statistics-based restricted privacy–prediction trade-off:}  
The privacy–prediction trade-off is essentially achieved through the $\alpha$-controlled privacy–statistics trade-off. The effect of $\alpha$ on prediction depends on the employed synthesis strategy. A strategy yielding a small $\left\|\rho_X - \rho_{X}^*\right\|_{tv}$ allows for a wider adjustment range of $\alpha$ for ensuring prediction performance at the optimal learning rate—this is the sense in which the trade-off is both ``statistics-based'' and ``restricted.'' Notably, Equation~\eqref{generalization-bound1} and Equation~\eqref{Privacy-a} impose opposite requirements on the value of $\alpha$ from the perspectives of prediction guarantee and privacy preservation, respectively. Given the advantage of the ``restricted'' setting in guaranteeing prediction within the privacy–prediction trade-off, we choose a relatively small $\alpha$ to enhance privacy preservation, while the term $\tfrac{2(1-\alpha)}{\lambda^*} \left\|\rho_X - \rho_{X}^*\right\|_{tv}$ further underscores the need for a strong synthesis strategy.

\end{itemize}

\section{Statistics-Based Restricted Privacy–Prediction Trade-off}\label{sec: Restricted Privacy–Prediction Trade-off}

This section evaluates the synthesis-then-hybrid strategy  in Stage 1 by illustrating its inherent synthesis–hybrid trade-off (i.e., the $tv$–$\alpha$ trade-off), the role of its hybrid parameter $\alpha$, and its impact on prediction performance, which gives rise to the restricted privacy–prediction trade-off.

Table~\ref{tab:Experiment Settings} summarizes the datasets used in the experiments, the metrics for statistics, privacy, and prediction, as well as the synthesis strategies employed. The parameters of GAN and DM are fine-tuned to ensure that the generated data distributions closely approximate the original data distribution, with detailed parameter settings provided in Appendix D.1. The table also outlines the settings for the price–sale data and real-world datasets used in subsequent sections. Unless otherwise specified, their experimental settings are identical to those described here.

To better simulate complex real-world environments, we allow the public to employ learning models beyond KRR, including NWK (Gaussian) from local average regression (hereafter NWK), AdaBoost, and random forest (RF). To distinguish these models from KRR-based synthesis strategies, we use the prefix $\#$ to denote the models employed by the public. Parameter selection for all learning models is conducted via five-fold cross-validation, with algorithmic details provided in Appendix D.1.

\begin{table}[ht]
\renewcommand\arraystretch{0.72} 
\centering
\caption{Experiment Settings}
\label{tab:Experiment Settings}
\scalebox{0.6}{ 
\begin{tabular}{@{}lcccc|c|ll@{}}
\toprule
\makecell[c]{\textbf{Data}} & \makecell[c]{\textbf{$D'$}} & \makecell[c]{\textbf{$D$}} & \makecell[c]{\textbf{$D^*$}} & \makecell[c]{\textbf{$D^t$}} & \makecell[c]{\textbf{Trials}} & \makecell[c]{\textbf{Metrics}} & \makecell[c]{\textbf{Details}} \\ 
\midrule
1. Simulated nonlinear regression data & 1000 & 1000 & 1000 & 200 & 20 & \textit{For statistics} & TV norm: \( tv(P, Q) = \frac{1}{2} \sum_i |P(i) - Q(i)| \),  \\
\textit{For figures from GAN/DM-related experiments} & \textit{200} & \textit{500} & \textit{500} & \textit{500} & \textit{10} & & $P$ and $Q$ are the probability density functions of $D$ and $D^*$.\\ 
\midrule
2. Price-sale data & 20 & 1040 & 1040 & 1040 & 10 & \textit{For privacy} & LID principle (See Definition 1) \\ 
3. Real-world data & 200 & 1000 & 1000 & 138 & 20 & \textit{For prediction} & MSE (mean square error); \\ 
(using insurance data as an example) &  &  &  &  &  &  & $MSE_{D}$: MSE on the data $D$;\\ 
&  &  &  &  &  &  & $\Delta$MSE = \(\frac{{MSE}_{D'} - {MSE}_{D^* + D'}}{{MSE}_{D'}} \times 100\%\) \\ 
\midrule
\multicolumn{8}{c}{\textbf{Synthesis strategies}} \\
\multicolumn{8}{l}{\makecell[c]{$\bullet$-KRR: ``$\bullet$'' stands for the first-stage synthesis strategies, including SH, DM (Diffusion model), GAN, Random, RanWeibull, and RanCauchy, \\[-6pt]
where Random, RanWeibull, and RanCauchy are sampled from $\mathcal{U}(0, 1)$, $\text{Weibull}(\lambda = 1, k = 8)$, and $\text{Cauchy}(x_0 = 0, \gamma = 1)$ distributions, respectively.}} \\
\bottomrule
\end{tabular}}
\vspace{8pt} 
\footnotesize
\begin{tabular}{@{}l@{}}
\multicolumn{1}{@{}p{0.96\linewidth}@{}}{
\scriptsize
\textbf{Note}: 
$D'$ denotes the data held by the public, $D^*$ the synthetic version of the original data $D$, and $D^t$ the testing data. $MSE_{D}$ represents the mean squared error on the data \(D\), with other notations defined analogously. We set \(\eta = 0.0001\) for the price–sale data and \(\eta = 0.001\) for the nonlinear regression data. All experiments are implemented in Python~3.7 on a PC with an Intel i5 processor and a 2~GHz GPU.
}
\end{tabular}
\vspace{-0.5cm}
\end{table}

We employ a widely used regression function in the machine learning field \citep{lin2025striking} to simulate a complex nonlinear prediction task:
$$
g(x) = 
\begin{cases} 
(1-\|x\|)_{+}^{5}(1+5\|x\|)+\frac{1}{5}\|x\|^{2}, & 0 \leq \|x\| \leq 1, \, x \in \mathbb{R}^{3}, \\ 
\frac{1}{5}\|x\|^{2}, & \text{otherwise}, 
\end{cases}
$$
and generate a training set \( D = \{(x_i, y_i)\}_{i=1}^{|D|}, i=1,2,\ldots,|D| \), a test set \( D^t = \{(x_i^t, y_i^t)\}_{i=1}^{|D^t|}, i=1,2,\ldots,|D^t| \), and a public set \( D' = \{(x_i', y_i')\}_{i=1}^{|D'|}, i=1,2,\ldots,|D'| \). 
The input vectors \( x_i, x_i^t \), and \( x_i' \) are all three-dimensional: the first two components are drawn i.i.d.\ from the (hyper)cube \([0,1]^2\). To introduce inter-dimensional dependence, the third component is defined as 
$
x_i^{(3)} = (x_i^{(1)})^2 + (x_i^{(2)})^2 + \epsilon_i$, with \(\epsilon\sim\mathcal{N}(0,1)\), with analogous definitions for \(x_i^t\) and \(x_i'\).
For data $D$, $D^t$ and $D^{\prime}$, we set $y_i=g(x_i)+\varepsilon_i$, $y_i^t=g(x_i^t)$, and $y_i^{\prime}=g(x_i^{\prime}) + \varepsilon_i$.

Figure~\ref{fig: Comparison of different first-stage synthesis strategies} illustrates the role of the hybrid parameter $\alpha$ and the synthesis–hybrid trade-off. Specifically, Figure~\ref{subfig: MSE_diff_method_diff_alpha_curve} shows that, regardless of the synthesis strategy employed before the hybrid operation, the prediction performance of $\#$KRR on the anonymized data improves as $\alpha$ increases. Moreover, for synthesis strategies that do not capture the original statistical information (e.g., RandCauchy-KRR and RanWeibull-KRR), the hybrid operation exerts a more pronounced effect on the final prediction performance. 
Figure~\ref{subfig: MSE_diff_method_diff_alpha_range} further depicts the TV norms between the original data and the purely synthetic data generated by different first-stage synthesis strategies, together with the feasible ranges of $\alpha$ that yield an MSE below $8 \times 10^{-4}$. We observe that strategies with smaller TV norms (e.g., GAN- and DM-based approaches) permit a broader range of tunable $\alpha$, whereas strategies with larger TV norms substantially constrain the admissible values of $\alpha$. This $tv$–$\alpha$ trade-off directly reflects the inherent synthesis–hybrid trade-off in the synthesis-then-hybrid strategy. We use Figure~\ref{subfig: MSE_diff_method_diff_alpha} to show that when $\alpha$ becomes sufficiently large, the requirement for the first-stage synthesis strategy to accurately reproduce the original data distribution is relaxed to achieve the similar prediction performance. All these findings justifies our Theorem \ref{theorem:abc}.

\vspace{-0.2cm}
\begin{figure}[H]
\centering
\setlength{\subfigcapskip}{-1.2em}
\subfigure[Role of hybrid parameter $\alpha$]{\includegraphics[scale=0.30]{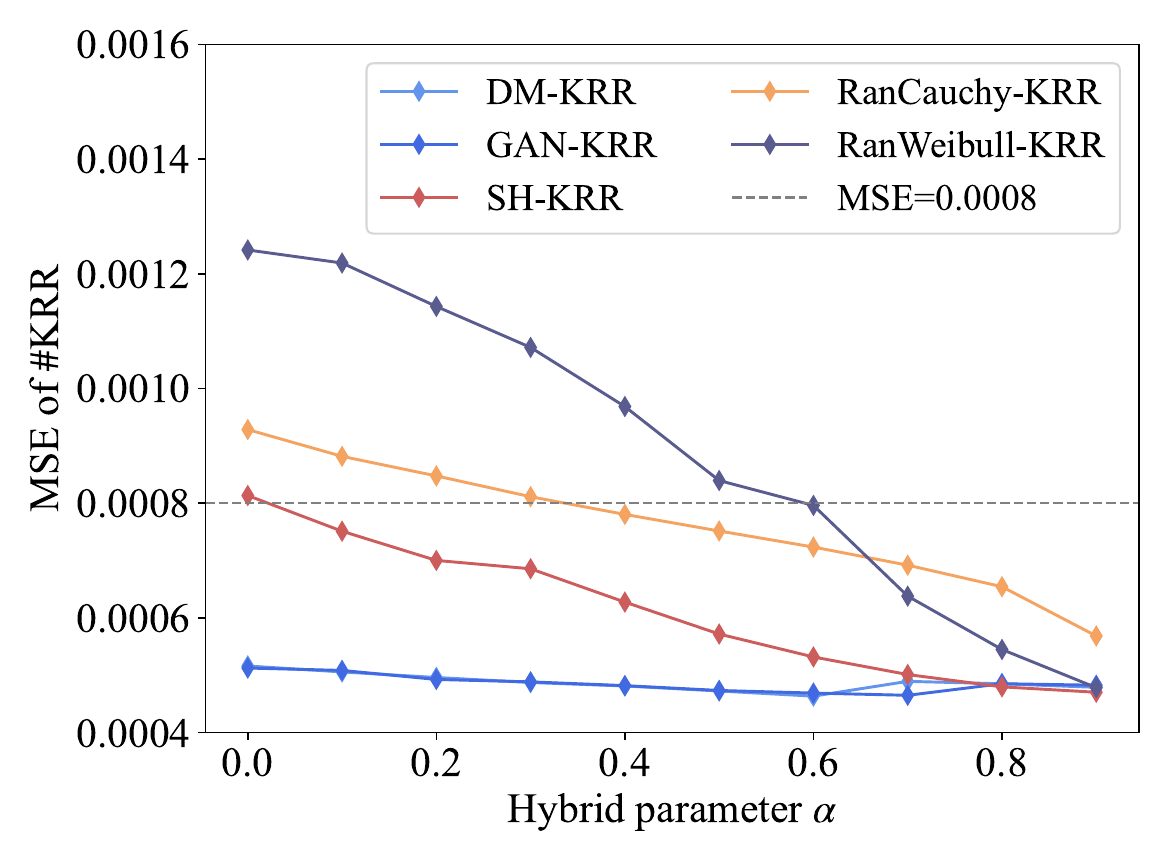}
\label{subfig: MSE_diff_method_diff_alpha_curve}} 
\setlength{\subfigcapskip}{-1.2em}
\hspace{0.3cm}
\subfigure[Range of $\alpha$ satisfying MSE$<$$8\times10^{-4}$]{\includegraphics[scale=0.30]{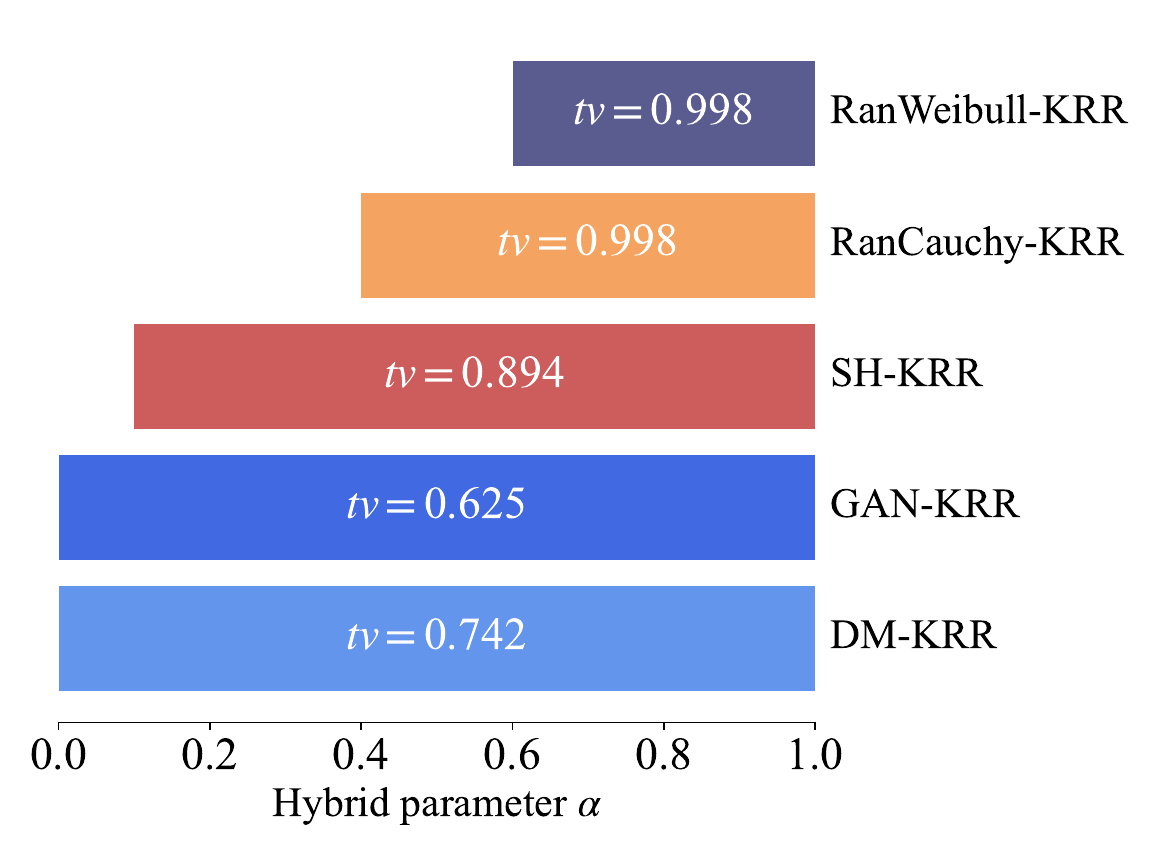}
\label{subfig: MSE_diff_method_diff_alpha_range}} 
\setlength{\subfigcapskip}{-1.2em}
\subfigure[MSE of various synthesis strategies under different $\alpha$]{\includegraphics[scale=0.30]{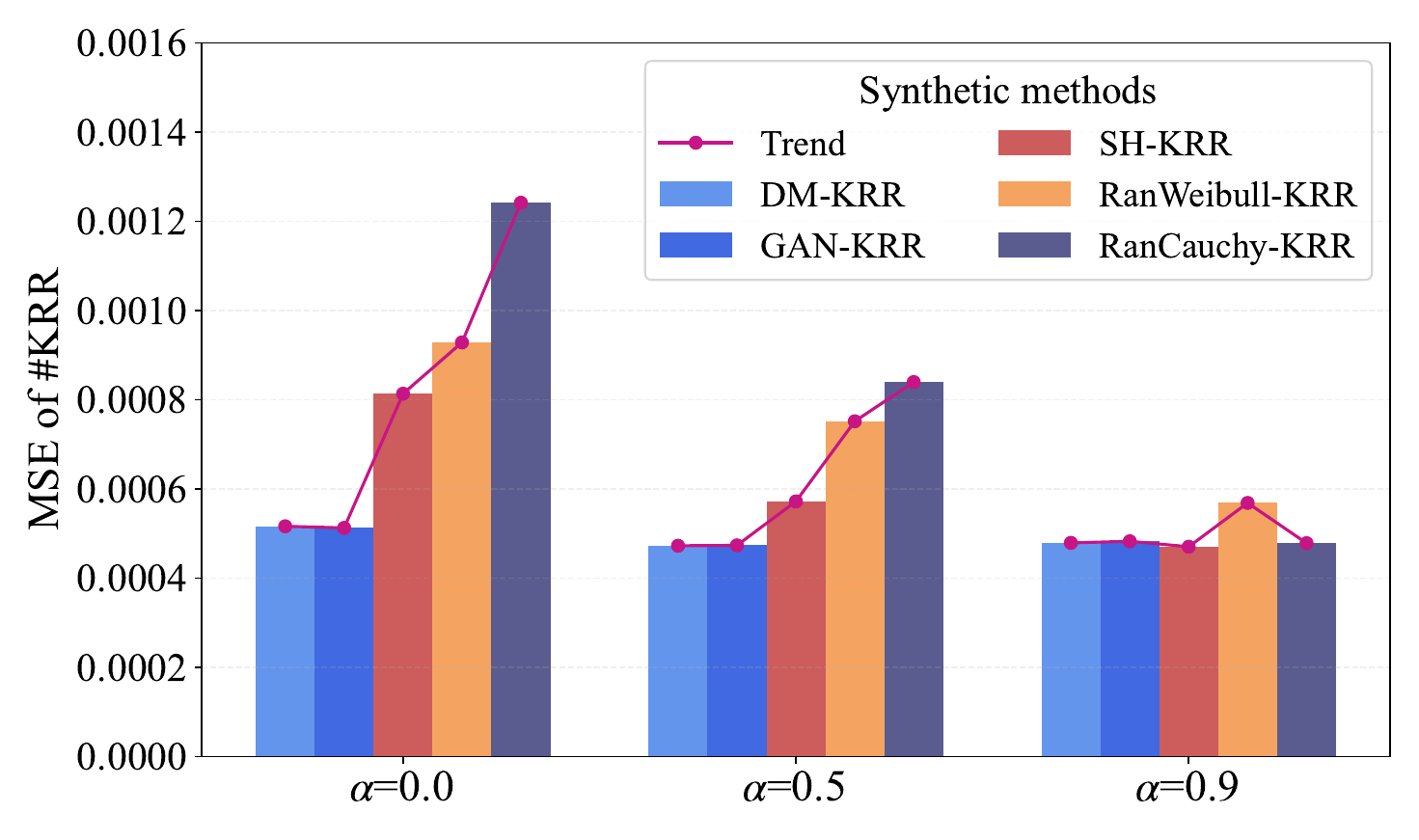}
\label{subfig: MSE_diff_method_diff_alpha}} 
\caption{Synthesis–hybrid trade-off in synthesis-then-hybrid strategy} 
\label{fig: Comparison of different first-stage synthesis strategies}
\end{figure}
\vspace{-0.5cm} 

Figure~\ref{fig:privacy–prediction tradeoff} 
illustrates the impact of the synthesis-then-hybrid strategy on prediction performance, 
and highlights the statistics-based restricted privacy–prediction trade-off achieved by the proposed two-stage strategy (LK-2SS), using the Random-based method as a baseline.
 Specifically, Figures~\ref{subfig: SH_krr_region_text} and \ref{subfig: Random_krr_region_text} show that, compared with Random-KRR, LK-2SS yields greater improvements in prediction performance—whereas Random-KRR shows  improvement only when the hybrid operation plays a dominant role (i.e., when $\alpha$ is close to 1)—and exhibits much smaller fluctuations, as highlighted by the purple shaded region. Although Random-KRR achieves a smaller LID, the restricted prediction variation offered by LK-2SS provides sufficient flexibility to tune $\alpha$ for simultaneously achieving satisfactory prediction accuracy and privacy preservation (e.g., by setting $\alpha < 0.4$). Figures~\ref{subfig: SH_bar_trend} and \ref{subfig: Random_bar_trend} further detail the prediction performance of LK-2SS and the Random-based method across different public models. Overall, LK-2SS delivers consistently better and more stable prediction improvements, underscoring the broad applicability of its statistics-based restricted privacy–prediction trade-off.

\vspace{-0.2cm} 
\begin{figure}[H]
\centering
\setlength{\subfigcapskip}{-0.8em}
\subfigure[Restricted privacy–prediction trade-off]{\includegraphics[scale=0.37]{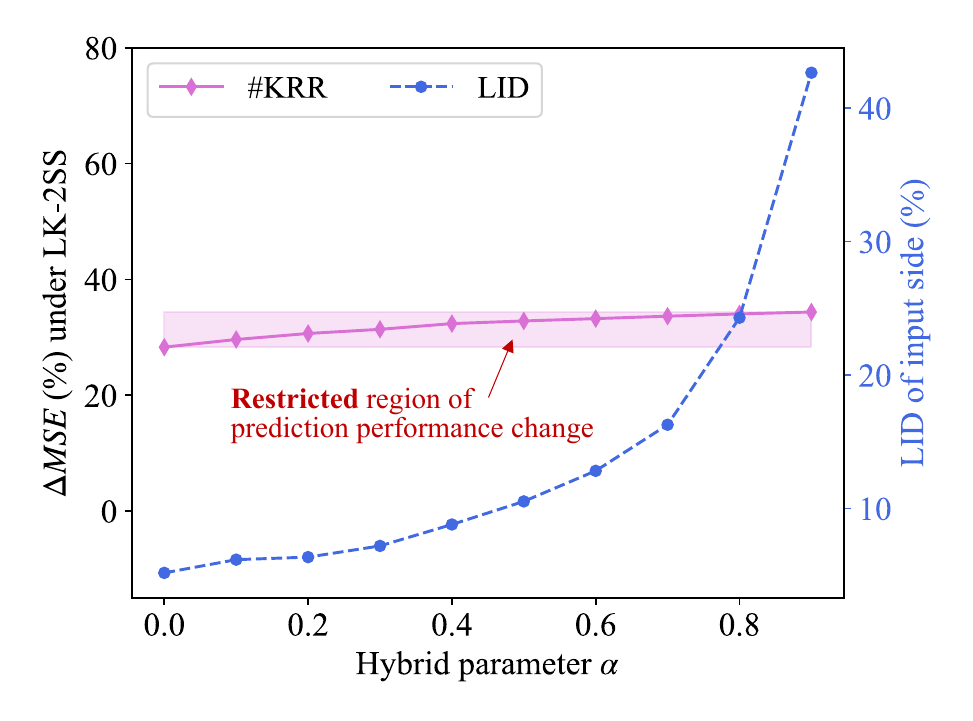}
\label{subfig: SH_krr_region_text}} 
\subfigure[Evaluation with various public models]{\includegraphics[scale=0.30]{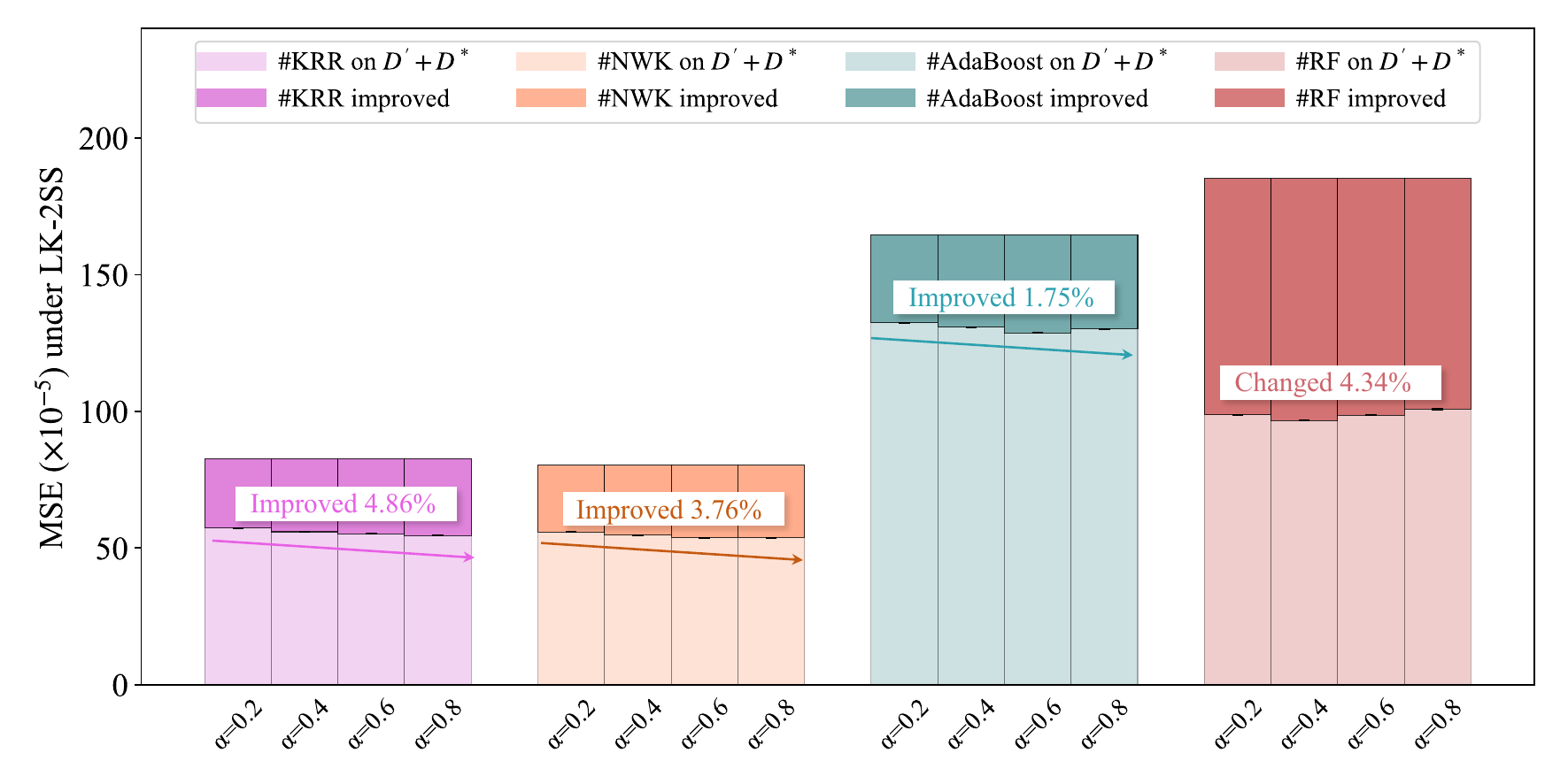}
\label{subfig: SH_bar_trend}} 
\setlength{\subfigcapskip}{-0.8em}
\subfigure[Privacy–prediction trade-off]{\includegraphics[scale=0.36]{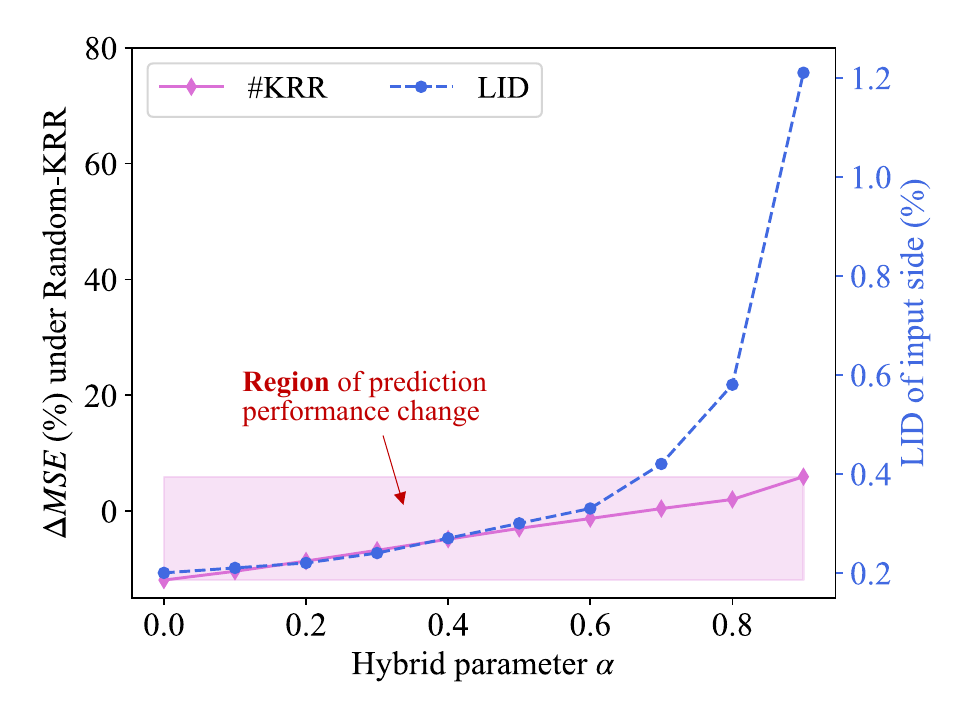}
\label{subfig: Random_krr_region_text}} 
\subfigure[Evaluation with various public models]{\includegraphics[scale=0.30]{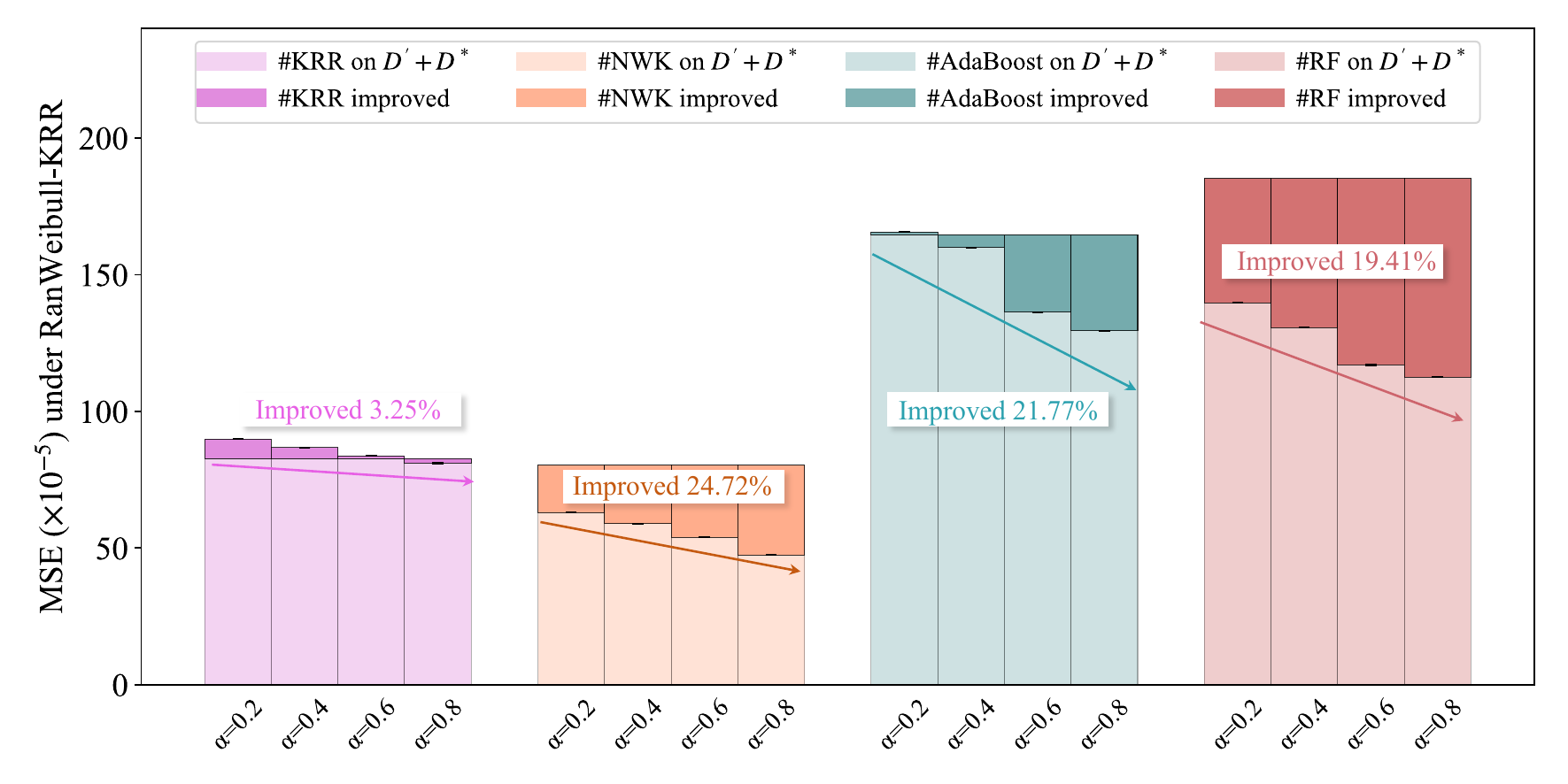}
\label{subfig: Random_bar_trend}} 
\caption{Statistics-based restricted privacy–prediction trade-off enabled by the synthesis-then-hybrid strategy} 
\label{fig:privacy–prediction tradeoff}
\end{figure}
\vspace{-0.5cm}

\vspace{-0.3cm}
\begin{figure}[H]
	\centering
	\includegraphics[scale=0.42]{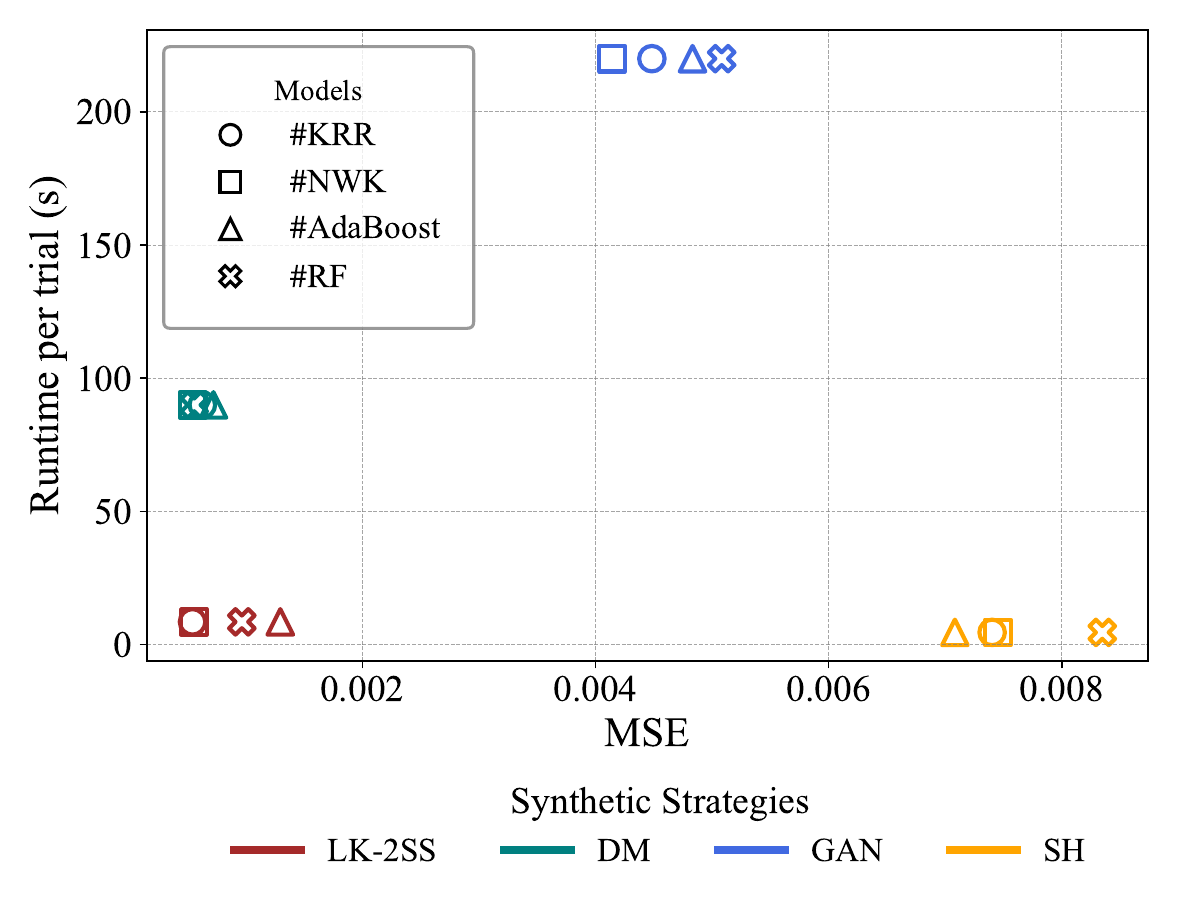}
	\caption{Comparison of the two-stage  LK-2SS strategy and the single-stage strategies SH, DM, 
 and GAN (note: the actual runtime of GAN is 2049 seconds per trial).}
	\label{subfig: combined_different_synthetic_method}   
	\vspace{0.1in} 
\end{figure}
\vspace{-0.5cm}

\section{Power of LK-2SS Strategy}\label{sec: Power of Strategy}
This section first highlights the necessity of a two-stage design in data synthesization for ensuring prediction performance, followed by a justification for employing the SH strategy and the KRR-based synthesis strategy in the first and second stages, respectively.

\subsection{Necessity of the two-stage design}\label{subsec: why two-stage}

Figure~\ref{subfig: combined_different_synthetic_method} compares the two-stage strategy (LK-2SS) with single-stage strategies (SH, DM, and GAN) in terms of generation efficiency, measured by runtime, and prediction performance on the combined dataset of existing data $D'$ and synthetic data $D^*$.  The results show that, regardless of the public model used, the two-stage strategy consistently outperforms the single-stage strategies when considering both efficiency and prediction performance. Although the single-stage DM strategy achieves comparable prediction performance, its considerably higher runtime renders it less advantageous.

\subsection{Power of the SH synthesis strategy and KRR-based synthesis strategy}\label{subsec: why krr}

In this section, we first evaluate the first-stage synthesis strategies in terms of efficiency, maintenance of statistical information, and advantages in the privacy–prediction trade-off, highlighting the power of the SH strategy. Then, by fixing SH as the first-stage synthesis strategy, we compare different model-based synthesis strategies to demonstrate the advantages of the KRR-based synthesis strategy.

\vspace{-0.3cm}
\begin{figure}[H]
\centering
\setlength{\subfigcapskip}{-1.2em}
\subfigure[Comparison of different synthetic methods]{\includegraphics[scale=0.4]{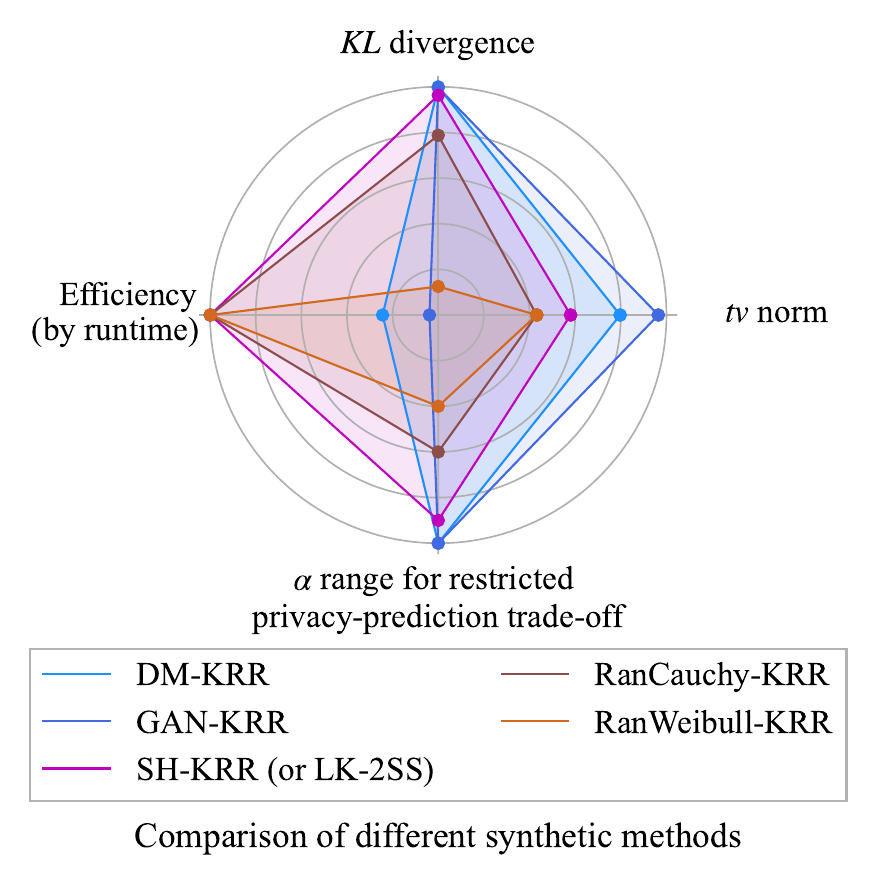}
\label{subfig: MSE_diff_method_radar}} 
\hspace{0.4cm}
\setlength{\subfigcapskip}{-1.2em}
\subfigure[Statistical retention]{\includegraphics[scale=0.4]{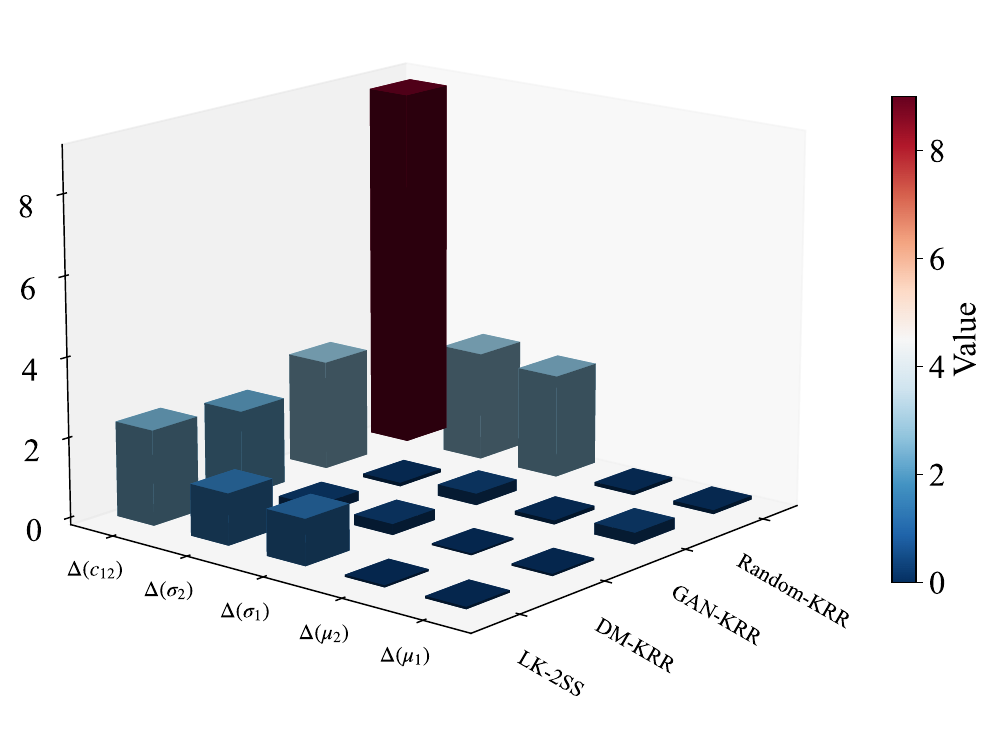}
\label{subfig: statistics_bar_DM_GAN}} 
\caption{Comparison of synthesis strategies with different first-stage methods. In Figure \ref{subfig: statistics_bar_DM_GAN}, $\Delta\mu_j=\frac{|\mu_j - \mu_j^{}|}{|\mu_j|}$, $\Delta\sigma_j=\frac{|\sigma_j - \sigma_j^{}|}{|\sigma_j|}$, and $\Delta c_{jk}=\frac{|cov(x^{(j)}, x^{(k)}) - cov(x^{(j),}, x^{(k),})|}{|cov(x^{(j)}, x^{(k)})|}$, where $j, k=1,\dots,d$, $\mu_j$ and $\sigma_j$ denote the mean and variance of the $j$-th dimension of the data, respectively, and $cov(x^{(j)}, x^{(k)})$ means the covariance between the $j$-th and $k$-th dimensions.} 
\label{fig: diff_method_and_SH}
\end{figure}
\vspace{-0.5cm}

As shown in Figure~\ref{fig: diff_method_and_SH}, SH-KRR achieves better statistical retention, shorter runtime, and a wider adjustable range of $\alpha$ enabling the restricted privacy–prediction trade-off. Figure \ref{fig: diff_method_and_SH} also shows that the SH strategy is comparable to popular GAN- and DM-based methods in both single-attribute and inter-attribute statistical retention, further highlighting the power of the SH strategy.

Figure~\ref{fig:y_simulation_s2g1_compare} presents the prediction performance on the public dataset $D^{\prime}$ and the combined dataset $D' + D^*$, where the synthetic data $D^*$ is generated using KRR-, NWK-, AdaBoost-, and RF-based synthesis strategies, respectively. The results show that, regardless of the public model employed, the KRR-based synthesis strategy consistently delivers greater improvements in prediction accuracy with smaller fluctuations compared to other model-based synthesizers, underscoring its power within our two-stage approach.

\vspace{-0.3cm} 
\begin{figure}[H]
	\centering
	\setlength{\subfigcapskip}{-0.8em}
	\subfigure[MSE of $\#$KRR]{\includegraphics[scale=0.33]{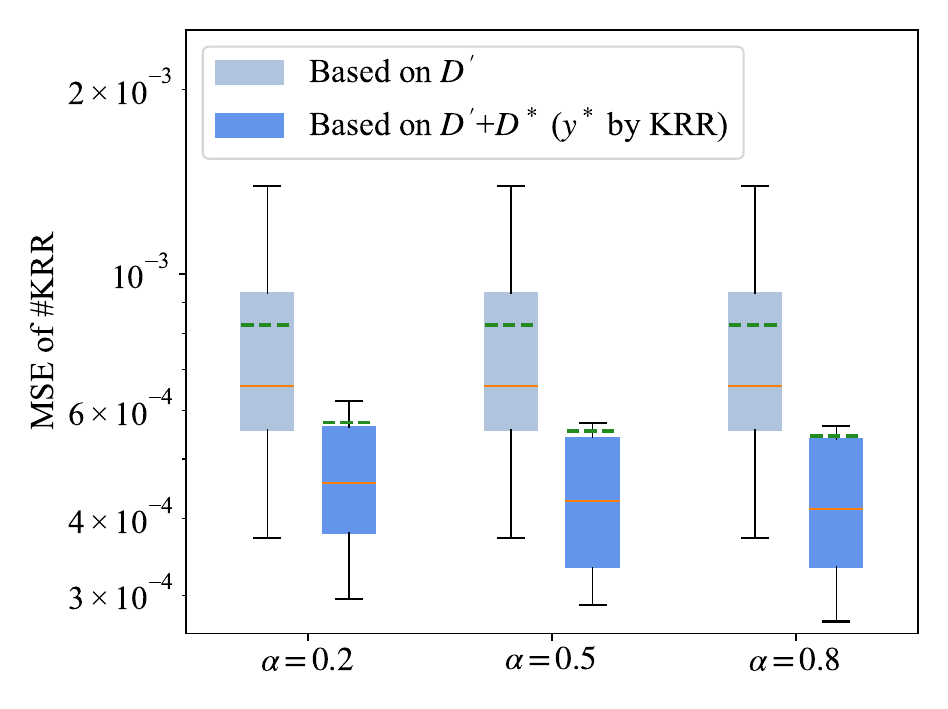}
		\label{subfig: s2g1_yKRR}} 
	\setlength{\subfigcapskip}{-0.8em}
	\subfigure[MSE of $\#$NWK(gaussian)]{\includegraphics[scale=0.33]{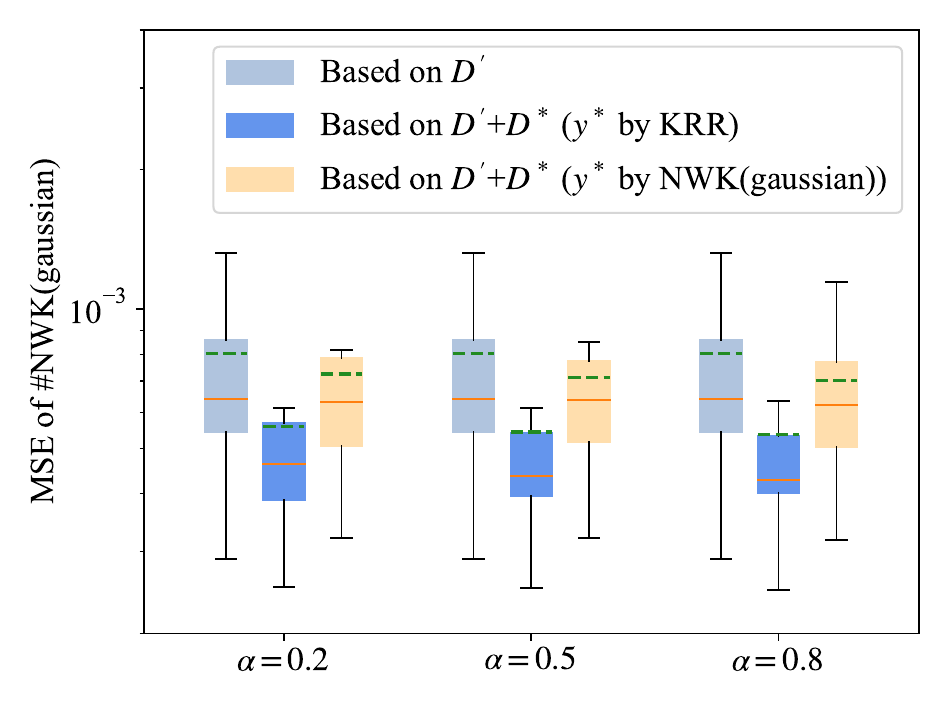}
		\label{subfig: s2g1_ynwk}} 
        	\setlength{\subfigcapskip}{-0.8em}
	\subfigure[MSE of $\#$AdaBoost]{\includegraphics[scale=0.33]{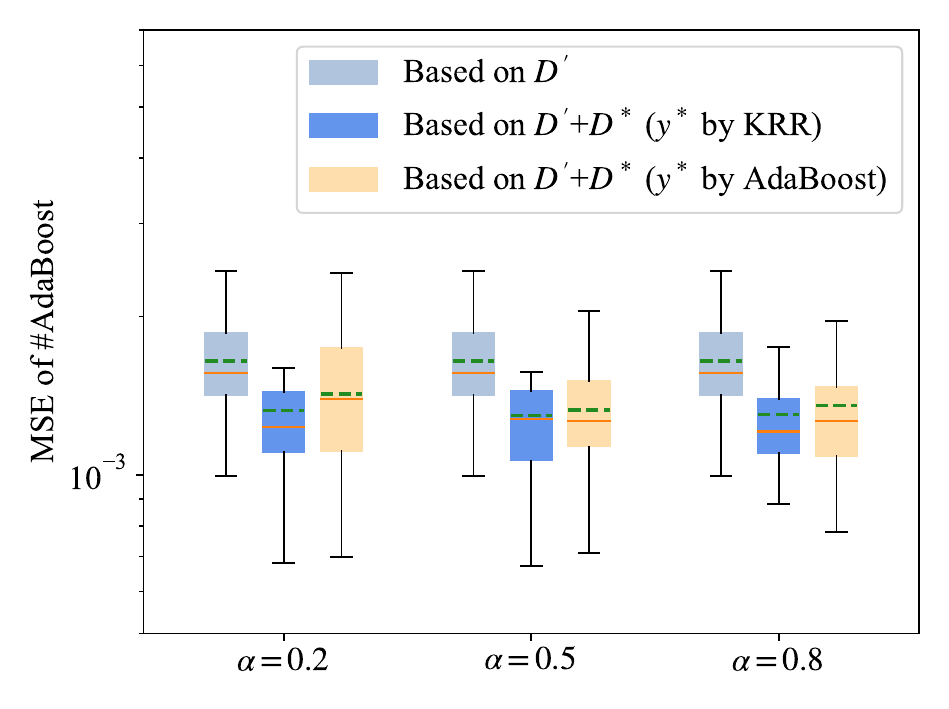}
		\label{subfig: s2g1_yAdaBoost}} 
        	        	\setlength{\subfigcapskip}{-0.8em}
	\subfigure[MSE of $\#$RF]{\includegraphics[scale=0.33]{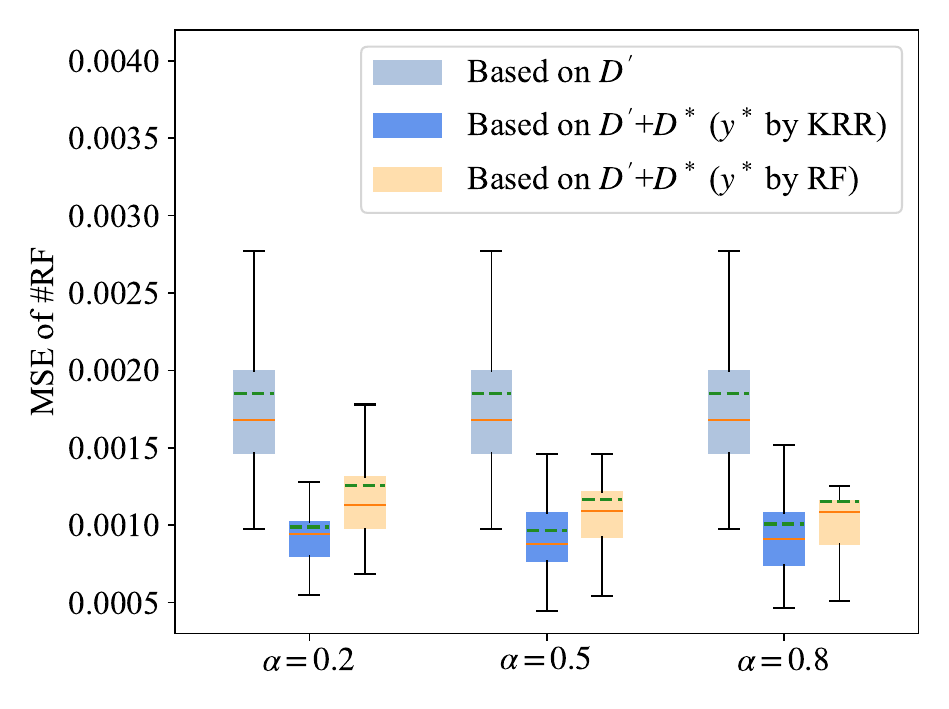}
		\label{subfig: s2g1_yrf}} 
	\caption{Prediction performance on different datasets. $D^{\prime}$ means the public's own dataset,
		$D^*$ means the anonymized dataset with $\textbf{y}^*$ generated by specific model-based synthesis strategy.} 
	\label{fig:y_simulation_s2g1_compare}
\end{figure}
\vspace{-0.5cm}

\section{Application of LK-2SS Strategy}\label{sec: Application}

In this section, we assess the generalizability of the LK-2SS strategy in diverse prediction tasks: (1) a price–sale prediction task in marketing using a log-log regression model \citep{anand2023using, schneider2018flexible}, and (2) five real-world data analyses where the underlying regression functions are unknown.

\subsection{Price-sale prediction}

For the price-sale prediction, we adopt the model:
$$
\ln S_{i',j,t'}=\mu_j+\delta_{i',j}+\beta_j \ln P_{i',j,t'}+\epsilon_{i',j,t'},
$$
which generates the sales data $S$ for brands $(j)$ over 52 weeks $(t')$ for a given number of customers $(i')$, using parameters $\mu_j$, $\delta_{i',j}$, price elasticities $\beta_j$, and prices $P_{i',j,t'}$ from \citep{anand2023using}. The noise term \(\epsilon_{i,j,t}\) is drawn from \(\mathcal{N}(0,0.5)\) following \citep{schneider2018flexible}.
The resulting dataset is \(D = \{(P_{i,j,t}, S_{i,j,t})\}\).
We define the adjusted input and output as $x_{i,j} = \ln P_{i',j,t'}$ and $y_{i,j} = \ln S_{i',t'} - \mu_j - \delta_{i',j}$, respectively. The coefficient $\hat\beta_j$ is then estimated via least squares regression, $y_{i,j} = \beta_j x_{i,j}$, and its prediction accuracy is evaluated using three marketing metrics—OMU, OPR, and MAPD—as specified in Table \ref{Tab: marketing_metrics}.

\vspace{-0.2cm}
\begin{table}[H]
	\renewcommand\arraystretch{1.4}
	\footnotesize
	\begin{center}
		\scalebox{0.8}{ 
			\begin{tabular}{c|c|c|c||c|c|c||c|c|c}
				\hline
				& \multicolumn{9}{c}{\textbf{Metrics for coefficient $\beta_j$ in marketing}} \\ 
				\hline
				\multirow{2}{*}{}  & \multicolumn{3}{c||}{{\makecell[c]{Optimal mark-up (OMU)\\OMU = $\frac{1}{\left|\beta_j\right|-1} \times 100 \%$}}} 	
				& \multicolumn{3}{c||}{{\makecell[c]{Optimal profit ratio (OPR)\\OPR = $\frac{\beta_j+1}{\hat{\beta_j}+1}\left(\frac{\beta_j+1}{\hat{\beta_j}+1} \frac{\hat{\beta_j}}{\beta_j}\right)^\beta_j$}}}  
				& \multicolumn{3}{c}{{\makecell[c]{Mean absolute
percentage deviation (MAPD)\\MAPD$=\frac{1}{J} \sum_{j=1}^J\left|\frac{\hat{\beta}_j-\beta_j}{\beta_j}\right| \times 100 \%$}}} 
				\\
				\cline{2-10}
				Information	&	 Given $\beta_j$ & Public's $D'$  & Combined $D'+D^*$ & Given $\beta_j$ & Public's $D'$  & Combined $D'+D^*$ &  Given $\beta_j$ & Public's $D'$  & Combined $D'+D^*$   \\
				\hline
				\textit{Brand 1}	&  200.00\%  &   185.07\% & \textbf{198.81\%} & 100.00\%    &99.90\% & \textbf{100.00\%} &  \multirow{5}{*}{0.00\%}& \multirow{5}{*}{2.63\%} & \multirow{5}{*}{\textbf{0.12\%}} \\
				\cline{1-7}
				\textit{Brand 2}	&  142.86\%  &   133.23\%  &  \textbf{142.48\%} & 100.00\%    & 99.90\% & \textbf{100.00\%} &  &  &  \\
				\cline{1-7}
				\textit{Brand 3}	&  99.01\%  &   94.56\%  &  \textbf{98.82\%}& 100.00\%    & 99.95\% &\textbf{100.00\%} &   &  &  \\
				\cline{1-7}
				\textit{Brand 4}	&  102.04\%  &   97.29\%  &  \textbf{101.71\%}& 100.00\%    & 99.94\% & \textbf{100.00\% }&   &  &  \\
				\cline{1-7}
				\textit{Brand 5}	&  111.11\%  &   105.10\%  & \textbf{111.08\%} & 100.00\%    & 99.93\% &\textbf{100.00\%} &   &  &  \\
				\hline
		\end{tabular}}
		\vspace{3pt} 
		\footnotesize
		\begin{tabular}{@{}l@{}}
			\multicolumn{1}{@{}p{0.96\linewidth}@{}}{
				\scriptsize
				\textbf{Note}: 
				The price elastitices $\beta_j$s are taken from \citep{anand2023using}.  
			}
		\end{tabular}
	\end{center}
	\caption{Evaluation of LK-2SS in price-sale prediction task in marketing}
	\label{Tab: marketing_metrics}
\end{table}
\vspace{-0.5cm}

Table \ref{Tab: marketing_metrics} summarizes the results for five brands, with the hybrid parameter $\alpha$ fixed at 0.2, corresponding to a high level of privacy preservation (LID = 4.08\%). We find that for each brand, after adding the synthetic data \(D^*\), all three marketing metrics improve (highlighted in bold in the table), becoming closer to the metrics derived from the given price elasticities \(\beta_j\). 
These findings demonstrate that managers can confidently rely on data generated by the LK‑2SS strategy to make pricing decisions with high profits.

Figure~\ref{fig: Price-sale prediction} illustrates how the three marketing metrics vary with the size of the public dataset $D'$. Taking $\Delta\text{MAPD}$ as an example, which is defined as $\Delta\text{MAPD} = \frac{\text{MAPD}_{D_2}-\text{MAPD}_{D_1}}{\text{MAPD}_{D_1}}$, where $D_1$ and $D_2$ represent the most recent and current datasets, respectively ($\Delta\text{OMU}$ and $\Delta\text{OPR}$ are defined similarly), we observe that the inclusion of synthetic data improves prediction performance; moreover, the smaller the public dataset $D'$, the more pronounced the improvement in the marketing metrics due to the synthetic data.

 \vspace{-0.3cm}
\begin{figure}[H]
	\centering
	\includegraphics[scale=0.3]{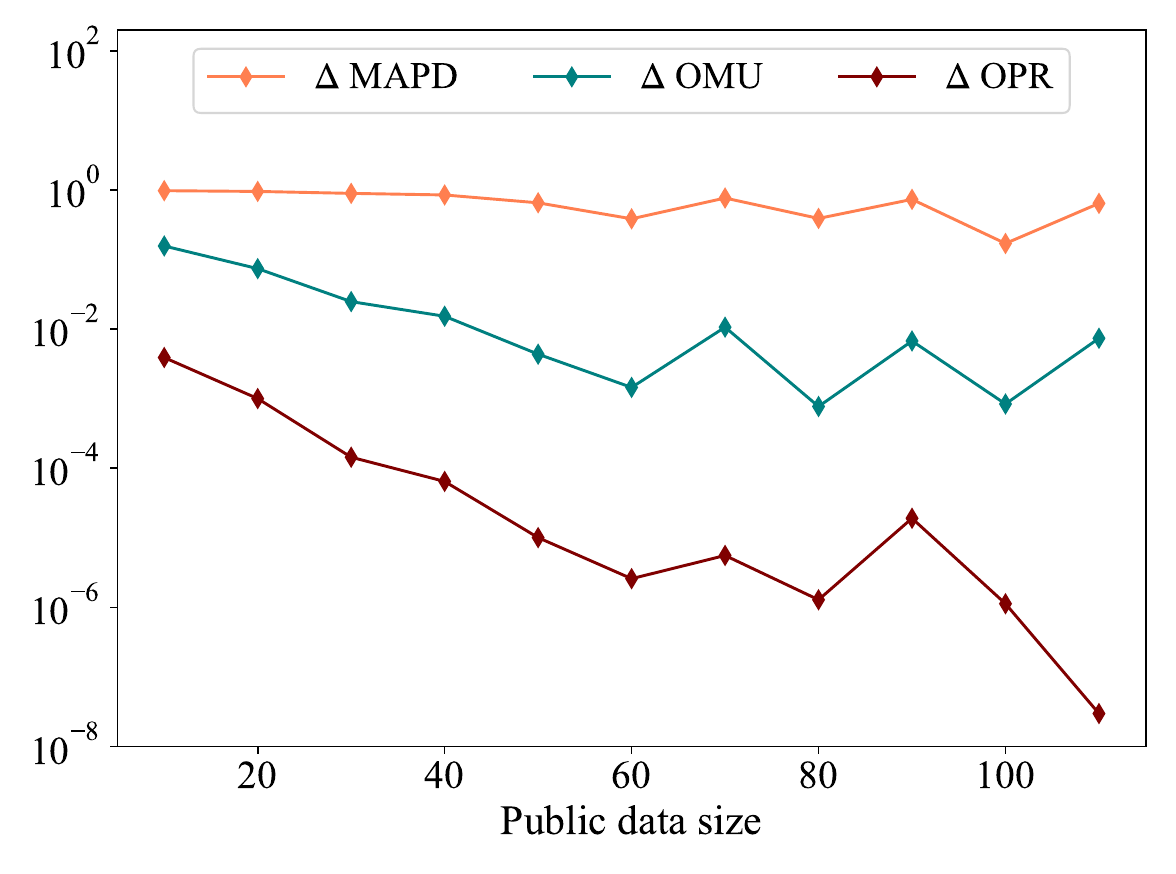}
	\caption{Results of the three marketing metrics vary with the size of the public’s available data $D'$}
	\label{fig: Price-sale prediction}   
	\vspace{-0.1in} 
\end{figure}

\subsection{Real-world data analysis}

In this section, we evaluate the LK-2SS strategy using five datasets—Insurance, EIA, California Housing Prices (CHP), Census, and Tarragona—sourced from \citep{li2006tree, domingo2010hybrid} and Kaggle, which are commonly employed in the privacy-preserving data sharing literature. To simulate a real-world scenario, we assume that data providers have prespecified privacy requirements, while data users have prediction objectives. Detailed descriptions of the datasets and experimental settings are provided in Appendix D.2 due to space constraints.

Figure \ref{fig:real_data_analysis} summarizes the results. In Figure \ref{subfig: real_data_analysis_LID}, $\alpha$ is set to 0.5 to meet the privacy requirements, and the corresponding LID values are all below the specified thresholds. Figure \ref{subfig: real_data_analysis_MSE} shows that incorporating synthetic data improves prediction performance and satisfies the public’s prediction requirements. Together, these results demonstrate the effectiveness of the LK-2SS strategy in preserving privacy while guaranteeing prediction performance.

\vspace{-0.2cm} 
\begin{figure}[H]
\centering
\subfigure[Privacy preservation]{\includegraphics[scale=0.38]{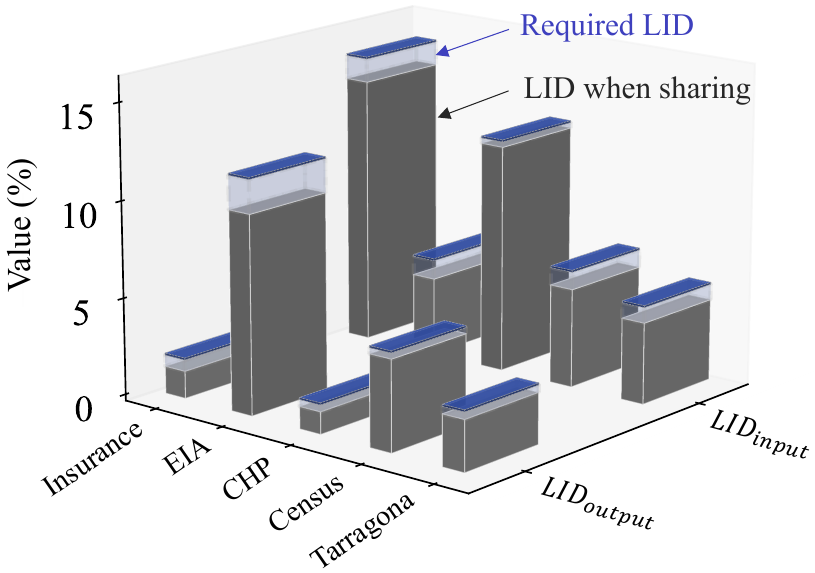}
\label{subfig: real_data_analysis_LID}} 
\subfigure[Prediction performance
]{\includegraphics[scale=0.3]
{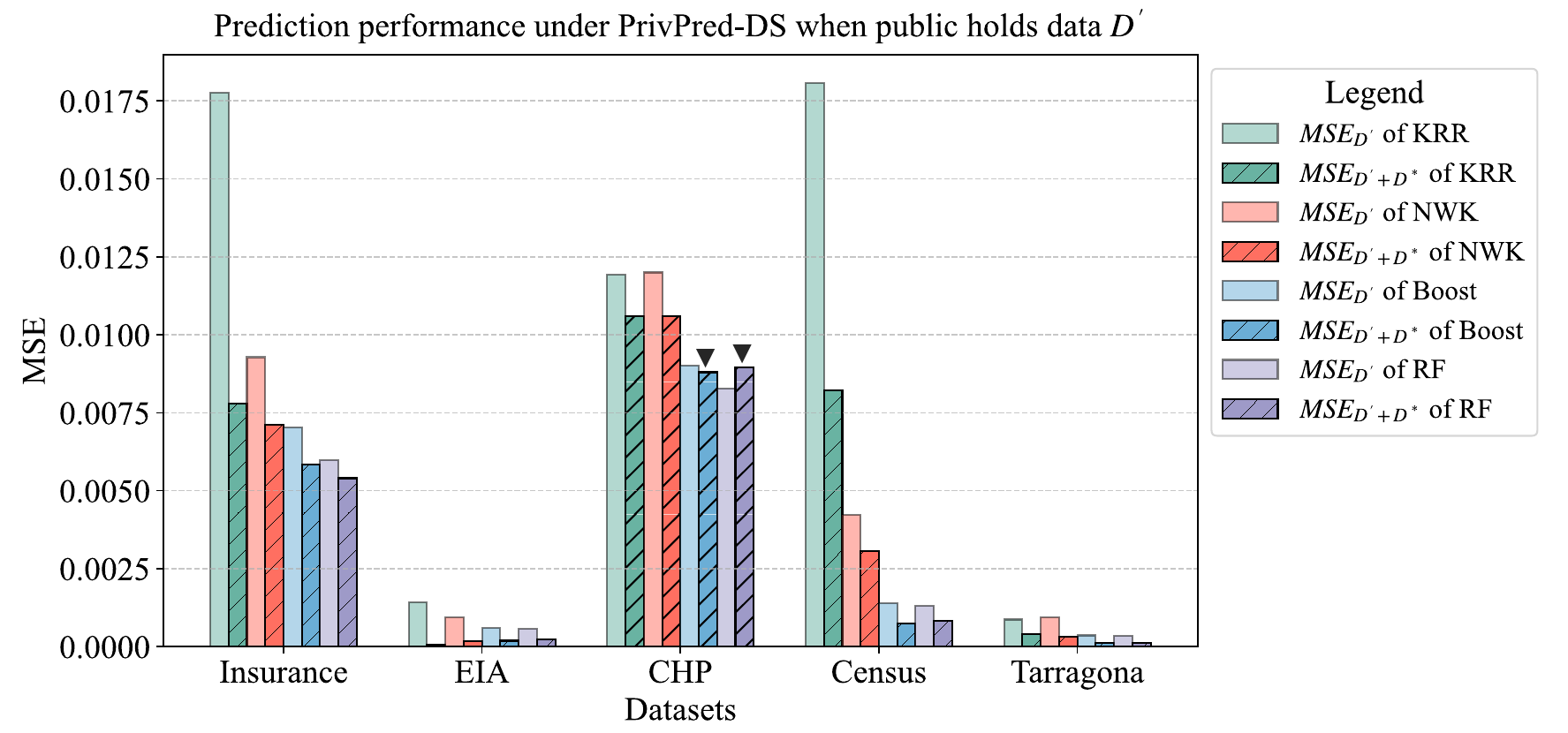}
\label{subfig: real_data_analysis_MSE}} 
\caption{Evaluation of LK-2SS on real-world datasets. ``$\blacktriangledown$'' marks the instances that did not meet the public's required MSE.  In such cases, using a model with better prediction performance as the synthesizer may be a good choice.
} 
\label{fig:real_data_analysis}
\end{figure}
\vspace{-0.5cm}

\section{Multi-Scenario Evaluations}\label{sec: Multi-Scenario}

This section further evaluates the generalizability of LK-2SS across multiple scenarios: (1) evaluation without prior access to external data, and (2) evaluation under distributional mismatches.

\subsection{Evaluation without prior access to external data}

The settings for this simulation are summarized in Table \ref{tab:Experiment Settings}, where $|D'| = 0$. Figure \ref{subfig: real_data_summary_no_data} shows that, for most datasets, the $MSE_{D^*}$ values are very close to the $MSE_D$ values and satisfy the public's prediction requirements, demonstrating the effectiveness of LK-2SS in maintaining prediction performance. Notably, in a few cases, $MSE_{D^*}$ fails to meet the requirements (indicated by the symbol ``$\blacktriangledown$''), which can be attributed to the limited prediction capability of the KRR model on these datasets. In such situations, employing a synthesizer with better prediction performance may be a more suitable choice.

Figure \ref{subfig: sce1_10per_mse_new} reports the results for the price–sale prediction task. For $\#$KRR and $\#$NWK, $MSE_{D^*}$ is nearly identical to $MSE_D$. Interestingly, for $\#$AdaBoost and $\#$RF, $MSE_{D^*}$ is even lower than $MSE_D$, which can be attributed to the high-quality and noise-free nature of the synthetic data $D^*$ generated by LK-2SS. These findings collectively confirm the effectiveness of LK-2SS in producing high-quality synthetic data that ensures reliable prediction performance.

\vspace{-0.2cm} 
\begin{figure}[H]
	\centering
        	\setlength{\subfigcapskip}{-0.8em}
	\subfigure[Real-world data]{\includegraphics[scale=0.3]{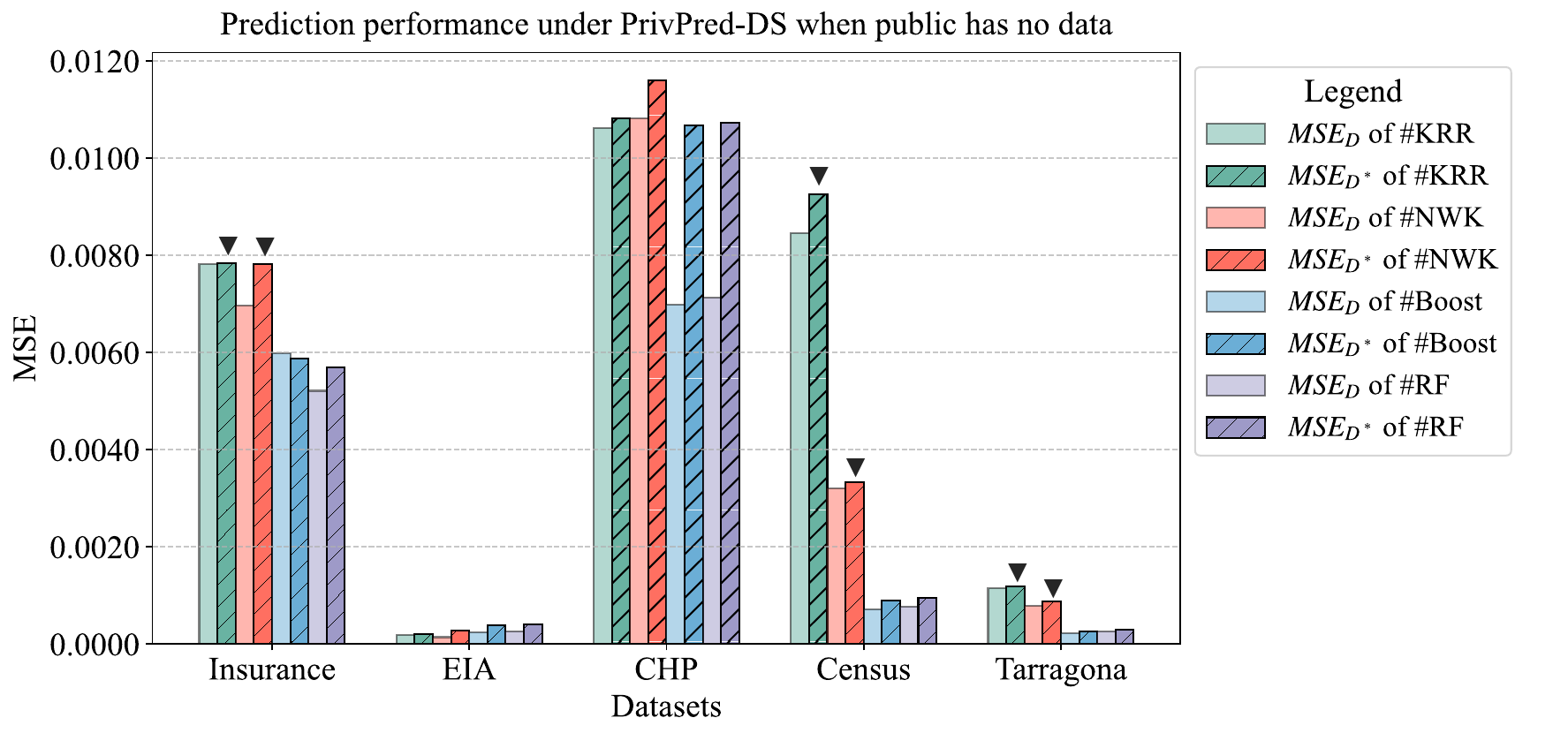}
		\label{subfig: real_data_summary_no_data}} 
        	\hspace{0.4cm}
        \setlength{\subfigcapskip}{-0.8em}
	\subfigure[Price-sale prediction data]{\includegraphics[width=5.1cm,height=4.2cm]{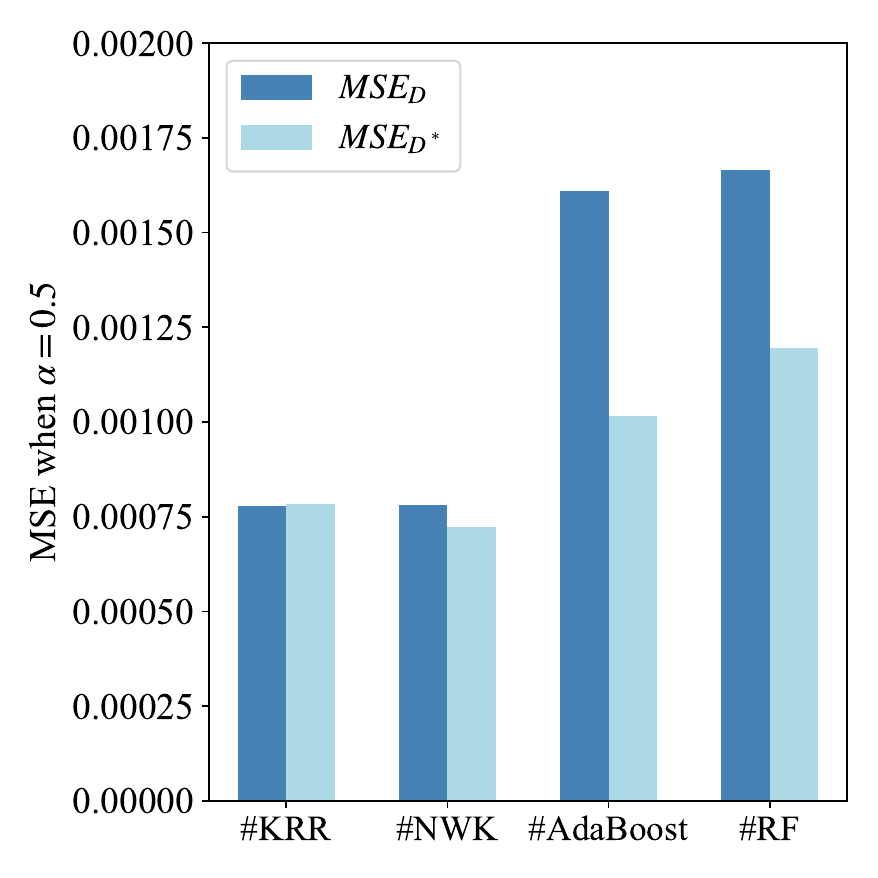}
		\label{subfig: sce1_10per_mse_new}} 
	\setlength{\subfigcapskip}{-0.8em}
	\caption{Prediction performance of different models on  the original data $D'$ and the synthetic data $D^*$.} 
	\label{fig:Additional Experiments 1}
\end{figure}
\vspace{-0.5cm}

\subsection{Evaluation under distribution mismatch}

This simulation considers a nonlinear prediction task with $\alpha$ fixed at 0.5 to ensure privacy preservation. Details of the distribution mismatch setting are provided in the Appendix D.1.

\vspace{-0.3cm}   
\begin{figure}[H]
	\centering
    \setlength{\subfigcapskip}{-0.8em}
	\subfigure[$\Delta$MSE under various $tv$ norms]{\includegraphics[width=5.3cm,height=4.2cm]{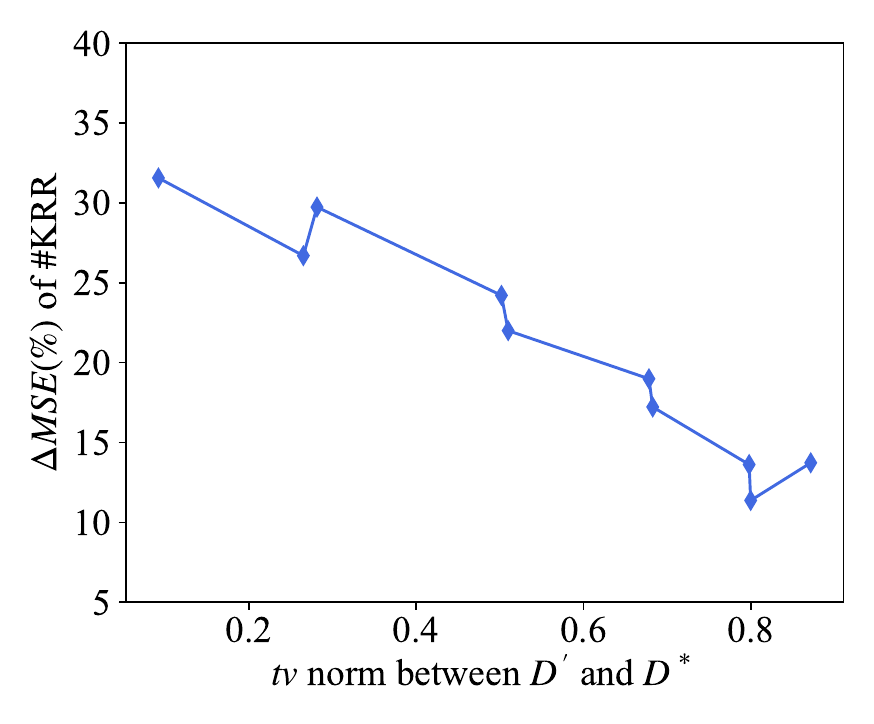}
		\label{subfig: simulation3_DeltaMSE_TV_sce2}} 
                	\hspace{1.0cm}
	\setlength{\subfigcapskip}{-0.8em}
    \subfigure[MSE under various $tv$ norms]{\includegraphics[width=5.3cm,height=4.2cm]{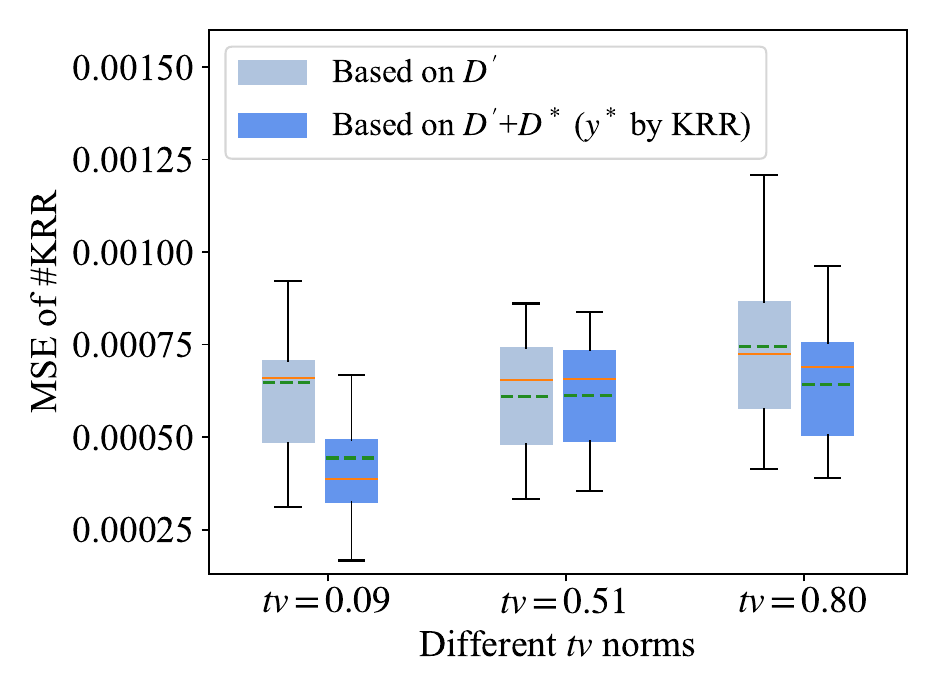}
		\label{subfig: simulation3_boxplot_sce2_krr}} 	
	\caption{Results under distribution mismatch.} 
	\label{fig:Additional Experiments 2}
\end{figure}
\vspace{-0.5cm} 

Figure \ref{subfig: simulation3_DeltaMSE_TV_sce2} illustrates the relationship between \(\Delta\mathrm{MSE}\) and the $tv$ norm, showing that although \(\Delta\mathrm{MSE}\) decreases as the $tv$ norm increases, it remains substantially above zero.  
Figure \ref{subfig: simulation3_boxplot_sce2_krr} further compares $MSE_{D' + D^*}$ and $MSE_{D'}$ under different $tv$ norms, demonstrating that $MSE_{D' + D^*}$ is consistently lower than $MSE_{D'}$.  
Together, these results underscore the high quality of the data generated by LK-2SS in guaranteeing prediction performance even under distribution mismatches.

\section{Conclusions}\label{Conclusions}

In recent years, there has been a growing demand for input–output numerical synthetic data that performs well in real-world downstream prediction tasks. However, existing synthesis strategies primarily focus on maintaining statistical information. While some approaches consider prediction performance, they typically do not incorporate it into the design of the synthesis strategy, treating prediction merely as an additional utility metric after addressing their primary focus—the privacy–statistics trade-off. Moreover, the single-stage design of these methods makes it difficult to simultaneously meet privacy requirements, which necessitate significant distributional perturbations, with utility requirements, which demand minimal disruption to the training data distribution to ensure reliable prediction performance.

Motivated by these observations, this paper proposes a two-stage synthesis strategy capable of generating synthetic data that achieves optimal prediction performance. Specifically, in the first stage, we introduce a synthesis-then-hybrid strategy, which first generates pure synthetic inputs through a synthesis operation and then fuses the synthetic data with the original data via a hybrid operation. Compared to traditional pure synthesis strategies, the synthesis-then-hybrid approach introduces a hybrid operation that enables a flexible privacy–statistics trade-off, allowing the distributional gap between the synthetic and original data to be adjusted to meet privacy  requirement and distribution requirement necessary for guaranteeing prediction.
In the second stage, we propose a KRR-based synthesis strategy, which trains a KRR model on the original data and applies it to the synthetic inputs to produce synthetic outputs. Leveraging the covariant distribution retention ensured in the first stage and the theoretical strengths of KRR, the proposed two-stage synthesis strategy achieves a statistics-driven restricted privacy–prediction trade-off while guaranteeing optimal prediction performance.
It is precisely the two-stage design and the introduction of the hybrid operation that enable the realization of a flexible privacy–prediction trade-off.

Our work also points to several possible extensions: (1) We focus exclusively on numerical data and on reconstructing regression relations to guarantee prediction performance. Future research could develop synthesis strategy for categorical data and explore the classification performance.
(2) Based on the proposed strategy, data providers may be motivated to share  data for collaborative model training. Further research is needed to develop effective synthetic data aggregation strategies,
particularly when data held by different parties exhibit distributional heterogeneity.



\bibliographystyle{informs2014} 
\bibliography{ref}

\begin{APPENDICES}
\newpage

\section{Notation and Illustrative Figures} 
 Table \ref{tab:symbols} shows the symbols used in this paper.

 \begin{table}[htbp]
 	\centering
 	\renewcommand{\arraystretch}{1.2}
 	\small
 	\scalebox{0.8}{ 
 		\begin{tabular}{ll}
 			\hline
 			\textbf{Symbol} & \textbf{Description} \\
 			\hline
 			$D := \{(x_i, y_i)\}_{i=1}^{|D|}$ & Original data held by the data provider, with input $X = \{x_i\}_{i=1}^{|D|}$ and output $y = \{y_i\}_{i=1}^{|D|}$. \\
 			$D' := \{(x'_i, y'_i)\}_{i=1}^{|D'|}$ & Data held by the data user (for training) \\
 			$D^t$ & Data held by the data user (for testing) \\
 			$x^{(j)}$ & The $j$-th column of $X$ \\
 			$\mu_j:=\frac{1}{|D|} \sum_{i=1}^{|D|} x_i^{(j)}$ & Mean of the $i$th attribute of $x^{(j)}$ \\
 			$\sigma_j:=\frac{1}{|D|} \sum_{i=1}^{|D|} \left(x_i^{(j)} - \mu_j\right)^2$ & Variance of the $i$th attribute of $x^{(j)}$ \\
 			$C:=\frac{1}{|D|} \sum_{i=1}^{|D|} x_i \otimes x_i$ & Covariance matrix of $X$ \\
 			$c_{jk}$ & Covariance between $x^{(j)}$ and $x^{(k)}$; the $(j,k)$ entry of the covariance matrix $C$\\
 			$\mathbb{D}_{\mathrm{KL}}(P \| Q)$ & KL divergence of the probability density function of distribution $P$ (or $Q$) \\
 			$\otimes$ & the outer product\\
 			PDF & Probability density function \\
 			CDF & Cumulative distribution function \\
 			ICDF & Inverse cumulative distribution function \\
 			$\eta$ & Attribute attack tolerance of the adversary \\
 			$D^* := \{(x_i^*, y_i^*)\}_{i=1}^{|D|}$ & Anonymized version of $D$ for sharing under the anonymization operator $*$ \\
 			$\alpha$ & Hybrid parameter \\
 			$S$ & Pure synthetic data \\
 			$D^*_\alpha := \{(x_{i,\alpha}, y_{i,\alpha})\}_{i=1}^{|D|}$ & Anonymized version of $D$ for sharing under PPDP-DS \\
 			\hline
 	\end{tabular}}
 	\vspace{3pt} 
 	\caption{Summary of symbols used in this paper.}
 	\label{tab:symbols}
 \end{table}

\section{Algorithms}

Algorithm 2 details the  strategy for single-attribute statistics maintenance.
In Step 1(1), $\lambda_j$ in \eqref{kde} represents the bandwidth and is determined via five-fold cross-validation from the range $\{0.05, 0.1, \dots, 2\}$.  
In Step 2(1), taking the $j$-th column as an example, LHS first partitions the range $[0,1]$ into $|D|$ strata of equal probability $1/|D|$, selecting one value from each stratum according to a specific rule (e.g., choosing the center of each stratum as in this paper). The selected values are then shuffled to form the sample set $v_1^{(j)}, \dots, v_n^{(j)}$. It is important to ensure that this shuffling removes perfect correlations  \citep{iman1982distribution}. Let $v^{(j)}=(v_1^{(j)}, \dots, v_{|D|}^{(j)})^{\top}$. We repeat the LHS procedure $d$ times to obtain $V=(v^{(1)}, \dots, v^{(d)})$.

	\vspace{0.5cm}
	\hrule
	\vspace{0.08in}
	\textbf{Algorithm 2: Single-attribute statistics retention Strategy}
	\vspace{0.08in}
	\hrule
	\vspace{0.1in}
	\textbf{Input}:  Original dataset $X=\{{x_i}\}_{i=1}^{|D|}$ with each column $x^{(j)}$  follows the distribution  $\rho_j$, the attack tolerance $\eta$, and the LID threshold. \par
	
	\textbf{1. Extract statistical properties of $X$:} 
	
	\quad(1) Estimate the PDF of $j$-th column  of $X$ by kernel density estimation:
	\begin{equation}\label{kde}
	\hat{\rho}_{j}\left(x\right)=\frac{1}{|D|} \sum_{k=1}^{|D|} e^{-\frac{1}{2}\left(\left\|x-x_{k}^{(j)}\right\| / \lambda_j\right)^2}.
	\end{equation}
	
	\quad(2) Calculate the CDF $\hat{F}_j(x)$ by Gauss–Legendre quadrature:
	\begin{equation}\label{Gauss-legendre}
	\int_0^1\hat{\rho}_{j}(z)dz\cong\sum_{l=0}^{m}A_l\hat{\rho}_j(z_l),
	\end{equation}
	where $m$ is the number of sample points used, $z_l$ are calculated from the $m^{th}$ Legendre polynomial, and $A_l$ are quadrature weights.
	
	\quad(3) Calculate the ICDF: 
	\begin{equation}\label{icdf_equation}
	\hat{F}_j^{-1}(q) = \inf\{x \in \mathbb{R}: \hat{F}_j(x) \geq q\},
	\end{equation}
	where $q\in[0,1]$.
	
	\textbf{2. Synthetic Step: generate pure synthetic data $S$:}

	\quad(1) Generate $V=(v^{(1)}, \dots, v^{(d)})$ using LHS sampling, with each $v^{(k)}, k=1,\dots,d$ follows uniform distribution $\mathcal{U}(0,1)$.

	\quad(2) Obtain  $S=(\hat{F}_1^{-1}(v^{(1)}), \dots, \hat{F}_d^{-1}(v^{(d)}))$.
	
	\textbf{3. Hybrid Step: combine  original data and  pure synthetic data:}
	
	\quad(1) Denote  $S=\{s_i\}_{i=1}^{|D|}$.
	For each $x_i$, calculate $s_{pi}= \underset {s_i \in S-S_p}{\arg \min}\|s_i-x_i\|_2$, where $S_p=\{s_{pi}\}_{i=1}^{|S_p|}$
	is a set of elements $s_{pi}$. Then, obtain the matrix $S_p=(s_{pi}, \dots, s_{pn})^{\top}$.
	
	\quad(2) Determine the hybrid parameter $\alpha$ based on its relationship with LID, e.g.,  $\alpha = 1-\frac{2\eta}{(1-(1-\text{LID})^{-d})}$ when $x_i^{(j)}$  follows $\mathcal{U}(0, 1)$. 
	
	\quad(3) Obtain  $X_\alpha^* = \alpha X + (1-\alpha)S_{p}$.

	\textbf{Output}:  Anonymized data $X^*_\alpha=\{x_{i,\alpha}\}_{i=1}^{|D|}$.
	
	\vspace{0.2cm}
	\hrule
		\vspace{0.08in}
	\parbox{0.98\textwidth}{
		\footnotesize Note: 
		In Step 1(2), we assume that $\hat{F}_j$ is strictly increasing and continuous (e.g., the CDF of a uniform or normal distribution).  
	}

\section{Proofs}\label{Appendix_C}
In this part, we present proofs of our theoretical results. 

\subsection{Operator representation and error decomposition}

Our analysis is carried out in the recently developed integral operator framework in \citep{lin2017distributed}, in which the sampling operator $S_D:\mathcal H_K\rightarrow \mathbb R^d$ given by $S_Df=(f(x_1),\dots,f(x_D))$ and its scaled adjoint $S^T_D:\mathbb R^d\rightarrow \mathcal H_K$ defined by $S_D^T{\bf c}=\frac1{|D|}\sum_{(x_i,y_i)}c_iK_{x_i}$ for ${\bf c}=(c_1,\dots,c_{|D|})^T$ play crucial roles, where $K_x=K(x,\cdot)$. If $L_K$ is an integral operator given from Assumption 
\ref{Assumption:regularity_and_effective_dimension} from $\mathcal H_K\rightarrow\mathcal H_K$, then $L_{K,D}:=S_D^TS_D$ can be regarded as an empirical approximation of $L_K$. In this section, we denote $L_K$ either $L^2_{\rho_X}\rightarrow L^2_{\rho_X}$ or $\mathcal H_K\rightarrow H_K$, if no confusion is made. Recall further that the kernel matrix $\mathbb K_{D}:=(K(x_i,x_j))_{i,j}^{|D|}$ satisfies $\mathbb K_{D}=S_DS_D^T$. It can be found in \citep{lin2017distributed} that 
\begin{equation}\label{KRR:operator}
     f^{KRR}_{D,\lambda}=(L_{K,D}+\lambda I)^{-1}S_D^Ty_D,
\end{equation}
where $ f^{KRR}_{D,\lambda}$ is given in \eqref{krr} and $y_D:=(y_1,\dots,y_{|D|})^T.$
Let $D^*_\alpha=\{(x_{i,\alpha},f_{D,\lambda(x_{i,\alpha})})\}_{i=1}^{|D|}$. We also derive 
  from \eqref{KRR:operator} that 
\begin{equation}\label{doule-KRR}
    f^{DK}_{D^*_\alpha,\lambda^*} 
    = 
    (L_{K,D^*_\alpha}+\lambda^* I)^{-1} S_{D^*_\alpha}^T S_{D^*_\alpha}f^{KRR}_{D,\lambda} 
     = 
 (L_{K,D^*_\alpha}+\lambda^* I)^{-1} L_{K,D^*_\alpha}(L_{K,D}+\lambda I)^{-1}S_D^Ty_D 
\end{equation}

and
\begin{equation}\label{doule-KRR-1}
    f^{DK}_{D,\lambda^*} = 
 (L_{K,D}+\lambda^* I)^{-1} L_{K,D}(L_{K,D}+\lambda I)^{-1}S_D^Ty_D.
\end{equation}
The discrepancy between  $f^{DK}_{D^*_\alpha,\lambda^*}$ and $f^{\star}$ is measured by the difference between $L_K$ and $L_{K,D}$ (or $L_{K,D^*_\alpha}$)
\begin{equation}\label{Def.SD}
\mathcal S_{D,\lambda,u,v}:=\|(L_K+\lambda I)^{-u}(L_K-L_{K,D})(L_K+\lambda I)^{-v}\| 
\end{equation}
\begin{equation}\label{def.Q}
\mathcal Q_{D,\lambda}:=
      \|(L_{K}+\lambda_jI)^{1/2}(L_{K,D}+\lambda I)^{-1/2}\|
\end{equation}
\begin{equation}\label{def.P}
  \mathcal P_{D,\lambda}:=
     \|(L_{K}+\lambda I)^{-1/2}(S_{D}^Ty_{D}-L_{K}f^{\star})\|_K.
\end{equation}


We then conduct an error decomposition strategies for $\|f^{DK}_{D^*_\alpha,\lambda^*}-f^{\star}\|_\rho$, which 
 is built upon the stability analysis from \cite[Lemma 13]{kohler2022total}.

\begin{lemma}\label{Lemma:total-stability}
For any $\lambda^*>0$, there holds 
\begin{equation}\label{total-stability}
		\left\|f^{DK}_{D^*_\alpha,\lambda^*} -f^{DK}_{D,\lambda^*}   \right\|_{\infty} 
		\leq \frac{2}{\lambda^*} \left\|\rho_X -  \rho_{X,\alpha}^*\right\|_{tv}
\end{equation}
where $\|\rho\|_{tv}$ denotes  norm of total variation of a signed
measure $\rho$.
\end{lemma}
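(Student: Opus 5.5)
The plan is to read Lemma~\ref{Lemma:total-stability} as a total-variation stability statement for the KRR map applied to a fixed target function. I will view both $f^{DK}_{D^*_\alpha,\lambda^*}$ and $f^{DK}_{D,\lambda^*}$ as population-type regularized least squares solutions. Both fit the same regression target $g:=f^{KRR}_{D,\lambda}\in\mathcal H_K$; they differ only in the input measure. The first uses the empirical measure of $X^*_\alpha$, whose population counterpart is $\rho^*_{X,\alpha}$. The second uses that of $X$, with population counterpart $\rho_X$.

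\textbf{Reduction to a generic stability estimate.} For a probability measure $\mu$ on $\mathcal X$, define $f_{\mu}:=\arg\min_{f\in\mathcal H_K}\int (f-g)^2d\mu+\lambda^*\|f\|_K^2$. Then $f_\mu=(L_{K,\mu}+\lambda^* I)^{-1}L_{K,\mu}g$, which matches the form in \eqref{doule-KRR} and \eqref{doule-KRR-1}. The first-order conditions for two measures $\mu,\nu$ are
\[
L_{K,\mu}(f_\mu-g)+\lambda^* f_\mu=0,\qquad L_{K,\nu}(f_\nu-g)+\lambda^* f_\nu=0 .
\]
Subtracting them gives
\[
(L_{K,\mu}+\lambda^* I)(f_\mu-f_\nu)=(L_{K,\nu}-L_{K,\mu})(f_\nu-g)
=\int (f_\nu-g)(x)K_x\,d(\nu-\mu)(x).
\]

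\textbf{Bounding the right-hand side.} Take the $\mathcal H_K$ inner product with $f_\mu-f_\nu$ and use positivity of $L_{K,\mu}$. This gives $\lambda^*\|f_\mu-f_\nu\|_K\le \|\int (f_\nu-g)K_x\,d(\nu-\mu)\|_K\le \kappa\,\|f_\nu-g\|_\infty\,|\nu-\mu|(\mathcal X)$. Then $\|\cdot\|_\infty\le\kappa\|\cdot\|_K$ converts this into a sup-norm bound. Here I assume the conventions of \cite[Lemma 13]{kohler2022total}: $\kappa\le 1$, $|g|\le 1$ (or $M$ normalized), and $\|\cdot\|_{tv}$ equal to the total mass of the signed measure. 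Under these, $\|f_\nu-g\|_\infty\le 2$ with $\|f_\nu\|_\infty\le \|g\|_\infty$, and the constant $2/\lambda^*$ results. The intended constant is probably obtained more cleanly by quoting \cite[Lemma 13]{kohler2022total} directly; so in practice I will verify its hypotheses (bounded kernel, bounded target, convex Lipschitz-type squared loss on a bounded range) and invoke it.

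\textbf{Main obstacle.} The delicate step is identifying the measures. The estimators use empirical measures of $X$ and $X^*_\alpha$, while the lemma is stated with the population distributions $\rho_X$ and $\rho^*_{X,\alpha}$. I would handle this by interpreting the KRR operators in \eqref{doule-KRR} as taken with respect to the hybrid sampling law, which is the convention used in the stability lemma. A second issue is the pairing of points: $x_{i,\alpha}$ is built from $x_i$, which suggests working with the coupling of the two samples. Under that coupling, the total variation of the empirical difference is controlled by that of the laws. I would therefore state and check these identifications explicitly rather than rederive any concentration argument.
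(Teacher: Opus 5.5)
\textbf{There is a genuine gap, at exactly the step you flag, and neither of your patches closes it.} Your first-order-condition computation is correct for two \emph{fixed} measures $\mu,\nu$ on $\mathcal X$. It yields $\|f_\mu-f_\nu\|_\infty\le \frac{\kappa^2}{\lambda^*}\|f_\nu-g\|_\infty\,|\mu-\nu|(\mathcal X)$, a self-contained substitute for the citation of \cite[Lemma 13]{kohler2022total}, which is all the paper offers.

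The problem is the identification of measures. On the left are the empirical KRR fits of $g=f_{D,\lambda}$ on the point sets $X^*_\alpha$ and $X$; on the right are population laws. With that reading the inequality is false as stated. For a perfect synthesizer, $\rho^*_X=\rho_X$, the right-hand side is $0$. Yet the two fits generically differ: for $|D|=1$ they are $\frac{g(x_1)}{K(x_1,x_1)+\lambda^*}K_{x_1}$ and $\frac{g(x_{1,\alpha})}{K(x_{1,\alpha},x_{1,\alpha})+\lambda^*}K_{x_{1,\alpha}}$.

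\emph{Coupling patch.} Applied honestly to the empirical measures, your estimate involves $|\mu-\nu|(\mathcal X)=\frac{2}{|D|}\#\{i:\ x_{i,\alpha}\neq x_i\}$ (for points in general position). Under the coupling the construction actually uses, $x_{i,\alpha}=\alpha x_i+(1-\alpha)s_{pi}$, one generically has $x_{i,\alpha}\neq x_i$ for every $i$ when $\alpha<1$. A small LID even forces $|x^{(j)}_{i,\alpha}-x^{(j)}_i|>\eta$ in every coordinate for most $i$. So this total variation takes its maximal value $2$ however close the laws are, and the bound collapses to $O(1/\lambda^*)$. A coupling under which the empirical total variation tracks $\|\rho_X-\rho^*_{X,\alpha}\|_{tv}$ would have to leave all but roughly a $\|\rho_X-\rho^*_{X,\alpha}\|_{tv}$-fraction of the records unchanged, i.e.\ publish them verbatim.

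\emph{Population reinterpretation.} Reading both estimators as population minimizers $f_{\rho_X}$ and $f_{\rho^*_{X,\alpha}}$ proves a different lemma. To feed it into Proposition~\ref{Prop:frist-decomposition} you would need extra sample-error terms $\|f^{DK}_{D^*_\alpha,\lambda^*}-f_{\rho^*_{X,\alpha}}\|_\infty$ and $\|f^{DK}_{D,\lambda^*}-f_{\rho_X}\|_\infty$, which the stated bound does not contain. The paper's bare citation of K\"ohler--Christmann hits the same obstruction when applied to the two empirical measures. This is a defect of the statement, not something a cleverer coupling can repair.

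Two further points. First, the population reading also presupposes that $\rho^*_{X,\alpha}$ is the law of the released inputs, and it is not. $\alpha\rho_X+(1-\alpha)\rho^*_X$ is a mixture, whereas $x_{i,\alpha}$ is a convex combination of points. Its law is the push-forward of the joint law of $(x_i,s_{pi})$ under $(x,s)\mapsto\alpha x+(1-\alpha)s$. For example, with independent $\mathcal U(0,1)$ coordinates and $\alpha=1/2$ the convex combination is triangular while the mixture is uniform. In addition, the nearest-neighbour matching without replacement makes the $x_{i,\alpha}$ dependent.

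Second, for the constant, $\|f_\nu\|_\infty\le\|g\|_\infty$ does not hold for KRR in general, because its equivalent-kernel weights can be negative and overshoot. Your own identity gives $f_\nu-g=-\lambda^*(L_{K,\nu}+\lambda^* I)^{-1}g$, hence $\|f_\nu-g\|_\infty\le\kappa\|g\|_K$. The factor $2$ therefore requires a normalization of $\kappa^3\|f_{D,\lambda}\|_K$, not of $\|g\|_\infty$.
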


Based on Lemma \ref{Lemma:total-stability}, we get the following error decomposition to bound the discrepancy between 
$f^{DK}_{D^*_\alpha,\lambda^*}$ and $f^{\star}$ via operator (or functional) differences.

\begin{proposition}\label{Prop:frist-decomposition}
For any $\lambda,\lambda^*>0$, there holds
\begin{align*}
\left\| f^{DK}_{D^*_\alpha, \lambda^*} - f^\star \right\|_\rho 
&\leq \frac{2}{\lambda^*} \left\| \rho_X - \rho_{X,\alpha}^* \right\|_{tv} \\
&\quad + Q_{D,\lambda^*} Q_{D,\lambda} 
\big( \mathcal{P}_{D,\lambda} + \mathcal{S}_{D,\lambda,1/2,1/2-r} \|h^\star\|_\rho \big)  + Q_{D,\lambda^*}^2 \lambda^* \lambda^{r-1} \|h^\star\|_\rho 
+ \lambda^r \|h^\star\|_\rho.
\end{align*}
\end{proposition}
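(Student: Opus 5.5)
The plan is to split the error with the triangle inequality,
\[
\|f^{DK}_{D^*_\alpha,\lambda^*}-f^\star\|_\rho\le \|f^{DK}_{D^*_\alpha,\lambda^*}-f^{DK}_{D,\lambda^*}\|_\rho+\|f^{DK}_{D,\lambda^*}-f^\star\|_\rho .
\]
For the first term, $\|\cdot\|_\rho\le\|\cdot\|_\infty$ since $\rho_X$ is a probability measure. Lemma \ref{Lemma:total-stability} then gives the $\frac{2}{\lambda^*}\|\rho_X-\rho^*_{X,\alpha}\|_{tv}$ term directly. All remaining work concerns $f^{DK}_{D,\lambda^*}$, which by \eqref{doule-KRR-1} equals $g_{\lambda^*}(L_{K,D})f_{D,\lambda}$, where $g_{\lambda^*}(L_{K,D}):=(L_{K,D}+\lambda^*I)^{-1}L_{K,D}$ and $f_{D,\lambda}:=f^{KRR}_{D,\lambda}$.

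For the second term I would decompose
\[
f^{DK}_{D,\lambda^*}-f^\star=g_{\lambda^*}(L_{K,D})\big(f_{D,\lambda}-f_\lambda\big)+\big(g_{\lambda^*}(L_{K,D})f_\lambda-f_\lambda\big)+(f_\lambda-f^\star),
\]
with $f_\lambda=(L_K+\lambda I)^{-1}L_Kf^\star$ the population regularized function. The last piece is standard: $\|f_\lambda-f^\star\|_\rho=\|\lambda(L_K+\lambda I)^{-1}L_K^r h^\star\|_\rho\le\lambda^r\|h^\star\|_\rho$ for $0<r\le1$.

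For the first piece, write $\|u\|_\rho=\|L_K^{1/2}u\|_K\le\|(L_K+\lambda^*I)^{1/2}u\|_K$. Inserting $(L_{K,D}+\lambda^*I)^{\pm1/2}$ and using $\|(L_{K,D}+\lambda^*I)^{1/2}g_{\lambda^*}(L_{K,D})(L_{K,D}+\lambda^*I)^{-1/2}\|\le1$, this yields the factor $\mathcal Q_{D,\lambda^*}\|(L_{K,D}+\lambda^*I)^{1/2}(f_{D,\lambda}-f_\lambda)\|_K$. Since $\lambda^*\le\lambda$, we have $\|(L_{K,D}+\lambda^*I)^{1/2}(L_{K,D}+\lambda I)^{-1/2}\|\le1$, which reduces this to $\|(L_{K,D}+\lambda I)^{1/2}(f_{D,\lambda}-f_\lambda)\|_K$. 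Now use the identity
\[
f_{D,\lambda}-f_\lambda=(L_{K,D}+\lambda I)^{-1}\big[(S_D^Ty_D-L_Kf^\star)+(L_K-L_{K,D})f_\lambda\big].
\]
Pulling out $(L_{K,D}+\lambda I)^{-1/2}(L_K+\lambda I)^{1/2}$ gives a factor $\mathcal Q_{D,\lambda}$ times $\mathcal P_{D,\lambda}+\|(L_K+\lambda I)^{-1/2}(L_K-L_{K,D})f_\lambda\|_K$. Writing $f_\lambda=(L_K+\lambda I)^{-1}L_K^{1+r}h^\star=(L_K+\lambda I)^{-(1/2-r)}\cdot[\text{bounded}]\,L_K^{1/2}h^\star$ produces the $\mathcal S_{D,\lambda,1/2,1/2-r}\|h^\star\|_\rho$ term. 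The function-space bookkeeping uses $\|L_K^{1/2}h\|_K=\|h\|_\rho$ and $\|(L_K+\lambda I)^{1/2-r}(L_K+\lambda I)^{-1}L_K^{1/2+r}\|\le1$. When $r>1/2$ the exponent $1/2-r$ is negative, and the factor $(L_K+\lambda I)^{r-1/2}$ is handled in the same way.

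The middle piece is $-\lambda^*(L_{K,D}+\lambda^*I)^{-1}f_\lambda$. Its $\rho$-norm is at most $\mathcal Q_{D,\lambda^*}\,\lambda^*\|(L_{K,D}+\lambda^*I)^{-1/2}f_\lambda\|_K$. Inserting $(L_K+\lambda^*I)^{1/2}$ costs a second $\mathcal Q_{D,\lambda^*}$, leaving $\lambda^*\|(L_{K,D}+\lambda^*I)^{-1/2}\|\,\|(L_K+\lambda^*I)^{1/2}(L_K+\lambda I)^{-1}L_K^{r}\,L_K^{1/2}h^\star\|$. Spectral calculus gives $\|(L_K+\lambda I)^{-1}L_K^{r}\|\le\lambda^{r-1}$.

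I expect this middle piece to be the main obstacle: the powers must be arranged so that exactly $\mathcal Q^2_{D,\lambda^*}\lambda^*\lambda^{r-1}$ emerges and no stray $(\lambda^*)^{-1/2}$ factor appears. I would first try writing $\lambda^*(L_{K,D}+\lambda^*I)^{-1}=\lambda^*(L_{K,D}+\lambda^*I)^{-1/2}(L_K+\lambda^*I)^{-1/2}\cdot(L_K+\lambda^*I)^{1/2}(L_{K,D}+\lambda^*I)^{-1/2}\cdots$ and bounding $\|(L_K+\lambda^*I)^{1/2}(L_K+\lambda I)^{-1}L_K^{r+1/2}\|$ by $\lambda^{r-1}\cdot\|(L_K+\lambda^*I)^{1/2}L_K^{1/2}\|$-type terms, absorbing constants depending on $\kappa$. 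If the constant becomes $\kappa$-dependent rather than $1$, that is acceptable given how the proposition is used in Theorem \ref{theorem:abc}. Summing the three pieces with the stability term gives the stated bound.
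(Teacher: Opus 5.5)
Your skeleton is the paper's. You use the triangle inequality with Lemma~\ref{Lemma:total-stability} and $\|\cdot\|_\rho\le\|\cdot\|_\infty$, then the same splitting of $f^{DK}_{D,\lambda^*}-f^\star$: your first piece, after the identity for $f_{D,\lambda}-f_\lambda$, is the paper's first two terms, and your middle and last pieces are its third and fourth. Your estimates giving $\mathcal Q_{D,\lambda^*}\mathcal Q_{D,\lambda}(\mathcal P_{D,\lambda}+\mathcal S_{D,\lambda,1/2,1/2-r}\|h^\star\|_\rho)$ and $\lambda^r\|h^\star\|_\rho$ are fine.

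The gap is the middle piece $-\lambda^*(L_{K,D}+\lambda^*I)^{-1}f_\lambda$, exactly where you expected trouble. Your first step, $\mathcal Q_{D,\lambda^*}\lambda^*\|(L_{K,D}+\lambda^*I)^{-1/2}f_\lambda\|_K$, is correct, but the second is not. Paying a second $\mathcal Q_{D,\lambda^*}$ through $(L_{K,D}+\lambda^*I)^{-1/2}(L_K+\lambda^*I)^{1/2}$ leaves $\|(L_K+\lambda^*I)^{-1/2}f_\lambda\|_K$, with a \emph{negative} half power and nothing else. The leftover you write instead keeps $\|(L_{K,D}+\lambda^*I)^{-1/2}\|\le(\lambda^*)^{-1/2}$ together with a positive power $(L_K+\lambda^*I)^{1/2}$. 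That does not follow from the insertion, and bounded as you propose it gives $(\lambda^*)^{1/2}\lambda^{r-1}$ (times $\kappa$-constants) instead of $\lambda^*\lambda^{r-1}$. The loss is the factor $(\lambda^*)^{-1/2}$, which is unbounded as $\lambda^*\to0$, so no $\kappa$-dependent constant can absorb it. Your alternative factorization fails for the same reason: its left factor, composed with $L_K^{1/2}$, again costs $(\lambda^*)^{-1/2}$.

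The missing observation is that the negative half power is exactly what you need. It absorbs one $L_K^{1/2}$ from $f_\lambda=(L_K+\lambda I)^{-1}L_K^{1/2+r}(L_K^{1/2}h^\star)$, where $\|L_K^{1/2}h^\star\|_K=\|h^\star\|_\rho$. Hence, for $0<r\le 1$,
\[
\lambda^*\|L_K^{1/2}(L_{K,D}+\lambda^*I)^{-1}f_\lambda\|_K\le\lambda^*\,\mathcal Q_{D,\lambda^*}^2\,\|(L_K+\lambda^*I)^{-1/2}L_K^{1/2}\|\,\|(L_K+\lambda I)^{-1}L_K^{r}\|\,\|h^\star\|_\rho\le\mathcal Q_{D,\lambda^*}^2\lambda^*\lambda^{r-1}\|h^\star\|_\rho ,
\]
which is the paper's bound, with constant $1$ and no $\kappa$.

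A smaller point: in the first piece you use $\lambda^*\le\lambda$, which is not a hypothesis of the proposition (it is stated for all $\lambda,\lambda^*>0$). You can avoid it by bounding $(L_{K,D}+\lambda^*I)^{-1/2}L_{K,D}(L_{K,D}+\lambda I)^{-1/2}$ as a single spectral function, since $\sup_{t\ge0}t/\sqrt{(t+\lambda^*)(t+\lambda)}\le1$. This is what the paper's bound implicitly uses.
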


{\bf Proof.} Lemma \ref{Lemma:total-stability} together with  $\|f\|_\rho\leq \|f\|_\infty$ follows
$$
     \left\|f^{DK}_{D^*_\alpha,\lambda^*}-f^{\star}   \right\|_{\rho}
      \leq 
      \left\|f^{DK}_{D^*_\alpha,\lambda^*} -f^{DK}_{D,\lambda^*}   \right\|_{\rho} +
       \left\|f^{DK}_{D,\lambda^*}-f^{\star}   \right\|_{\rho} 
       \leq 
       \frac{2}{\lambda^*} \left\|\rho_X -  \rho_{X,\alpha}^*\right\|_{tv}+\left\|f^{DK}_{D,\lambda^*}-f^{\star}   \right\|_{\rho}. 
$$
But \eqref{doule-KRR-1} as well as $A^{-1}-B^{-1}=A^{-1}(B-A)B^{-1}$ with $A=(L_{K,D}+\lambda I)^{-1}$ and $B=(L_{K}+\lambda I)^{-1} $ yields
\begin{align*}
f^{DK}_{D,\lambda^*} - f^\star 
&= (L_{K,D} + \lambda^* I)^{-1} L_{K,D} 
\big((L_{K,D} + \lambda I)^{-1} S_D^T y_D - (L_K + \lambda I)^{-1} L_K f^\star \big) \\
&\quad + \big[(L_{K,D} + \lambda^* I)^{-1} L_{K,D} - I \big](L_K + \lambda I)^{-1} L_K f^\star 
+ \big[(L_K + \lambda I)^{-1} L_K - I \big] f^\star \\
&= (L_{K,D} + \lambda^* I)^{-1} L_{K,D} (L_{K,D} + \lambda I)^{-1}(S_D^T y_D - L_K f^\star) \\
&\quad + (L_{K,D} + \lambda^* I)^{-1} L_{K,D} (L_{K,D} + \lambda I)^{-1}(L_K - L_{K,D})(L_K + \lambda I)^{-1} L_K f^\star \\
&\quad - \lambda^* (L_{K,D} + \lambda^* I)^{-1}(L_K + \lambda I)^{-1} L_K f^\star 
- \lambda (L_K + \lambda I)^{-1} f^\star.
\end{align*}
and consequently follows from $\|f\|_\rho=\|L^{1/2}\|_K$, $f^{\star}=L_K^r h^{\star}$ and \eqref{def.P}, \eqref{def.Q} and \eqref{Def.SD} that 
\begin{align*}
\|f^{DK}_{D,\lambda^*} - f^\star \|_\rho 
&\leq \|L_K^{1/2} (L_{K,D}+\lambda^* I)^{-1} L_{K,D} (L_{K,D}+\lambda I)^{-1}(S_D^Ty_D - L_Kf^\star)\|_K \\
&\quad + \|L^{1/2}_K (L_{K,D}+\lambda^* I)^{-1} L_{K,D} (L_{K,D}+\lambda I)^{-1}(L_K - L_{K,D})(L_K+\lambda I)^{-1}L_Kf^\star\|_K \\
&\quad + \lambda^*\|L_K^{1/2} (L_{K,D}+\lambda^* I)^{-1} (L_K+\lambda I)^{-1}L_Kf^\star\|_K 
+ \lambda \|L_K^{1/2}(L_K+\lambda I)^{-1}L_Kf^\star\|_K \\
&\leq Q_{D,\lambda^*}\|(L_{K,D}+\lambda I)^{-1/2}(S_D^Ty_D - L_Kf^\star)\|_K \\
&\quad + Q_{D,\lambda^*} \|(L_{K,D}+\lambda I)^{-1/2}(L_K - L_{K,D})(L_K+\lambda I)^{-1/2}L_K^r\| \|h^\star\|_\rho \\
&\quad + \lambda^*Q_{D,\lambda^*}^2 \|(L_K+\lambda I)^{-1}L_K^r\| \|h^\star\|_\rho 
+ \lambda \|(L_K+\lambda I)^{-1}L_K^r\| \|h^\star\|_\rho \\
&\leq Q_{D,\lambda^*} Q_{D,\lambda} (\mathcal{P}_{D,\lambda} + \mathcal{S}_{D,\lambda,1/2,1/2-r}\|h^\star\|_\rho)  + Q_{D,\lambda^*}^2 \lambda^* \lambda^{r-1} \|h^\star\|_\rho + \lambda^r \|h^\star\|_\rho.
\end{align*}
Therefore, 
\begin{align*}
\left\| f^{DK}_{D^*_\alpha, \lambda^*} - f^\star \right\|_\rho 
&\leq \frac{2}{\lambda^*} \left\| \rho_X - \rho_{X,\alpha}^* \right\|_{tv} \\
&\quad + Q_{D,\lambda^*} Q_{D,\lambda} 
\big( \mathcal{P}_{D,\lambda} + \mathcal{S}_{D,\lambda,1/2,1/2-r} \|h^\star\|_\rho \big) + Q_{D,\lambda^*}^2 \lambda^* \lambda^{r-1} \|h^\star\|_\rho 
+ \lambda^r \|h^\star\|_\rho.
\end{align*}
This completes the proof of Proposition \ref{Prop:frist-decomposition}.    \hfill $\blacksquare$

\subsection{Operator differences and error derivation}
This part devotes to presenting tight bounds for operator differences and consequently the upper bound of generalization error. Our first lemma is well known in the realm of kernel learning \citep{lin2017distributed}.
\begin{lemma}\label{Lemma:P} 
For any $0<\delta<1$ and $\lambda>0$. With confidence
	$1-\delta,$ there holds
$$
    \mathcal P_{D,\lambda} \leq 
	 \frac{4M}{\sqrt{|D|}}\left\{\frac{\kappa}{\sqrt{|D|\lambda}}
	+\sqrt{\mathcal{N}(\lambda)}\right\}\log\frac{2}{\delta},
$$
\end{lemma}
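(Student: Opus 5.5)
The plan is to write $\mathcal P_{D,\lambda}$ as the deviation of an empirical mean of i.i.d.\ Hilbert-space-valued random variables from its expectation, and then apply a Bernstein-type concentration inequality in Hilbert spaces (Pinelis). I assume, as the constant $M$ in the statement indicates, that $|y|\leq M$ almost surely. This is the usual assumption in \citep{lin2017distributed}. For Gaussian noise it would be replaced by a truncation argument or a moment (Bernstein) condition, which yields the same form of bound.

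\textbf{Step 1: reduction to an empirical mean.} Define the random variable
\[
\xi(x,y):=y\,(L_K+\lambda I)^{-1/2}K_x\in\mathcal H_K .
\]
By the definition of $S_D^T$ we have $(L_K+\lambda I)^{-1/2}S_D^Ty_D=\frac{1}{|D|}\sum_{i=1}^{|D|}\xi(x_i,y_i)$. Since $E[y\mid x]=f^\star(x)$ under \eqref{data-form}, its expectation is
\[
E\xi=(L_K+\lambda I)^{-1/2}\int_{\mathcal X} f^\star(x)K_x\,d\rho_X=(L_K+\lambda I)^{-1/2}L_Kf^\star .
\]
Hence $\mathcal P_{D,\lambda}=\bigl\|\frac1{|D|}\sum_i\xi(x_i,y_i)-E\xi\bigr\|_K$.

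\textbf{Step 2: uniform bound.} Using $\|K_x\|_K=\sqrt{K(x,x)}\le\kappa$ and $\|(L_K+\lambda I)^{-1/2}\|\le\lambda^{-1/2}$, we get $\|\xi\|_K\le M\kappa/\sqrt{\lambda}$. Consequently $\|\xi-E\xi\|_K\le 2M\kappa/\sqrt\lambda$.

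\textbf{Step 3: second moment.} By the reproducing property,
\[
E\|\xi\|_K^2\le M^2E\langle (L_K+\lambda I)^{-1}K_x,K_x\rangle_K=M^2\operatorname{Tr}\bigl((L_K+\lambda I)^{-1}L_K\bigr)=M^2\mathcal N(\lambda).
\]
The middle equality uses $L_K=E[K_x\otimes K_x]$ on $\mathcal H_K$.

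\textbf{Step 4: concentration.} Apply the Pinelis--Bernstein inequality: if $\|\xi-E\xi\|\le B$ and $E\|\xi\|^2\le\sigma^2$, then with confidence $1-\delta$,
\[
\Bigl\|\frac1{|D|}\sum_i\xi_i-E\xi\Bigr\|\le 2\Bigl(\frac{B}{|D|}+\frac{\sigma}{\sqrt{|D|}}\Bigr)\log\frac2\delta .
\]
Take $B=2M\kappa/\sqrt\lambda$ and $\sigma=M\sqrt{\mathcal N(\lambda)}$. After factoring out $\frac{1}{\sqrt{|D|}}$, this gives
\[
\mathcal P_{D,\lambda}\le\frac{4M}{\sqrt{|D|}}\Bigl\{\frac{\kappa}{\sqrt{|D|\lambda}}+\sqrt{\mathcal N(\lambda)}\Bigr\}\log\frac2\delta ,
\]
where the variance term carries the factor $2$, which is dominated by $4$.

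\textbf{Main obstacle.} The only delicate point is the trace identity in Step 3, namely rigorously identifying $E[K_x\otimes K_x]$ with $L_K$ viewed as an operator on $\mathcal H_K$. Everything else consists of standard norm estimates and bookkeeping of the constants.
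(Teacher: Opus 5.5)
Your proposal is correct and is essentially the standard argument behind this lemma; the paper gives no proof of its own and simply cites \citep{lin2017distributed}, where the same Pinelis--Bernstein inequality is applied to $\xi=y(L_K+\lambda I)^{-1/2}K_x$, with a uniform bound of order $\kappa M/\sqrt{\lambda}$ and second moment $M^2\mathcal N(\lambda)$. You are also right that the bound implicitly requires $|y|\le M$ almost surely (or a Bernstein moment condition), a hypothesis the paper never states even though it assumes Gaussian noise.
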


We then aim to bounding $\mathcal S_{D,\lambda,u_1,u_2}$ for any $u_1,u_2\geq 0$. Our main tool is the following concentration inequality derived in  \cite[Lemma 10]{hsu2014random}.
\begin{lemma}\label{Lemma:matric-concentration}
    Let $\xi$ be a random positive operator, and $\nu,\nu'>0$, $\mu\geq 1$ such that, almost surely,
$$
    \mathbb E[\xi]=0, \|\xi\|\leq \nu, \|\mathbb E[\xi^2]\|\leq \nu',
     \mbox{Tr}(\mathbb E[\xi^2])\leq \nu'\mu.
$$
If $\xi_1,\dots,\xi_\ell$ are independent copies of $\xi$, then for any $\delta\in(0,1)$, 
$$
    \mathbb P\left[\left\|\frac{1}{\ell}\sum_{i=1}^\ell\xi_i\right\|>\sqrt{\frac{5.2\nu'\log\frac{\mu}{\delta}}{\ell}}+\frac{2.6\nu \log\frac\mu\delta}{3\ell}\right]\leq \delta.
$$
\end{lemma}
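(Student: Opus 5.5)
Although Lemma \ref{Lemma:matric-concentration} is quoted from \cite[Lemma 10]{hsu2014random}, I would reprove it as a matrix Bernstein inequality with an intrinsic-dimension factor, following the Laplace-transform method of Ahlswede--Winter/Tropp as refined by Minsker. First a reading of the hypotheses: ``positive'' cannot be meant literally, since $\mathbb E[\xi]=0$ is impossible for a nonzero positive operator. I read $\xi$ as a random \emph{self-adjoint} (Hilbert--Schmidt) operator with $\mathbb E[\xi]=0$. Since $\|\frac1\ell\sum_i\xi_i\|=\max\{\lambda_{\max}(\frac1\ell\sum_i\xi_i),\lambda_{\max}(-\frac1\ell\sum_i\xi_i)\}$, and $-\xi$ satisfies the same three hypotheses, it suffices to bound the upper tail of $\lambda_{\max}(Z)$, where $Z:=\sum_{i=1}^\ell\xi_i$. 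A union bound then costs a factor $2$, which I will absorb into the constants.

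\textbf{Step 1 (intrinsic-dimension Laplace bound).} Fix $\theta\in(0,3/\nu)$ and use $\psi(x)=e^{x}-x-1\ge0$ in place of $e^x$. This avoids a factor equal to the ambient dimension, which may be infinite for operators on $\mathcal H_K$. For $t>0$, since $\psi$ is increasing on $[0,\infty)$,
$$
\mathbb P[\lambda_{\max}(Z)\ge t]\le \frac{\mathbb E\,\mathrm{Tr}\,\psi(\theta Z)}{\psi(\theta t)} .
$$
Next I use Lieb's concavity theorem, as in Tropp's subadditivity of matrix cumulant generating functions, together with $\mathbb E[\xi]=0$, $\|\xi\|\le\nu$, and the scalar bound $e^{\theta x}-\theta x-1\le \frac{\theta^2x^2/2}{1-\theta\nu/3}$ for $|x|\le\nu$. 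This gives the semidefinite bound $\log\mathbb E e^{\theta\xi}\preceq g(\theta)\,\mathbb E[\xi^2]$ with $g(\theta)=\frac{\theta^2/2}{1-\theta\nu/3}$. Writing $V=\ell\,\mathbb E[\xi^2]$, this yields $\mathbb E\,\mathrm{Tr}(e^{\theta Z}-I)\le \mathrm{Tr}(e^{g(\theta)V}-I)$. Convexity of $x\mapsto e^{x}-1$ on $[0,\|V\|]$ then gives
$$
\mathrm{Tr}(e^{g V}-I)\le \frac{\mathrm{Tr}(V)}{\|V\|}\big(e^{g\|V\|}-1\big)\le \mu\, e^{g(\theta)\ell\nu'} ,
$$
using $\mathrm{Tr}(V)/\|V\|\le \ell\nu'\mu/(\ell\nu')$ after replacing $\|V\|$ by its upper bound $\ell\nu'$. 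Monotonicity in $\|V\|$ is checked directly. The linear term $\theta\,\mathrm{Tr}\,\mathbb E Z$ vanishes because $\mathbb E Z=0$.

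\textbf{Step 2 (optimize and invert).} Choose $\theta=t/(\ell\nu'+\nu t/3)$. Then $g(\theta)\ell\nu'-\theta t\le -\frac{t^2/2}{\ell\nu'+\nu t/3}$. I also need $1/\psi(\theta t)\le (1+3/(\theta t)^2)e^{-\theta t}$, which is harmless once $\theta t$ is bounded below. Together these give, for $t$ in the relevant range,
$$
\mathbb P[\|Z\|\ge t]\le C\mu\exp\Big(-\frac{t^2/2}{\ell\nu'+\nu t/3}\Big)
$$
with an absolute constant $C$. Solving $C\mu\exp(\cdots)=\delta$ as a quadratic in $t$ and using $\sqrt{a+b}\le\sqrt a+\sqrt b$ gives $t\lesssim\sqrt{2\ell\nu'\log(C\mu/\delta)}+\frac{2\nu}{3}\log(C\mu/\delta)$. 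Dividing by $\ell$ and absorbing $\log C$ into the numerical constants, using $\mu\ge1$ and $\delta<1$ so that $\log(\mu/\delta)$ is bounded below, produces the stated form with constants $5.2$ and $2.6/3$.

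\textbf{Main obstacle.} I expect the bookkeeping of the constants to be the delicate part. Getting exactly $5.2$ and $2.6$ requires tracking how the factor $(1+3/(\theta t)^2)$ and the two-sided union bound are absorbed. It also requires handling small $t$, where $\theta t$ is not large; there the claimed bound should be trivial because the right-hand threshold already exceeds $\sqrt{\nu'/\ell}$ scale deviations. Extending Lieb's theorem to infinite-dimensional trace-class operators is the other technical point. I would handle it by finite-rank approximation, projecting onto the span of the leading eigenfunctions of $\mathbb E[\xi^2]$ and passing to the limit.
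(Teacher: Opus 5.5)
Your Steps~1--2 follow the route of the source the paper quotes. The paper gives no proof of its own, only the citation to Hsu--Kakade--Zhang. That route is Lieb's theorem together with $\psi(x)=e^x-x-1$, which replaces the ambient dimension by $\mathrm{Tr}(V)/\|V\|\le\mu$. You are right that ``positive'' has to be read as ``self-adjoint''.

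\textbf{The gap is the last sentence of Step~2.} From $\mu\ge 1$ and $\delta<1$ you only get $\log(\mu/\delta)>0$, not a positive lower bound. So $\log C$ cannot be absorbed into $5.2$ and $2.6$; it contains at least the factor $2$ from the two-sided union bound. In fact, as $\log(\mu/\delta)\to 0$ the stated inequality is false for the operator norm:
\begin{itemize}
\item Take the scalar $\xi=\nu\varepsilon$ with $\varepsilon$ a Rademacher sign, $\nu'=\nu^2$, $\mu=\ell=1$, $\delta=0.99$. All hypotheses hold.
\item The threshold is $\nu\bigl(\sqrt{5.2\log(1/0.99)}+2.6\log(1/0.99)/3\bigr)\approx 0.24\,\nu$.
\item Yet $\|\xi\|=\nu$ surely, so the probability is $1>\delta$.
\end{itemize}
Your remark on small $t$ fails for the same reason. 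The threshold is of order $\sqrt{\nu'\log(\mu/\delta)/\ell}$, which tends to $0$, so it does not dominate deviations of order $\sqrt{\nu'/\ell}$.

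\textbf{What your chain does prove.} With $\theta=t/(\ell\nu'+\nu t/3)$ one has $\theta t=2E$, where $E:=\frac{t^2/2}{\ell\nu'+\nu t/3}$. The two-sided probability is then at most $2\mu\bigl(1+\frac{3}{4E^2}\bigr)e^{-E}$. At the claimed threshold, $1.3\log(\mu/\delta)\le E\le 2.6\log(\mu/\delta)$. The bound is therefore guaranteed to be $\le\delta$ only when $\log(\mu/\delta)$ exceeds an absolute constant: about $2.5$ with your estimates, the worst case being $\ell\nu'\to 0$.

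\textbf{Where the constants come from.} They arise from the one-sided bound
\[
\mathbb P\Bigl[\lambda_{\max}\Bigl(\tfrac1\ell\textstyle\sum_i\xi_i\Bigr)>\sqrt{2\nu' t/\ell}+\nu t/(3\ell)\Bigr]\le \mu\, t(e^t-t-1)^{-1},
\]
evaluated at $t=2.6\log(\mu/\delta)$ and combined with $t(e^t-t-1)^{-1}\le e^{-t/2}$ for $t\ge 2.6$. That derivation presupposes $\log(\mu/\delta)\ge 1$ and controls only $\lambda_{\max}$.

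\textbf{What is needed.} You must add a hypothesis of this kind. For $\lambda_{\max}$, $\log(\mu/\delta)\ge 1$ suffices. For $\|\cdot\|$, you need a larger lower bound, or $\mu$ replaced by $2\mu$ inside the logarithm. No bookkeeping of constants can remove it.
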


By the help of the above concentration inequality, we can derive the following lemma on bounding $\mathcal S_{D,\lambda,u_1,u_2}$.
\begin{lemma}\label{Lemma:Sd}
Let $u_1,u_2\geq 0$ satisfying $u_1+u_2\leq 1$. For any $0<\delta<1$ and $\lambda>0$. With confidence
	$1-\delta,$ there holds
   $$
       \mathcal S_{D,\lambda,u_1,u_2}\leq  \sqrt{\frac{5.2\log\frac{\mathcal N(\lambda)}{\delta} }{|D|\lambda^{2u_1+2u_2-1}}}
      +  \frac{5.2}{3|D|\lambda^{u_1+u_2}}\log\frac{\mathcal N(\lambda)}{\delta}.
$$
\end{lemma}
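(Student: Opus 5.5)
The plan is to apply Lemma \ref{Lemma:matric-concentration} to centered, whitened rank-one operators. For each sample $x_i\in D$, define
\[
\xi_i:=(L_K+\lambda I)^{-u_1}\bigl(L_K-K_{x_i}\otimes K_{x_i}\bigr)(L_K+\lambda I)^{-u_2},
\]
so that $\frac{1}{|D|}\sum_i\xi_i=(L_K+\lambda I)^{-u_1}(L_K-L_{K,D})(L_K+\lambda I)^{-u_2}$. Its operator norm is exactly $\mathcal S_{D,\lambda,u_1,u_2}$. Since $\mathbb E[K_x\otimes K_x]=L_K$, we have $\mathbb E\xi=0$, and the $\xi_i$ are i.i.d. The lemma requires a self-adjoint (or positive) operator. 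When $u_1\neq u_2$ I would either symmetrize via the Hermitian dilation or note that the bounds below depend only on $u_1+u_2$. With either device I will simply proceed to verify the three parameters $\nu,\nu',\mu$.

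For the uniform bound $\nu$, write $\zeta_x:=(L_K+\lambda I)^{-u_1}K_x$ and $\zeta'_x:=(L_K+\lambda I)^{-u_2}K_x$. Then $\|(L_K+\lambda I)^{-u_1}K_x\otimes K_x(L_K+\lambda I)^{-u_2}\|\le\|\zeta_x\|_K\|\zeta'_x\|_K$, and $\|(L_K+\lambda I)^{-u}K_x\|_K\le\kappa\lambda^{-u}$. This gives $\|\xi\|\le 2\kappa^2\lambda^{-(u_1+u_2)}$, and I take $\nu\asymp\lambda^{-(u_1+u_2)}$ with constants absorbed via $\kappa\le1$ normalization, as the displayed constant $5.2/3$ suggests.

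For the variance parameter $\nu'$, use $\mathbb E[\xi^2]\preceq\mathbb E[(AK_x\otimes K_xB)^2]$. Writing this as a product, factor out $\sup_x\langle K_x,(L_K+\lambda I)^{-(u_1+u_2)}\cdots\rangle$. This yields a bound of the form $\lambda^{-(2u_1+2u_2-1)}$ times $(L_K+\lambda I)^{-1}L_K$. Concretely, $\langle K_x,(L_K+\lambda I)^{-2u}K_x\rangle\le\kappa^2\lambda^{-2u+1}\cdot\lambda^{-1}$, and one balances the exponents using $u_1+u_2\le1$ with the interpolation $(L_K+\lambda I)^{-a}\preceq\lambda^{-(a-1)}(L_K+\lambda I)^{-1}$ for $a\ge1$. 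The expected outcome is
\[
\|\mathbb E\xi^2\|\le\lambda^{-(2u_1+2u_2-1)}, \qquad \operatorname{Tr}\mathbb E\xi^2\le\lambda^{-(2u_1+2u_2-1)}\mathcal N(\lambda),
\]
the trace coming from $\operatorname{Tr}((L_K+\lambda I)^{-1}L_K)=\mathcal N(\lambda)$. So $\nu'=\lambda^{1-2u_1-2u_2}$ and $\mu=\mathcal N(\lambda)$; if $\mathcal N(\lambda)<1$, replace $\mu$ by $\max\{1,\mathcal N(\lambda)\}$. Plugging into Lemma \ref{Lemma:matric-concentration} with $\ell=|D|$ gives exactly the two displayed terms.

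The main obstacle is the variance computation when $u_1\ne u_2$ and $2u_i$ may exceed $1$. There the monotonicity step that trades powers of $(L_K+\lambda I)^{-1}$ for powers of $\lambda^{-1}$ must be done carefully so that exactly one factor $(L_K+\lambda I)^{-1}L_K$ survives, which is what makes the trace give $\mathcal N(\lambda)$. Symmetrizing the non-self-adjoint $\xi$ is the secondary technical point.
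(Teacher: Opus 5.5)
Your route is the paper's route. Both apply the intrinsic-dimension Bernstein inequality (Lemma~\ref{Lemma:matric-concentration}) to the whitened, centered rank-one operators with $\nu\asymp\lambda^{-(u_1+u_2)}$, $\nu'=\lambda^{1-2u_1-2u_2}$ and $\mu=\mathcal N(\lambda)$. The gap is the step you leave as the ``expected outcome'', and it cannot be closed in the stated range.

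Take the symmetric case $u_1=u_2$ and the uncentered summand $\eta=(L_K+\lambda I)^{-u_1}K_x\otimes K_x(L_K+\lambda I)^{-u_2}$. Then $\mathbb E[\eta^2]\preceq\sup_x\|(L_K+\lambda I)^{-u_1}K_x\|_K^2\,(L_K+\lambda I)^{-(u_1+u_2)}L_K$. To make a single factor $(L_K+\lambda I)^{-1}L_K$ appear, you must use $(L_K+\lambda I)^{-a}\preceq\lambda^{1-a}(L_K+\lambda I)^{-1}$ with $a=u_1+u_2$. That requires $a\ge 1$, which is the opposite of the hypothesis $u_1+u_2\le1$. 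So the delicate case is not when $2u_i$ exceeds $1$ but when it falls below $1$. For $a<1$ you only have $\sup_{0\le\sigma\le\kappa^2}\sigma(\sigma+\lambda)^{-a}\le\kappa^{2(1-a)}$. The sup-norm bounds then give only $\|\mathbb E[\eta^2]\|\lesssim\lambda^{-(u_1+u_2)}$, which exceeds the claimed $\nu'$ by the factor $\lambda^{-(1-u_1-u_2)}$.

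For $u_1\neq u_2$ you do need the Hermitian dilation; observing that ``the bounds depend only on $u_1+u_2$'' does not make $\xi$ self-adjoint. The dilation requires controlling $\mathbb E[\xi\xi^*]\preceq\sup_x\|(L_K+\lambda I)^{-u_2}K_x\|_K^2\,(L_K+\lambda I)^{-2u_1}L_K$ and its mirror image. These give $\lambda^{1-2u_1-2u_2}$ only if $2u_1\ge1$ and $2u_2\ge1$. Under $u_1+u_2\le1$ that leaves exactly $u_1=u_2=\tfrac12$.

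This loss is not an artifact of crude bookkeeping: the statement is false in the generality given. Take $u_1=u_2=0$ and $\lambda=|D|^{-2}$. Since $\mathcal N(\lambda)\le\kappa^2/\lambda$, the right-hand side is $O(|D|^{-1}\log|D|)$. But for any $g\in\mathcal H_K$ with $\|g\|_K=1$,
\[
\|L_K-L_{K,D}\|\;\ge\;\Bigl|\int g^2\,d\rho_X-\frac{1}{|D|}\sum_{i=1}^{|D|}g(x_i)^2\Bigr|,
\]
and by the central limit theorem this is of exact order $|D|^{-1/2}$ with probability close to one, as soon as $g^2$ is not $\rho_X$-a.s.\ constant.

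Similarly, take $(u_1,u_2)=(\tfrac12,0)$ on the circle with uniform $\rho_X$ and an infinite-rank translation-invariant kernel. There $\langle K_x,(L_K+\lambda I)^{-1}K_x\rangle_K\equiv\mathcal N(\lambda)$. Testing on the $\mathcal H_K$-normalized constant function gives $\mathcal S_{D,\lambda,1/2,0}\gtrsim\sqrt{\mathcal N(\lambda)/|D|}$ with probability bounded away from zero. This beats the claimed $\sqrt{\log(\mathcal N(\lambda)/\delta)/|D|}+\log(\mathcal N(\lambda)/\delta)/(|D|\sqrt{\lambda})$ whenever $\lambda\to0$ and $|D|\lambda\to\infty$.

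The paper's own proof has exactly the same hole. It asserts that $\eta(x)$ is self-adjoint for all $u_1,u_2$. It also uses $\|(L_K+\lambda I)^{-2u_1}L_K(L_K+\lambda I)^{-u_2}\|\le\lambda^{1-2u_1-u_2}$, which holds only when $2u_1+u_2\ge1$.

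What your argument (and the paper's) actually establishes is the classical case $u_1=u_2=\tfrac12$. Other exponents need a different variance proxy and, in general, a different right-hand side; for $(\tfrac12,0)$, for example, a Hilbert--Schmidt bound of order $\kappa\sqrt{\mathcal N(\lambda)/|D|}$. This includes $(\tfrac12,0)$, $(0,\tfrac12)$ and $(0,1)$, which the paper invokes later.
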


{\bf Proof.}
Define the
random variable
$$
       \eta(x)=(L_K+\lambda I)^{-u_1} K_x\otimes K_x(L_K+\lambda I)^{-u_2},\qquad\forall x\in\mathcal X.
$$
It is easy to see that $\eta(x)$ is a self-adjoint operator for any
 $x\in\mathcal X$. Furthermore,
$$
     \mathbb E[\eta]=(L_K+\lambda I)^{-u_1}L_K (L_K+\lambda I)^{-u_2} ,
$$
and
$$
     \frac1{|D|}\sum_{i=1}^{|D|} \eta(x_i)=(L_K+\lambda I)^{-u_1} L_{K,D}(L_K+\lambda I)^{-u_2}.
$$
Setting
$
      \xi(x)=\eta(x)- \mathbb E[\eta].
$
We have $\mathbb E[\xi]=0$. As
$\|\eta(x) \| \leq \lambda^{-u_1-u_2}$, there holds
$\|\xi(x)\|\leq 2\lambda^{-u_1-u_2}$.  
It is easy to derive 
$$
    \|\mathbb E[\xi^2]\|\leq \left\|\mathbb E[\eta^2]\right\|\leq  
     \lambda^{-u_2}\kappa \|(L_K+\lambda I)^{-2u_1} \mathbb E[K_x\otimes K_x](L_K+\lambda I)^{-u_2} \|\leq \kappa\lambda^{-2u_1-2u_2+1}.
$$
Moreover,
$$
   \mbox{Tr}\left(\mathbb E[\xi^2]\right)\leq
 \mbox{Tr}\left(\mathbb E[\eta^2] \right)   
   \leq
    \lambda^{-u_2}\mbox{Tr}((L_K+\lambda I)^{-2u_1-u_2} L_K)  
    \leq \lambda^{-2u_1-2u_2+1}\mathcal N(\lambda).
$$
Plugging $\nu=2\lambda^{-u_1-u_2}$,
$\nu'=\lambda^{-2u_1-2u_2+1}$ and 
$ \mu= \mathcal N(\lambda),
$ 
we get that with confidence at least $1-\delta$, 
there holds
$$
       \mathcal S_{D,\lambda,u_1,u_2}\leq  \sqrt{\frac{5.2\log\frac{\mathcal N(\lambda)}{\delta} }{|D|\lambda^{2u_1+2u_2-1}}}
      +  \frac{5.2}{3|D|\lambda^{u_1+u_2}}\log\frac{\mathcal N(\lambda)}{\delta}.
$$
This completes the proof of Lemma \ref{Lemma:Sd}.    \hfill $\blacksquare$

Our final lemma focuses on bounding $\mathcal Q_{D,\lambda}$.
\begin{lemma}\label{Lemma:Q} 
For any $0<\delta<1$ and $\lambda>0$. Then
With confidence
	$1-\delta,$ there holds
$$
    \mathcal Q_{D,\lambda}  \leq 
	1+ C_1\left( \frac{1}{\sqrt{|D|\lambda}}
      +
      \frac{1 }{|D|\lambda} 
      +  \frac{1}{|D|^2\lambda^2}\right)\log^2\frac{\mathcal N(\lambda)}{\delta}.
$$
\end{lemma}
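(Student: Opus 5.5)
We reduce $\mathcal Q_{D,\lambda}$ to a bound on the operator difference $\mathcal S_{D,\lambda,1/2,1/2}$ from Lemma \ref{Lemma:Sd}, and then use an algebraic identity that avoids any smallness assumption on $\mathcal S$.

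\textbf{Step 1 (algebraic reduction).} Write $A=L_K+\lambda I$ and $B=L_{K,D}+\lambda I$, so that $\mathcal Q_{D,\lambda}^2=\|A^{1/2}B^{-1}A^{1/2}\|$. The resolvent identity $B^{-1}=A^{-1}+B^{-1}(L_K-L_{K,D})A^{-1}$ gives
\[
A^{1/2}B^{-1}A^{1/2}=I+A^{1/2}B^{-1}A^{1/2}\,A^{-1/2}(L_K-L_{K,D})A^{-1/2}.
\]
A first attempt at iterating this identity once more gives
\[
\mathcal Q_{D,\lambda}^2\leq 1+\mathcal S_{D,\lambda,1/2,1/2}+\mathcal Q_{D,\lambda}^2\,\mathcal S_{D,\lambda,1/2,1/2}^2 .
\]
To avoid needing $\mathcal S<1$, we bound the second-order term more crudely instead: $\|A^{1/2}B^{-1}A^{1/2}\|\leq \|A\|\,\|B^{-1}\|\leq(\kappa^2+\lambda)/\lambda$. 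This yields
\[
\mathcal Q_{D,\lambda}^2\leq 1+\mathcal S_{D,\lambda,1/2,1/2}+\frac{\kappa^2+\lambda}{\lambda}\,\mathcal S_{D,\lambda,1/2,1/2}^2 .
\]
A sharper variant replaces the crude factor by $\|A^{1/2}B^{-1/2}\|\,\|B^{-1/2}A^{1/2}\|$, with one factor bounded crudely as $\lambda^{-1/2}\|A\|^{1/2}$.

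\textbf{Step 2 (probabilistic input).} Apply Lemma \ref{Lemma:Sd} with $u_1=u_2=1/2$. With confidence $1-\delta$,
\[
\mathcal S_{D,\lambda,1/2,1/2}\leq \sqrt{\frac{5.2\log(\mathcal N(\lambda)/\delta)}{|D|}}+\frac{5.2\log(\mathcal N(\lambda)/\delta)}{3|D|\lambda}.
\]
Substituting into Step 1 and squaring, the squared terms produce powers $|D|^{-1}$, $|D|^{-2}\lambda^{-2}$ and $|D|^{-3/2}\lambda^{-1}$, each multiplied by the factor $\lambda^{-1}$ from the crude bound. Using $\sqrt{1+t}\leq 1+t$ and absorbing constants (including $\kappa$) into $C_1$ then gives a bound of the stated shape: $1$ plus terms in $(|D|\lambda)^{-1/2}$, $(|D|\lambda)^{-1}$ and $(|D|\lambda)^{-2}$, times $\log^2(\mathcal N(\lambda)/\delta)$.

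\textbf{Main obstacle: the exponent bookkeeping.} The crude factor $\lambda^{-1}$ multiplies $\mathcal S^2$, so the leading term of $\lambda^{-1}\mathcal S^2$ is $(|D|\lambda)^{-1}\log$, which is acceptable. The cross term is of order $|D|^{-3/2}\lambda^{-2}$. Since $|D|^{-3/2}\lambda^{-2}=(|D|\lambda)^{-1/2}(|D|\lambda)^{-3/2}$, Young's inequality bounds it by $(|D|\lambda)^{-1}+(|D|\lambda)^{-2}$. The last term, of order $|D|^{-2}\lambda^{-3}$, is the delicate one. If it cannot be absorbed, I would switch to the sharper variant of Step 1, where only $\lambda^{-1/2}$ multiplies one power of $\mathcal S$.

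In that variant, write $\mathcal Q^2\leq 1+\mathcal Q\cdot\lambda^{-1/2}(\kappa^2+\lambda)^{1/2}\mathcal S$ and solve the quadratic inequality in $\mathcal Q$. This gives $\mathcal Q\leq 1+c\,\lambda^{-1/2}\mathcal S$, whose terms are $(|D|\lambda)^{-1/2}$ and $|D|^{-1}\lambda^{-3/2}$. These still do not reproduce the stated $|D|^{-1}\lambda^{-1}$ and $|D|^{-2}\lambda^{-2}$ terms, so the exact exponents need checking; in particular, the $(|D|\lambda)^{-2}$ term in the statement presumably arises from squaring the $(|D|\lambda)^{-1}$ part of $\mathcal S$. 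The square-log factor comes from squaring $\mathcal S$, with $\log(\mathcal N(\lambda)/\delta)\geq 1$ assumed so that single logs can be dominated by their squares.
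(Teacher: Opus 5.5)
Your argument does not close, as you concede at the end. The obstruction lies in the crude-bound device itself, not in the bookkeeping.

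First, Lemma \ref{Lemma:Sd} is misapplied. For $u_1=u_2=1/2$ one has $2u_1+2u_2-1=1$, so, writing $\ell:=\log(\mathcal N(\lambda)/\delta)$, the lemma gives $\mathcal S_{D,\lambda,1/2,1/2}\le\sqrt{5.2\ell/(|D|\lambda)}+5.2\ell/(3|D|\lambda)$. The leading term is not $\sqrt{5.2\ell/|D|}$.

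With the correct bound, the product of your crude factor $\|A\|\,\|B^{-1}\|\le(\kappa^2+\lambda)/\lambda$ with $\mathcal S^2_{D,\lambda,1/2,1/2}$ contains a term of order $\ell/(|D|\lambda^2)$. Your sharper variant yields $\lambda^{-1/2}\mathcal S_{D,\lambda,1/2,1/2}$, of order $\sqrt{\ell/(|D|\lambda^2)}$. At $\lambda=|D|^{-1/2}$ these are of order $\ell$ and $\sqrt{\ell}$, whereas everything on the right-hand side of the lemma beyond the leading $1$ tends to $0$ as $|D|\to\infty$. No Young-type rearrangement can fix that.

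Even with your own version of $\mathcal S$, the term $|D|^{-2}\lambda^{-3}$ cannot be absorbed: fix $|D|\lambda=1$ and let $\lambda\to0$. The identity you used for the cross term is also false, since $(|D|\lambda)^{-1/2}(|D|\lambda)^{-3/2}=|D|^{-2}\lambda^{-2}$. The real issue is that a crude $\lambda^{-1}$ or $\lambda^{-1/2}$ cannot be paid on the symmetric quantity $\mathcal S_{D,\lambda,1/2,1/2}$, whose leading term already carries $\lambda^{-1/2}$. Discarding your ``first attempt'' merely because it needs $\mathcal S<1$ is what pushed you into this.

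The paper avoids the symmetric sandwich. It uses the one-sided second-order decomposition
\[
AB^{-1}=I+(L_K-L_{K,D})A^{-1}+(L_K-L_{K,D})A^{-1}(L_K-L_{K,D})B^{-1},
\]
whence $\|AB^{-1}\|\le1+\mathcal S_{D,\lambda,0,1}+\lambda^{-1}\mathcal S_{D,\lambda,0,1/2}^2$. It then applies Lemma \ref{Lemma:Sd} at $(u_1,u_2)=(0,1)$ and $(0,1/2)$. In the latter case $2u_1+2u_2-1=0$, so the crude $\lambda^{-1}$ is paid on a quantity whose leading term is $\lambda$-free, and $\lambda^{-1}\mathcal S_{D,\lambda,0,1/2}^2\lesssim\ell/(|D|\lambda)+\ell^2/(|D|\lambda)^2$, which are exactly the stated terms. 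Passing to $\mathcal Q_{D,\lambda}$ then uses $\mathcal Q_{D,\lambda}^2=\|A^{1/2}B^{-1}A^{1/2}\|\le\|AB^{-1}\|$ (the left operator is positive and similar to $AB^{-1}$, so its norm is a spectral radius) together with $\sqrt{1+t}\le1+t$. The paper leaves this last step implicit.

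Be aware, however, that this route rests entirely on the asymmetric cases of Lemma \ref{Lemma:Sd}. That proof treats $\eta(x)$ as self-adjoint, which fails for $u_1\neq u_2$. For a one-sided sandwich the variance is of order $\mathcal N(\lambda)$, not $O(1)$. For example, take a translation-invariant kernel on the torus with uniform $\rho_X$ and $g$ the $\mathcal H_K$-normalized constant; then $\mathbb E\|(L_K+\lambda I)^{-1/2}(L_K-L_{K,D})g\|_K^2\asymp\mathcal N(\lambda)/|D|$. So those bounds are not justified as stated.

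A repair that keeps your symmetric quantity and needs only the $(1/2,1/2)$ case splits into two regimes.
\begin{itemize}
\item If $|D|\lambda\ge c\ell$ with $c$ a suitable absolute constant, then $\mathcal S_{D,\lambda,1/2,1/2}\le1/2$. Then $B\succeq(1-\mathcal S_{D,\lambda,1/2,1/2})A$ gives $\mathcal Q_{D,\lambda}\le(1-\mathcal S_{D,\lambda,1/2,1/2})^{-1/2}\le1+\mathcal S_{D,\lambda,1/2,1/2}$.
\item If $|D|\lambda<c\ell$, use the elementary inequality $\mathcal Q_{D,\lambda}^2\le(\lambda'/\lambda)\mathcal Q_{D,\lambda'}^2$ for $\lambda'\ge\lambda$, applied at $\lambda'=c\ell/|D|$ (where $\mathcal N(\lambda')\le\mathcal N(\lambda)$). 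This gives $\mathcal Q_{D,\lambda}\le\sqrt{2c\ell/(|D|\lambda)}$.
\end{itemize}
Both bounds are dominated by the stated right-hand side once $\ell\ge1$.
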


{\bf Proof.} For any positive operators $A,B$, the second order decomposition in \citep{lin2017distributed} asserts
$$
    A^{-1}-B^{-1}=B^{-1}(B-A)A^{-1}
    =
    B^{-1}(B-A)B^{-1}+B^{-1}(B-A)B^{-1}(B-A)A^{-1}.
$$
Setting $B=L_{K}+\lambda I$ and $A=L_{K,D}+\lambda I$, we obtain
\begin{align*}
(L_K + \lambda I)(L_{K,D} + \lambda I)^{-1}
&= I + (L_K + \lambda I)((L_{K,D} + \lambda I)^{-1} - (L_K + \lambda I)^{-1}) \\
&= I + (L_K - L_{K,D})(L_K + \lambda I)^{-1} \\
&\quad + (L_K - L_{K,D})(L_K + \lambda I)^{-1}(L_K - L_{K,D})(L_{K,D} + \lambda I)^{-1}.
\end{align*}
Then, it follows from Lemma 
\begin{align*}
\| (L_K + \lambda I)(L_{K,D} + \lambda I)^{-1} \| 
&\leq 1 + \| (L_K - L_{K,D})(L_K + \lambda I)^{-1} \| 
+ \lambda^{-1} \| (L_K - L_{K,D})(L_K + \lambda I)^{-1/2} \|^2 \\
&\leq 1 + \sqrt{
\frac{5.2 \log \frac{\mathcal{N}(\lambda)}{\delta}}{|D| \lambda}} 
+ \frac{5.2}{3 |D| \lambda} \log \frac{\mathcal{N}(\lambda)}{\delta} 
+ \frac{10.4 \log \frac{\mathcal{N}(\lambda)}{\delta}}{|D| \lambda} 
+ \frac{8}{|D|^2 \lambda^2} \log^2 \frac{\mathcal{N}(\lambda)}{\delta}
 \\
&\leq 1 + C_1 \left( \frac{1}{\sqrt{|D| \lambda}} 
+ \frac{1}{|D| \lambda} 
+ \frac{1}{|D|^2 \lambda^2} \right) \log^2 \frac{\mathcal{N}(\lambda)}{\delta},
\end{align*}
where $C_1$ is an absolute constant. This completes the proof of Lemma \ref{Lemma:Q}.   
\hfill $\blacksquare$

We then in a position to prove deriving the following proposition on error estimates.

\begin{proposition}\label{Prop:error-analysis}
    Let $\delta\in(0,1)$, $\epsilon>0$ be an arbitrarily small positive number, $\rho^*_X$ be a synthesization distribution and $\rho_{X,\alpha}^*=(1-\alpha)\rho_X^*+\alpha\rho_X$ be the hybrid distribution. If Assumption \ref{Assumption:regularity_and_effective_dimension} holds with $\frac12\leq r\leq 1$ or $0<r<1/2$ and $3r+s-\epsilon r\geq 1$, $  |D|^{-1}\log^4|D|\leq \lambda^*\leq \lambda$, $\lambda\sim|D|^{-\frac{1}{2r+s}}$ for $r\geq 1/2$ and $\lambda\sim |D|^{-\frac{1}{1-r}}$ for $0<r<1/2$,
  then with confidence $1-\delta$, there holds
\begin{equation}\label{generalization-bound}
     \left\|f^{DK}_{D^*_\alpha,\lambda^*}-f^{\star}   \right\|_{\rho} 
    \leq
    \frac{2(1-\alpha)}{\lambda^*} \left\|\rho_X -  \rho_{X}^*\right\|_{tv}
    +C' |D|^{-\frac{r}{2r+s}}.
\end{equation}
where $C'$ is a constant depending only on $C_0,M,r,s,\kappa$ and $\|h^\star\|_\rho$.
\end{proposition}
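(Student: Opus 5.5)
The plan is to start from Proposition \ref{Prop:frist-decomposition} and bound each term on its right-hand side with the probabilistic lemmas, then insert the prescribed choices of $\lambda,\lambda^*$.

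\textbf{The distribution term.} Since $\rho_{X,\alpha}^*=(1-\alpha)\rho_X^*+\alpha\rho_X$, we have $\rho_X-\rho_{X,\alpha}^*=(1-\alpha)(\rho_X-\rho_X^*)$. Total variation is a norm on signed measures, so
\[
\frac{2}{\lambda^*}\|\rho_X-\rho_{X,\alpha}^*\|_{tv}=\frac{2(1-\alpha)}{\lambda^*}\|\rho_X-\rho_X^*\|_{tv}.
\]
This gives the first term of \eqref{generalization-bound} exactly. What remains is to show that the rest of the decomposition is at most $C'|D|^{-r/(2r+s)}$ with confidence $1-\delta$.

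\textbf{Concentration events.} I apply Lemma \ref{Lemma:P}, Lemma \ref{Lemma:Sd} (with $u_1=1/2$, $u_2=1/2-r$), and Lemma \ref{Lemma:Q} at both $\lambda$ and $\lambda^*$, each with confidence $1-\delta/4$, and take a union bound.

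For $Q_{D,\lambda^*}$: the assumption $\lambda^*\geq |D|^{-1}\log^4|D|$ gives $|D|\lambda^*\geq\log^4|D|$. Also $\log(\mathcal N(\lambda^*)/\delta)\lesssim\log|D|+\log(1/\delta)$, since $\mathcal N(\lambda^*)\leq C_1(\lambda^*)^{-s}\leq C_1|D|^s$. Hence the correction in Lemma \ref{Lemma:Q} is bounded, so $Q_{D,\lambda^*}\leq C$ and likewise $Q_{D,\lambda}\leq C$, with $C$ absorbing the $\log(1/\delta)$ factors.

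For $\mathcal P_{D,\lambda}$ with $\mathcal N(\lambda)\leq C_1\lambda^{-s}$, Lemma \ref{Lemma:P} gives $\mathcal P_{D,\lambda}\lesssim |D|^{-1}\lambda^{-1/2}+|D|^{-1/2}\lambda^{-s/2}$. With $\lambda\sim|D|^{-1/(2r+s)}$ and $r\geq 1/2$, this is $O(|D|^{-r/(2r+s)})$.

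For $\mathcal S_{D,\lambda,1/2,1/2-r}$ with $r\leq1$ (only the case $u_2\geq0$ is covered by Lemma \ref{Lemma:Sd}), the leading term is $\sqrt{\log(\cdot)/(|D|\lambda^{1-2r})}$. When $r\geq1/2$ this is $O(|D|^{-1/2}\log|D|)$, which is no larger than $|D|^{-r/(2r+s)}$ up to the logs.

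The bias terms are handled by monotonicity. Since $\lambda^*\leq\lambda$ and $r\leq 1$, we get $\lambda^*\lambda^{r-1}\leq\lambda^r$, and $\lambda^r\sim|D|^{-r/(2r+s)}$.

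\textbf{Main obstacle.} The delicate steps are the case $0<r<1/2$ and the control of the logarithms.

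For $r<1/2$ the exponent $u_2=1/2-r$ is positive, but $\lambda^{1-2r}$ in the denominator of $\mathcal S$ becomes small. The prescribed $\lambda\sim|D|^{-1/(1-r)}$ must then balance $\lambda^r$ against $\mathcal S$ and $\mathcal P$. The condition $3r+s-\epsilon r\geq 1$ is where I expect the exponent comparison to be verified so that everything is dominated by $|D|^{-r/(2r+s)}$. I would first write all terms as powers of $|D|$, then check the inequalities between the exponents, using $\epsilon$ to swallow the $\log|D|$ factors.

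If this exponent bookkeeping fails, the fallback is to bound $\|(L_K+\lambda I)^{-1/2}(L_K-L_{K,D})L_K^{r-1/2}\dots\|$ differently. Concretely, I would write $L_K^r=(L_K+\lambda I)^{r}\cdot$(a contraction) and use Lemma \ref{Lemma:Sd} with $u_2=1/2$ times $\lambda^{r-1/2}$.

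The final step is to collect the constants into $C'$, which depends on $M,r,s,\kappa,\|h^\star\|_\rho$ and $\log(1/\delta)$.
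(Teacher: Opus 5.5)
Your approach coincides with the paper's. For $r\ge 1/2$ it works after one repair. The case $0<r<1/2$ is a genuine gap, and the paper's proof does not close it either.

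\emph{The case $r\ge 1/2$.} You use the same ingredients as the paper: Proposition~\ref{Prop:frist-decomposition}; the identity $\rho_X-\rho^*_{X,\alpha}=(1-\alpha)(\rho_X-\rho^*_X)$; the bound $\mathcal Q_{D,\lambda^*},\mathcal Q_{D,\lambda}=O(\log^2\tfrac4\delta)$ from Lemma~\ref{Lemma:Q} and $|D|\lambda^*\ge\log^4|D|$; Lemma~\ref{Lemma:P} for $\mathcal P_{D,\lambda}$; and $\lambda^*\lambda^{r-1}\le\lambda^r$. The step needing repair is the $\mathcal S$ term. For $r>1/2$ you have $u_2=1/2-r<0$. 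You note that Lemma~\ref{Lemma:Sd} needs $u_2\ge 0$, but then read off its formula anyway. That lemma's proof uses $\|(L_K+\lambda I)^{-u_2}\|\le\lambda^{-u_2}$, which is false when $u_2<0$, so the second term $(|D|\lambda^{1-r})^{-1}$ it produces is unjustified. The paper instead bounds $\|(L_K+\lambda I)^{r-1/2}\|\le(\kappa^2+\lambda)^{r-1/2}$, which gives $\mathcal S_{D,\lambda,1/2,1/2-r}\lesssim\mathcal S_{D,\lambda,1/2,0}$. Lemma~\ref{Lemma:Sd} with $(u_1,u_2)=(1/2,0)$ then gives $|D|^{-1/2}+|D|^{-1}\lambda^{-1/2}$ times logarithms. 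This is $O(|D|^{-r/(2r+s)})$ because $r/(2r+s)<1/2$ and $r+s>1/2$. Your $C'$ also depends on $\log\tfrac1\delta$; that is what the argument actually yields, in the paper too, despite the wording of the statement.

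\emph{The case $0<r<1/2$.} You defer this to ``exponent bookkeeping,'' but with the prescribed $\lambda\asymp|D|^{-1/(1-r)}$ that bookkeeping fails.
\begin{itemize}
\item Here $u_1+u_2=1-r$ and $|D|\lambda^{1-r}\asymp 1$. So the second term of Lemma~\ref{Lemma:Sd} is of order $\log\tfrac{\mathcal N(\lambda)}{\delta}$ and does not decay.
\item The first term is of order $|D|^{-r/(2(1-r))}$. This is below $|D|^{-r/(2r+s)}$ only if $4r+s\ge 2$, which $3r+s\ge1$ does not imply. That condition only gives $\lambda^r\lesssim|D|^{-r/(2r+s)}$.
\item Also $|D|\lambda\to0$, so Lemma~\ref{Lemma:Q} no longer bounds $\mathcal Q_{D,\lambda}$.
\item The requirement $|D|^{-1}\log^4|D|\le\lambda^*\le\lambda$ cannot be met for large $|D|$.
\end{itemize}
Your fallback does not help. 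Passing to $u_2=1/2$ or to $\lambda^{r-1/2}\mathcal S_{D,\lambda,1/2,0}$ produces $(|D|\lambda)^{-1}\to\infty$ or the same $(|D|\lambda^{1-r})^{-1}\asymp 1$.

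The paper handles this case in one line with exactly this defect: its final inequality for $\mathcal S_{D,\lambda,1/2,1/2-r}$ is false for this $\lambda$. It also never bounds $\mathcal Q_{D,\lambda}$ here. So neither argument proves the case as stated.

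What your lemmas do support is $\lambda\asymp|D|^{-1/(2r+s)}$ for $r<1/2$ as well, under $2r+s>1$. Then $|D|\lambda\to\infty$ polynomially. The two terms of Lemma~\ref{Lemma:Sd} become $|D|^{-(4r+s-1)/(2(2r+s))}$ and $|D|^{-(3r+s-1)/(2r+s)}$ up to logarithms. Both are $o(|D|^{-r/(2r+s)})$ precisely because $2r+s>1$. The $\mathcal P$, $\mathcal Q$ and bias estimates go through as in the case $r\ge1/2$.
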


{\bf Proof.} Plugging Lemma \ref{Lemma:Q} into Proposition \ref{Prop:frist-decomposition}, we obtain that with confidence $1-\delta$, there holds
\begin{align*}
\left\| f^{DK}_{D^*_\alpha, \lambda^*} - f^\star \right\|_\rho 
&\leq \frac{2}{\lambda^*} \left\| \rho_X - \rho_{X,\alpha}^* \right\|_{tv} \\
&\quad + Q_{D,\lambda^*} Q_{D,\lambda} 
\big( \mathcal{P}_{D,\lambda} + \mathcal{S}_{D,\lambda,1/2,1/2-r} \|h^\star\|_\rho \big)  + Q_{D,\lambda^*}^2 \lambda^* \lambda^{r-1} \|h^\star\|_\rho 
+ \lambda^r \|h^\star\|_\rho.
\end{align*}
Since  $\mathcal N(\lambda^*)\leq C_0(\lambda^*)^{-s}$ in Assumption \ref{Assumption:regularity_and_effective_dimension} and $\lambda^*\geq\frac{\log^4|D|}{|D|}$, we have
$$
    \mathcal Q_{D,\lambda^*}\leq 1+ C_2\left( \frac{1}{\sqrt{|D|\lambda^*}}
      +
      \frac{1 }{|D|\lambda^*} 
      +  \frac{1}{|D|^2(\lambda^*)^2}\right)\log^2\frac{4}{\delta}\log^2|D^*|
      \leq
      C_3\log^2\frac{4}{\delta},
$$
and similarly from  $\lambda\sim|D|^{-\frac{1}{2r+s}}$ that  
$
      \mathcal Q_{D,\lambda}\leq C_3\log^2\frac{4}{\delta},
$
where $C_2,C_3$ are constants depending only on $s,r$ and $C_1$.
Moreover, we get from Lemma \ref{Lemma:P}, Assumption \ref{Assumption:regularity_and_effective_dimension}, $\lambda\sim|D|^{-\frac{1}{2r+s}}$ and $r+s\geq 1/2$ that 
$$
    \mathcal P_{D,\lambda} \leq 
	 \frac{4M}{\sqrt{|D|}}\left\{\frac{\kappa}{\sqrt{|D|\lambda}}
	+\sqrt{\mathcal{N}(\lambda)}\right\}\log\frac{2}{\delta}
    \leq
   C_4|D|^{-\frac{r}{2r+s}}\log\frac{2}{\delta},
$$
where $C_4$ is a constant depending only on $s,r,\kappa,M$.
For $r\geq 1/2$, we get from  $\lambda\sim|D|^{-\frac{1}{2r+s}}$ that 
$$
   \mathcal S_{D,\lambda,1/2,1/2-r}\leq \kappa^{r-1/2}\mathcal S_{D,\lambda,1/2,0}
   \leq 
   C_5\left(\sqrt{\frac{1}{|D|}}
      +  \frac{1}{|D|\lambda^{1/2}}\right) \log |D| \log\frac4{\delta}
      \leq  C_6|D|^{-\frac{r}{2r+s}} \log\frac4{\delta},
$$
where $C_5,C_6$ are constants depending only on $r,s,\kappa$. For $0<r<1/2$, we obtain from Lemma \ref{Lemma:Sd}, $3r-\epsilon r+s$ for arbitrarily small $\epsilon$,
$\lambda \sim |D|^{-\frac1{1-r}}$ that 
$$
     \mathcal S_{D,\lambda,1/2,1/2-r}\leq 
    C_7\left( \sqrt{\frac{1 }{|D|\lambda^{1-2r}}}
      +  \frac{1}{|D|\lambda^{1-r}} \right)\log|D|\log\frac4{\delta}\leq C_8|D|^{-\frac{r}{2r+s}},
$$
where $C_7,C_8$  are constants depending only on $r,s,\kappa$. Combining all these estimates,  we get         
$$
     \left\|f^{DK}_{D^*_\alpha,\lambda^*}-f^{\star}\right\|_{\rho} 
    \leq
     \frac{2}{\lambda^*} \left\| \rho_X -  \rho_{X,\alpha}^*\right\|_{tv} 
     +   \bar{C}|D|^{-\frac{r}{2r+s}}\log\frac4{\delta},
$$
where $\bar{C}$ is a constant depending only on $r,s,\kappa,M$. 
Recalling that $\rho_{X,\alpha}^*=\alpha\rho_X+(1-\alpha)\rho_X^*$, we have
$\rho_X -  \rho_{X,\alpha}^*= (1-\alpha)(\rho_X-\rho_X^*)$.
This completes the proof of Proposition \ref{Prop:error-analysis}. 
\hfill $\blacksquare$

\subsection{Proof of Theorem \ref{theorem:abc}}

To prove Theorem \ref{theorem:abc}, we need the following bound on privacy.
\begin{lemma}\label{lemma:Privacy bound}
Assuming each column $x^{(j)}$ of $X$ is independently and identically distributed (i.i.d.) following a uniform distribution $\mathcal{U}(a, b)$ with $a < b$, there holds
\begin{equation}\label{LID_upper_bound}
		LID (X, X_\alpha^*, \eta) \leq \left( 1 - \left( 1 - \frac{2\eta}{(b - a)(1-\alpha)} \right)^d \right) \times 100\%.
\end{equation}
\end{lemma}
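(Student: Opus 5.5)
We bound the probability that a single record is attacked and then read it as the expected proportion. By construction $x_{i,\alpha}=\alpha x_i+(1-\alpha)s_{pi}$. Hence in every dimension $j$,
\[
x_{i,j}-x_{i,\alpha,j}=(1-\alpha)(x_{i,j}-s_{pi,j}).
\]
So the event $i\in H_j^\eta$ is exactly the event $|x_{i,j}-s_{pi,j}|\le \eta/(1-\alpha)$, and the linkage window for the hybrid data is the original window inflated by $1/(1-\alpha)$. Passing to complements, a record is unattacked iff $|x_{i,j}-s_{pi,j}|>\eta/(1-\alpha)$ for every $j=1,\dots,d$. Therefore
\[
\frac{|\bigcup_j H_j^\eta|}{N}=1-\frac1N\sum_{i}\prod_{j}\mathbf 1\{|x_{i,j}-s_{pi,j}|>\eta/(1-\alpha)\},
\]
and the task reduces to lower bounding the probability that all $d$ coordinates escape the window.

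Next we use the uniformity. Fix a coordinate and treat $s_{pi,j}$ as a point of $(a,b)$ that is independent of, or at least not adapted to, $x_{i,j}$. Then $x_{i,j}\sim\mathcal U(a,b)$ falls in an interval of length $2\eta/(1-\alpha)$ around $s_{pi,j}$ with probability at most
\[
\frac{2\eta}{(b-a)(1-\alpha)}.
\]
This bound is attained when the interval lies inside $(a,b)$ and is smaller near the boundary. For the other case in the theorem, where $\rho_X^*$ rather than $\rho_X$ is uniform, we swap roles and condition on $x_{i,j}$ instead, using the uniformity of $s_{pi,j}$. Since the columns are i.i.d. and independent across $j$, the escape probabilities multiply. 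This gives
\[
\mathbb P\bigl(\text{record } i \text{ unattacked}\bigr)\ge \Bigl(1-\frac{2\eta}{(b-a)(1-\alpha)}\Bigr)^d.
\]
Averaging over $i$ then yields \eqref{LID_upper_bound}. The statement should be read as holding in expectation, or with the empirical proportion identified with this probability, as the paper's heuristic usage in Algorithm 1 suggests.

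The main obstacle is the dependence created by the nearest-neighbour matching $s_{pi}=\arg\min\|s-x_i\|_2$. Because of it, $s_{pi,j}$ is not independent of $x_{i,j}$; it is in fact chosen to be close, which would seem to increase the attack probability. I would handle this in one of two ways. The first is to argue via the uniform hypothesis on whichever marginal is uniform: condition on everything except the uniform variable in coordinate $j$ and bound the probability of landing in a window of length $2\eta/(1-\alpha)$ by its length over $b-a$. The second, if that conditioning fails, is to state explicitly the modelling simplification that the pairing is treated as independent across coordinates, as the paper implicitly does. Cross-coordinate independence, needed for the product form, also requires this assumption. I expect the proof to rest on it rather than to derive it, and I would flag it clearly.
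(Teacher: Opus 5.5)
Your proposal is essentially the paper's proof. It uses the same reduction of the coordinate-$j$ event to $|s_i^{(j)}-x_i^{(j)}|\le \eta/(1-\alpha)$, the same uniform-density bound $2\eta/((b-a)(1-\alpha))$ (smaller near the boundary), the same product over the $d$ coordinates, and the same identification of the empirical proportion LID with the per-record probability, so the bound is, as you say, really an in-expectation statement.

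The paper silently adopts the simplification you flag. It writes $x_{i,\alpha}=\alpha x_i+(1-\alpha)s_i$ with $s_i\sim\rho_X^*$ independent of $x_i$, not the nearest-neighbour partner $s_{pi}$, so your second route is the one that matches it. Your first, conditioning-based route cannot rescue the matched case: conditioning on $s_{pi}$ confines $x_i$ to the Voronoi cell of $s_{pi}$ among the still-unmatched synthetic points, which destroys the uniformity the window bound needs. Both arguments also tacitly need $2\eta\le(b-a)(1-\alpha)$ for multiplying the per-coordinate lower bounds to be legitimate.
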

{\bf   Proof.} Let $X=\{x_i\}_{i=1}^{|D|}$ and $S = \{s_i\}_{i=1}^{|D|}$ be the sets of original data and synthesized data respectively, i.i.d. drawn from  $\rho_X$ and $\rho_X^*$,   where $s_i=(s_i^{(1)},\dots,s_i^{(d)})^T$.  Write $x_{i, \alpha} = \alpha x_i + (1-\alpha)s_i$ with $\alpha \in (0,1)$. Assume that the probability density functions of $\rho_X$ and $\rho_X^*$ are $\nu_x$ and $\nu_x^*$, respectively. If either $\rho_X$ or $\rho_X^*$ is the uniform distribution on $(a,b)^d$, then  
$
\nu_{x^{(j)}}(t)=\left\{
\begin{array}{cc}
\frac{1}{b-a}, & a \leq t \leq b, \\ 
0, & \text{otherwise},
\end{array}
\right.
$
and consequently (without loss of generality, for   $\rho_X$   the uniform distribution)
\begin{align*}
P\big(|x_{i,\alpha}^{(j)} - x_i^{(j)}| \leq \eta\big) 
&= P\big(|\alpha x_i^{(j)} + (1-\alpha) s_i^{(j)} - x_i^{(j)}| \leq \eta\big) = P\big(|s_i^{(j)} - x_i^{(j)}| \leq \frac{\eta}{1-\alpha}\big) \\
&= \int \int_{t' - \frac{\eta}{1-\alpha}}^{t' + \frac{\eta}{1-\alpha}} \nu_{x^{(j)}}(t) \, dt \, \nu^*_{x^{(j)}}(t') \, dt' = \frac{2\eta}{(b-a)(1-\alpha)}.
\end{align*}
 Therefore,
\begin{align*}
P\left(\bigcup_{j=1}^{d} |x_i^{(j)} - x_{i,\alpha}^{(j)}| \leq \eta \right) 
&= 1 - P\left(\bigcap_{j=1}^{d} |x_i^{(j)} - x_{i,\alpha}^{(j)}| > \eta \right) \\
&= 1 - \prod_{j=1}^{d} P\big(|x_i^{(j)} - x_{i,\alpha}^{(j)}| > \eta\big) \leq 1 - \left(1 - \frac{2\eta}{(b-a)(1-\alpha)} \right)^d.
\end{align*}
For $x_{i,\alpha}^{(j)}\in\delta(x_i^{(j)},\eta)$ with $\eta>0$,
if $x_i^{(j)} - \eta < a$ or $x_i^{(j)} + \eta > b$, we derive the following
$$
P(|x_{i,\alpha}^{(j)}  - x_i^{(j)}| \leq \eta) = \frac{\min(x_i^{(j)} + \eta, b) - \max(x_i^{(j)} - \eta, a)}{(b-a)(1-\alpha)}\leq  	\frac{2\eta}{(b-a)(1-\alpha)},
$$
where $\delta(x_i^{(j)},\eta)$ is the neighborhood of  $x_i^{(j)}$ with radius $\eta$.
Thus, we obtain the upper bound
$$P(|x_{i,\alpha}^{(j)} - x_i^{(j)}| \leq \eta) \leq  	\frac{2\eta}{(b-a)(1-\alpha)}.$$
By the  definition of LID, LID represents the proportion of samples in the dataset that experience location privacy attacks in at least one dimension, i.e., $|x_i^{(j)} - x_{i,\alpha}^{(j)}| \leq \eta$, which corresponds to
$P\left(\bigcup_{j=1}^{d} |x_i^{(j)} - x_{i,\alpha}^{(j)}| \leq \eta \right)$. Therefore, we obtain
$$
LID (X, X_\alpha^*, \eta) \leq \left( 1 - \left( 1 - \frac{2\eta}{(b - a)(1-\alpha)} \right)^d \right) \times 100\%.
$$
This completes the proof of Lemma \ref{lemma:Privacy bound}. \hfill $\blacksquare$

Theorem \ref{theorem:abc} can be derived directly from Lemma \ref{lemma:Privacy bound} and Proposition \ref{Prop:error-analysis}. We remove the details for the sake of brevity.

    \section{Experiment Settings}\label{Simulation_appendixD}

 \subsection{Experiment Settings for Nonlinear Regression}

        	This section presents the detailed algorithmic settings and  parameter selection strategy of the four learning models used by the public, followed by the  distribution mismatches  generation process.

    \begin{itemize}
    	\item \textit{KRR}: We define kernel function $K\left(x, x^{\prime}\right)=h\left(\left\|x-x^{\prime}\right\|_2\right)$ with
    	\begin{equation}\label{krr_kernel}
    	h\left(\|x\|_2\right):=\left\{\begin{array}{cc}
    	\left(1-\|x\|_2\right)^4\left(4\|x\|_2+1\right), & \!\!0<\|x\|_2 \leq 1,\!\! x\!\! \in \mathbb{R}^3, \\
    	0, & \|x\|_2>1.
    	\end{array}\right.
    	\end{equation}	
    	The parameter $\lambda$ of KRR is selected by 5-cv in the range of $\{0, 0.0002, 0.0004, \dots, 0.002\}$.
    	
    	\item \textit{NWK(gaussian) }: We use Nadaraya--Watson kernel estimate \cite[Chap. 5]{Gyorfi2002} with gaussian kernel (NWK(gaussian) for short) as the local average regression algorithm.

    	\item \textit{AdaBoost and RF}: The base estimator of AdaBoost (or RF) is the binary decision tree with the  maximum depth $k$.
    	Denote by $t$ the maximum number of base estimators at which boosting is terminated (or the number of trees in the forest of RF). We select maximum depth $k$ by 5-cv in the range of $\{1,2,\dots,10\}$ and select $t$ from $\{10, 20, \dots, 200\}$.
    \end{itemize}

Table \ref{tab:generative_settings} summarizes the key experimental settings for GAN and the diffusion model used in Section \ref{sec: Restricted Privacy–Prediction Trade-off}. The Hidden Dimension represents the number of neurons in the hidden layer.
$\beta_{\textnormal{start}}$  and $\beta_{\textnormal{end}}$  define the range of noise coefficients in the diffusion model.

	\vspace{-3pt}
\begin{table}[h!]
	\centering
	\footnotesize
	     	\renewcommand\arraystretch{1.1}
	     		\scalebox{0.98}{ 
	\begin{tabular}{lcc}
		\hline
		\textbf{Parameter}            & \textbf{GAN} & \textbf{Diffusion Model} \\
		\hline
		Noise Dimension  & 500           & --              \\
		Hidden Dimension  & 512          & 256             \\
		Gradient Penalty Coefficient & 10           & --              \\
		Critic Updates     & 20           & --              \\
		Batch Size     & 64           & 32              \\
		Learning Rate           & $1\times 10^{-4}$ & $2\times 10^{-4}$ \\
		Number of Epochs & 600          & 300             \\
		Number of Steps   & --           & 1000            \\
		$\beta_{\textnormal{start}}$    & --           & $1\times 10^{-4}$ \\
		$\beta_{\textnormal{end}}$         & --           & 0.02            \\
		\hline
	\end{tabular}}
	\vspace{3pt} 
	\caption{Experimental settings for GAN and diffusion model.}	\label{tab:generative_settings}
\end{table}
	\vspace{-3pt}

   The procedure for generating distribution mismatches is detailed as follows.
For the complex nonlinear simulation, we induce distribution mismatch by altering only the distribution of the third attribute \(x^{(3)}\). Specifically, in the public test dataset \(D^t\), we draw \(x^{(3)}\) i.i.d. from a Gaussian distribution \(\mathcal{N}(\mu, \sigma^2)\) with \(\mu \in \{0.1, 0.2, \dots, 1.0\}\) and \(\sigma = 0.14\). 
For the marketing data simulation, we introduce distribution mismatch by adding noise to the price variable \(P_{i',j,t}\). Samples in the public test dataset \(D^t\) are drawn i.i.d. from the same base distribution as in \(D^*\), with additional noise sampled from \(\mathcal{N}(0, \hat{\sigma}^2)\), where \(\hat{\sigma} \in \{0.2, 0.4, \dots, 2.0\}\).

\subsection{Experiment Settings for Real-World Data Analysis}

The detailed experimental settings for real-world data analysis are described as follows. Table \ref{Tab: real_data_describ2} shows the detailed information for five datasets.
We adopt the same parameter selection and averaging strategies as in Appendix D.1, except for KRR on the first four datasets, where the parameter $\lambda$ is varied over ${0, 0.0005, 0.001, \dots, 0.005}$. 

\vspace{-0.25cm}
\begin{table}[H]
\renewcommand\arraystretch{1.1}
\footnotesize
\begin{center}
\scalebox{0.8}{ 
\begin{tabular}{ccccc}
\hline
{\makecell[c]{\textbf{Data}\\}} & {\makecell[c]{\textbf{Attributes}\\}}& {\makecell[c]{ \textbf{Requirement}\\\textbf{of data providers}}} & {\makecell[c]{\textbf{Requirement}\\\textbf{of public}}}\\
\hline
{\makecell[c]{\textbf{Insurance}\\(1,338 records)}}  & {\makecell[c]{age, BMI, sex, children,\\smoker, region, charges$^*$}} & {\makecell[c]{$LID_{input}\leq15\%$,\\$LID_{output}\leq2\%$}} & {\makecell[c]{$\Delta MSE(\%)\geq5\%$}}\\
\hline
{\makecell[c]{\textbf{EIA}\\(4,092  records)\\(1,500  records used)}}  & {\makecell[c]{RESREVENUE, RESSALES, COMREVENUE, \\COMSALES, INDREVENUE$^*$, INDSALES,\\OTHREVENUE, OTHRSALES, TOTREVENUE, TOTSALES}} & {\makecell[c]{$LID_{input}\leq5\%$,\\\\$LID_{output}\leq12\%$}} & {\makecell[c]{$\Delta MSE(\%)\geq50\%$}}\\
\hline
{\makecell[c]{\textbf{CHP}  \\(20,640  records)\\(1,500  records used)}}  & {\makecell[c]{longitude, latitude, housing median age,\\total rooms, total bedrooms, population,\\ households, median income, ocean proximity, median house value$^*$}} & {\makecell[c]{$LID_{input}\leq12\%$,\\\\$LID_{output}\leq1.5\%$}} & {\makecell[c]{$\Delta MSE(\%)\geq10\%$}} \\
\hline
{\makecell[c]{\textbf{Census} \\(1,080  records)}}  & {\makecell[c]{AFNLWGT, AGI, EMCONTRB, FEDTAX$^*$,\\ PTOTVAL, STATETAX, TAXINC,\\ POTHVAL, INTVAL, PEARNVAL, FICA, WSALVAL, ERNVAL}} & {\makecell[c]{$LID_{input}\leq6\%$,\\\\$LID_{output}\leq5\%$}} & {\makecell[c]{$\Delta MSE(\%)\geq30\%$}}\\
\hline
{\makecell[c]{\textbf{Tarragona} \\(834 records)}}  & {\makecell[c]{PAID-UP CAPITAL, OPERATING PROFIT,\\GROSS PROFI, NET PROFIT$^*$}} & {\makecell[c]{$LID_{input}\leq5\%$,\\\\$LID_{output}\leq3\%$}} & {\makecell[c]{$\Delta MSE(\%)\geq50\%$}}\\
\hline 
\end{tabular}}
\vspace{5pt} 
\footnotesize
\begin{tabular}{@{}l@{}}
\multicolumn{1}{@{}p{0.85\linewidth}@{}}{
\scriptsize
\textbf{Note}: 
$LID_{input}$ is LID on the input side and $LID_{output}$ is LID on the output side. 
For simplicity, only 1,500 records from EIA and 1,500 from CHP are randomly chosen. Attributes marked with the $*$  denote dependent variables, while all others are treated as independent variables.
}
\end{tabular}
\end{center}
\caption{Dataset characteristics and data sharing requirements of the data provider and the public. 
}
\label{Tab: real_data_describ2}
\end{table}
\vspace{-0.5cm}

\begin{itemize}
\item \textbf{Insurance:}
This dataset is simulated based on the real-world demographic statistics from the U.S. Census Bureau and includes 1,338 beneficiaries with features indicating personal information: two numerical attributes (age and BMI), four categorical attributes (sex, children, smoker, region), and charges.
We regard charges as the dependent variable and the other five attributes as independent variables, and only anonymize the numerical attributes age and BMI.
We assume that the data provider holds 1,000 data points. Two scenarios are considered for the public: one where the public holds no data, and another where it holds 200 data points. The remaining 138 data points are reserved for testing.

\item \textbf{EIA:}   This dataset  consists of 4,092 individual records with ten numerical attributes, two character attributes and attributes year, month, unique identification number. The last in reality will be removed at first before publishing. We choose these ten numerical attributes in experiments, where ``INDREVENUE'' is used as dependent variable (which, in \citep{domingo2010hybrid} is used as the confidential attribute) and the other nine are independent variables. For simplicity, we only randomly choose 1,500 from EIA dataset to verify our method.
We assume that the data provider holds 1,000 data points. Two scenarios are considered for the public: in the first, the public holds no data and the remaining 500 data points are used for testing; in the second, the public holds 300 data points and the remaining 200 are used for testing.

\item \textbf{California Housing Prices:}  This dataset  has 20,640 records with nine numerical attributes (longitude, latitude, housing median age, median house value, etc.) and one categorical attribute (ocean proximity).
We randomly take 1,500 records from this dataset to show the power of our method. We choose the attribute median house value as dependent variable and all other attributes as the independent variables.
We assume that the data provider holds 1,000 data points. Two scenarios are considered for the public: in the first, the public holds no data, and 500 data points are used for testing; in the second, the public holds 300 data points, and 200 are used for testing.

\item \textbf{Census:}
This dataset is used in many privacy preservation related research \citep{li2006tree, domingo2010hybrid}, which contains 1,080 records with 13 census related numerical attributes.
We take ``FEDTAX'' as dependent variable in \citep{domingo2010hybrid} and all other attributes as independent variables.
We assume that the data provider holds 800 data points. Two scenarios are considered for the public: in the first, the public holds no data and the remaining 280 data are used for testing; in the second, the public holds 180 data and the remaining 100 data are used for testing.

\item \textbf{Tarragona:}      Tarragona dataset  consists information of 834 companies in 1995 in the Tarragona area, including 13 numerical attributes. To verify the proposed publishing strategy, we choose ``NET PROFIT'' as our dependent variable (which is also regarded as confidential dependent variable in \citep{li2006tree}) and choose three highly related attributes, ``PAID-UP CAPITAL'', ``OPERATING PROFIT'', and ``GROSS PROFI'' as independent variables.
We assume that the data provider holds 600 data points. Two scenarios are considered for the public: in the first, the public holds no data, and the remaining 234 data points are used for testing; in the second, the public holds 134 data points, and  the remaining 100 are used for testing.

\end{itemize}

\section{Supplementary Experiments}\label{Simulation_real}

The following experiments supplement those in Sections 7 and 9.1, providing further evidence that LK-2SS achieves a statistics-based restricted privacy–prediction trade-off while ensuring prediction performance.

As shown in Figure \ref{fig:Random_SH_diff_alpha}, under SH-KRR, the $\Delta MSE$ of each model remains nearly constant as $\alpha$ varies. This is because SH preserves the basic statistical information, resulting in a smaller TV norm. Consequently, $\alpha$ can be adjusted within a relatively wide range without significantly affecting prediction performance.
In contrast, for Random-KRR, since random sampling does not utilize the original statistical information, $\alpha$ needs to be adjusted closer to 1 to achieve a greater improvement in prediction performance.

-
\vspace{-0.4cm} 
\begin{figure}[H]
\centering
\setlength{\subfigcapskip}{-1em}
\subfigure[SH-KRR (or LK-2SS)]{\includegraphics[scale=0.55]{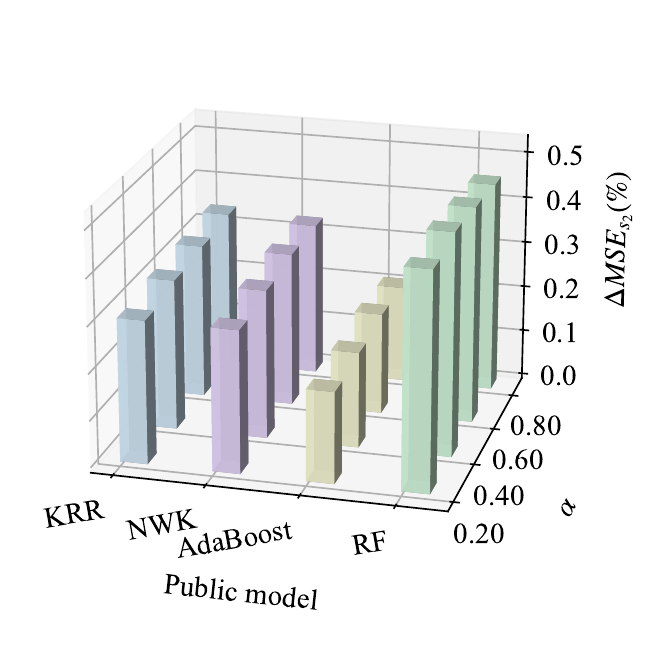}
\label{subfig: SH_diff_alpha}} 
\setlength{\subfigcapskip}{-0.8em}
\subfigure[Random-KRR]{\includegraphics[scale=0.55]{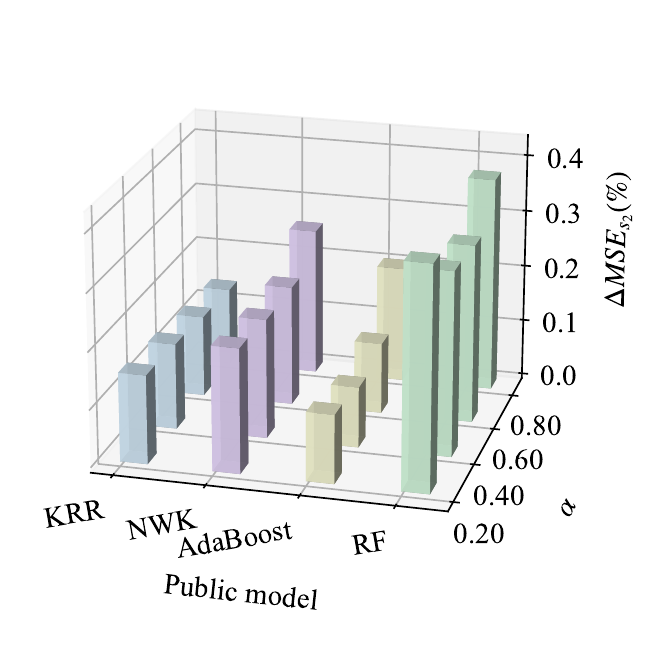}
\label{subfig: random_diff_alpha}} 
\caption{Prediction performance of different models for marketing and nonlinear problems.
} 
\label{fig:Random_SH_diff_alpha}
\end{figure}
\vspace{-0.5cm}

Table \ref{Tab: marketing_metrics 2} presents the results of the marketing metrics on both the original data and the corresponding synthetic data. It can be observed that the results obtained from the synthetic data are very close to those from the original data. Moreover, for some metrics, the synthetic data even outperforms the original data, which could be attributed to the fact that the synthetic data is noise-free. These results validate the effectiveness of LK-2SS in guaranteeing the original prediction performance.

\vspace{-0.3cm}
\begin{table}[H]
\renewcommand\arraystretch{1.2}
\footnotesize
\begin{center}
\scalebox{0.86}{ 
\begin{tabular}{lc|c|c||c|c||c}
\hline
& 	& \multicolumn{5}{c}{\textbf{Metrics for coefficient $\beta_j$ in marketing}} \\ 
\hline
\multirow{3}{*}{\textbf{Information}} && \multicolumn{2}{c||}{{\makecell[c]{\textbf{Optimal mark-up (OMU)}\\OMU = $\frac{1}{\left|\beta_j\right|-1} \times 100 \%$}}} 	& \multicolumn{2}{c||}{{\makecell[c]{\textbf{Optimal profit ratio (OPR)}\\OPR = $\frac{\beta_j+1}{\hat{\beta_j}+1}\left(\frac{\beta_j+1}{\hat{\beta_j}+1} \frac{\hat{\beta_j}}{\beta_j}\right)^\beta_j$}}}  & {\makecell[c]{\textbf{MAPD}\\$MAPD=\frac{1}{J} \sum_{j=1}^J\left|\frac{\hat{\beta}_j-\beta_j}{\beta_j}\right| \times 100 \%$}}\\ \cline{3-7}
&	& \textit{Brand 1} & \textit{Brand 2}  &   \textit{Brand 1} & \textit{Brand 2}  & \textit{Brand 1-5}   \\
\hline 
\multicolumn{2}{l|}{\textbf{Given price elastitices $\beta_j$}} 	&  200.00\%  &   142.86\%    & 100.00\%    & 100.00\% &  0.00\%   \\
\hline
\multicolumn{2}{l|}{\textbf{Original data $\{P_i^{(j)}, S_i^{(j)}\}_{i=1}^{1040}$}}  	&198.05\% &141.68\% &  100.00\%   & 100.00\%  &  0.33\%  \\
\hline
\multirow{4}*{{\makecell[l]{\textbf{Synthetic data  (by LK-2SS strategy)}\\ $\{P_{i, \alpha}^{(j)}, S_{i, \alpha}^{(j)}\}_{i=1}^{1040}$}} } 	&$\alpha=0.2$    & 199.68\%   &142.62\%     &  100.00\% &  100.00\%  & 0.32\%  \\
&$\alpha=0.5$  &199.69\% &    142.64\%  & 100.00\% & 100.00\% &  0.32\% \\
&$\alpha=0.8$  &199.70\% &   142.63\%  & 100.00\% & 100.00\% &  0.32\% \\
&	$\alpha=1.0$& 199.71\% &  142.63\%    & 100.00\% & 100.00\%  &  0.33\% \\
\hline
\end{tabular}}
\vspace{3pt} 
\footnotesize
\begin{tabular}{@{}l@{}}
\multicolumn{1}{@{}p{0.95\linewidth}@{}}{
\scriptsize
\textbf{Note}: 
MAPD means the mean absolute percentage deviation. The price elastitices are taken from \citep{anand2023using}. All the estimated values of 100.00\% are obtained by rounding to two decimal places.  
}
\end{tabular}
\end{center}
\caption{Results of marketing metrics. }
\label{Tab: marketing_metrics 2}
\end{table}
\vspace{-0.5cm}

\end{APPENDICES}

\end{document}